\newtheorem{theorem}{Theorem}
\newtheorem{lemma}{Lemma}
\newtheorem{retheorem}{Theorem}
\newtheorem{fact}{Fact}
\theoremstyle{definition}
\newtheorem{definition}{Definition}
\newcommand{\lms}{\texttt{LMS}\xspace}
\newcommand{\mnistcifar}{\texttt{MNIST-CIFAR}\xspace}
\newcommand{\mnistcifara}{\texttt{MNIST-CIFAR:A}\xspace}
\newcommand{\mnistcifarb}{\texttt{MNIST-CIFAR:B}\xspace}
\newcommand{\mnistcifarc}{\texttt{MNIST-CIFAR:C}\xspace}
\newcommand{\mnist}{\texttt{MNIST}\xspace}
\newcommand{\cifar}{\texttt{CIFAR10}\xspace}
\newcommand{\lsn}{\texttt{LSN}\xspace}
\newcommand{\lmsk}{\texttt{LMS-k}\xspace}
\newcommand{\lmsfive}{\texttt{LMS-5}\xspace}
\newcommand{\lmsseven}{\texttt{LMS-7}\xspace}
\newcommand{\nlmsk}{\texttt{\^LMS-k}\xspace}
\newcommand{\nlmsseven}{\texttt{\^LMS-7}\xspace}
\newcommand{\nlmsfive}{\texttt{\^LMS-5}\xspace}
\newcommand{\msfiveseven}{\texttt{MS-(5,7)}\xspace}
\newcommand{\advms}{\texttt{AdvMS-(5,7)}\xspace}
\newcommand{\nmsfiveseven}{\texttt{\^MS-(5,7)}\xspace}
\newcommand{\msfive}{\texttt{MS-5}\xspace}
\newcommand{\msseven}{\texttt{MS-7}\xspace}
\newcommand{\SF}{$\mathtt{S}$\xspace}
\newcommand{\SFc}{$\mathtt{S}^c$\xspace}
\newcommand{\order}[1]{O\left(#1\right)}
\newcommand{\minus}{\scalebox{0.75}[1.0]{$-$}}
\newcommand{\abs}[1]{\left|#1\right|}
\newcommand{\relu}{\text{ReLU}}
\newcommand{\pr}{\mathds{P}}
\renewcommand{\S}{\mathcal{S}}
\newcommand{\PR}[1]{\mathds{P}(#1)}
\newcommand{\loss}{\mathcal{L}}
\newcommand{\floor}[1]{\lfloor #1 \rfloor}
\newcommand{\bigo}{{O}}
\newcommand{\D}{\mathcal{D}}
\newcommand{\DRS}{\overline{\mathcal{D}}^S}
\newcommand{\R}{\mathbb{R}}
\newcommand{\Rd}{\mathbb{R}^d}
\newcommand{\E}[1]{\mathbb{E}\left[#1\right]}
\newcommand{\ED}[1]{\mathbb{E}_{\D}\left[#1\right]}
\newcommand{\EDRS}[1]{\mathbb{E}_{\DRS}\left[#1\right]}
\newcommand{\ind}[1]{\mathds{1}{\left\{#1\right\}}}
\newcommand{\norm}[1]{\left\| #1 \right\|}
\newcommand{\grad}{\nabla}
\newcommand{\G}{\mathcal{G}}
\title{ The Pitfalls of Simplicity Bias in Neural Networks}
\author{
	Harshay Shah\\
	Microsoft Research\\
	\texttt{harshay.rshah@gmail.com}\\
	\And
	Kaustav Tamuly\\
	Microsoft Research\\
	\texttt{ktamuly2@gmail.com}\\
	\And
	Aditi Raghunathan\\
	Stanford University\\
	\texttt{aditir@stanford.edu}\\
	\And
	Prateek Jain\\
	Microsoft Research\\
	\texttt{prajain@microsoft.com} \\
	\And
	Praneeth Netrapalli\\
	Microsoft Research\\
	\texttt{praneeth@microsoft.com}\\
}
\begin{document}

\maketitle

\begin{abstract}
Several works have proposed Simplicity Bias (SB)---the tendency of standard training procedures such as Stochastic Gradient Descent (SGD) to find simple models---to justify why neural networks generalize well~\cite{arpit2017closer,nakkiran2019sgd,valle-perez2018deep}.
However, the precise notion of simplicity remains vague. Furthermore, previous settings~\cite{soudry2018implicit,gunasekar2018implicit} that use SB to justify why neural networks generalize well do not simultaneously capture the non-robustness of neural networks---a widely observed phenomenon in practice~\cite{szegedy2013intriguing,jo2017measuring}.
We attempt to reconcile SB and the superior standard generalization of neural networks with the non-robustness observed in practice by designing datasets that (a) incorporate a precise notion of simplicity, (b) comprise multiple predictive features with varying levels of simplicity, and (c) capture the non-robustness of neural networks trained on real data.
Through theoretical analysis and targeted experiments on these {datasets}, we make four observations:\\ 
(i) SB of SGD and variants can be extreme: neural networks can exclusively rely on the simplest feature and remain invariant to all predictive complex features. \\
(ii) The extreme aspect of SB {\em could} explain why seemingly benign distribution shifts and small adversarial perturbations significantly degrade model performance.\\
(iii) Contrary to conventional wisdom, SB can also hurt generalization on the same data distribution, as SB persists even when the simplest feature has less predictive power than the more complex features.\\
(iv) Common approaches to improve generalization and robustness---ensembles and adversarial training---can fail in  mitigating SB and its pitfalls.\\
 Given the role of SB in training neural networks, we hope that the proposed datasets and methods serve as an effective testbed to evaluate novel algorithmic approaches aimed at avoiding the pitfalls of SB.

\end{abstract}

\section{Introduction}
\label{sec:intro}
\vspace{-0.2cm}
Understanding the superior generalization ability of neural networks, despite their high capacity to fit randomly labeled data~\cite{zhang2017understanding}, has been a subject of intense study.
One line of recent work~\cite{soudry2018implicit,gunasekar2018implicit} proves that linear neural networks trained with Stochastic Gradient Descent (SGD) on linearly separable data converge to the maximum-margin linear classifier, thereby explaining the superior generalization performance. 
However, maximum-margin classifiers are inherently robust to perturbations of data at prediction time, and this implication is at odds with concrete evidence that neural networks, in practice, are brittle to adversarial examples~\cite{szegedy2013intriguing} and distribution shifts~\cite{oakden2019hidden, ribeiro2016should, mccoy2019right, simpson2019gradmask}. 
Hence, the linear setting, while convenient to analyze, is insufficient to capture the non-robustness of neural networks trained on real datasets.
Going beyond the linear setting, several works~\cite{arpit2017closer,nakkiran2019sgd,valle-perez2018deep} argue that neural networks generalize well because standard training procedures have a bias towards learning simple models. 
However, the exact notion of ``simple" models remains vague and only intuitive. Moreover, the settings studied are insufficient to capture the brittleness of neural networks

Our goal is to formally understand and probe the \emph{simplicity bias} (SB) of neural networks in a setting that is rich enough to capture known failure modes of neural networks and, at the same time, amenable to theoretical analysis and targeted experiments. Our starting point is the observation that on real-world datasets, there are several distinct ways to discriminate between labels (e.g., by inferring shape, color etc. in image classification) that are (a) predictive of the label to varying extents, and (b) define decision boundaries of varying complexity. For example, in the image classification task of white swans vs. bears, a linear-like ``simple" classifier that only looks at color could predict correctly on most instances except white polar bears, while a nonlinear ``complex" classifier that infers shape could have almost perfect predictive power.
To systematically understand SB, we design modular synthetic and image-based datasets wherein different coordinates (or blocks) define decision boundaries of varying complexity. We refer to each coordinate / block as a \emph{feature} and define a precise notion of feature \emph{simplicity} based on the \emph{simplicity of the corresponding decision boundary}.

\vspace{-7px}
\paragraph{Proposed dataset.} 
~\Cref{fig:intro} illustrates a stylized version of the proposed synthetic dataset with two features, $\phi_1$ and $\phi_2$, that can perfectly predict the label with 100\% accuracy, but differ in simplicity.
\begin{wrapfigure}{r}{0.375\linewidth}
	\centering
	\vspace{-10px}
	\includegraphics[width=\linewidth]{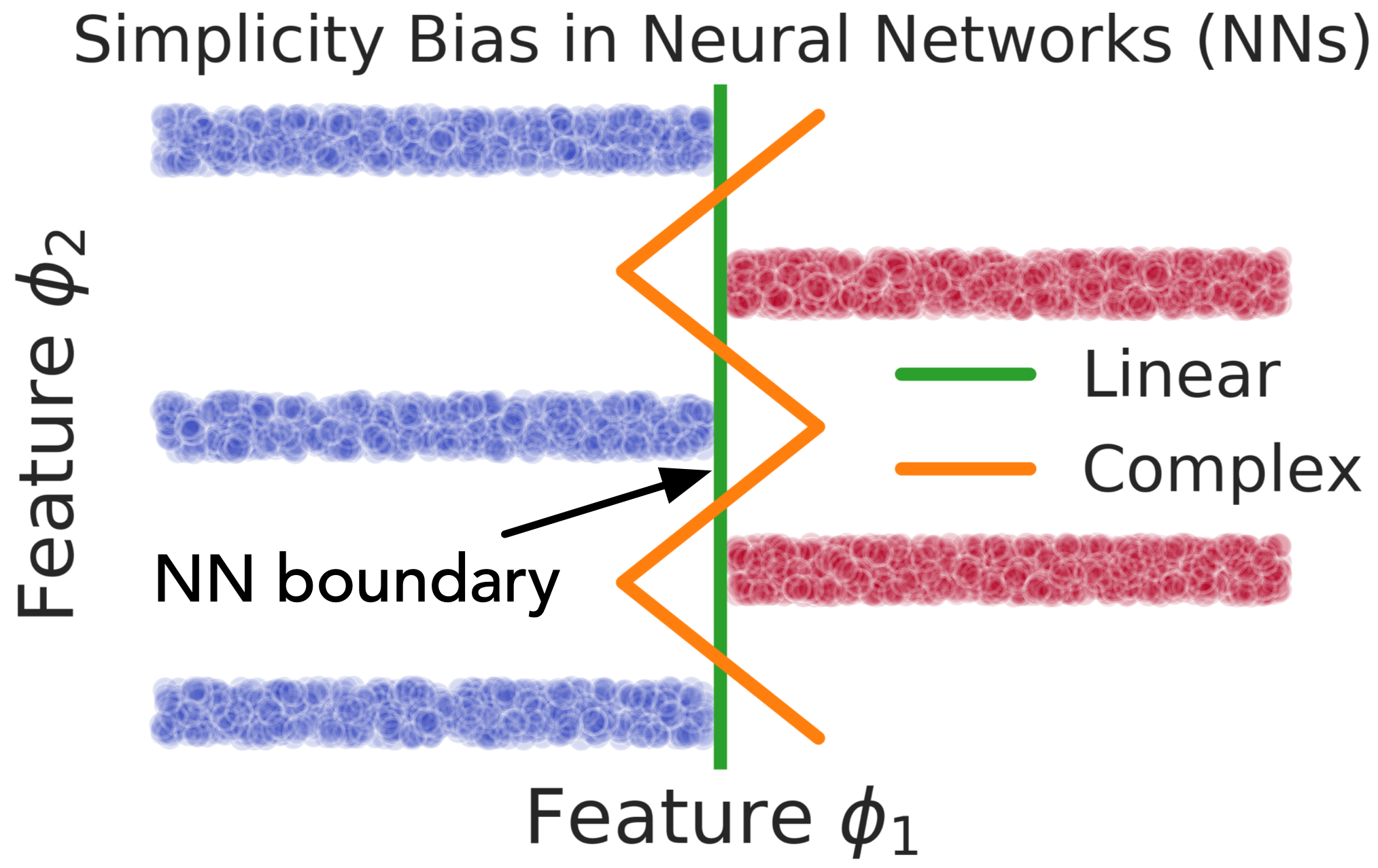}
	\caption{Simple vs. complex features}
	\label{fig:intro}
        \vspace{-15px}
\end{wrapfigure}
The simplicity of a feature is precisely determined by the \emph{minimum} number of linear pieces in the decision boundary that achieves optimal  classification accuracy using that feature. 
For example, in Figure~\ref{fig:intro}, the simple feature $\phi_1$ requires a linear decision boundary to perfectly predict the label, whereas complex feature $\phi_2$ requires four linear pieces. 
Along similar lines, we also introduce a collection of image-based datasets in which each image concatenates \texttt{MNIST} images (simple feature) and 
	\texttt{CIFAR-10} images (complex feature). 
The proposed datasets, which incorporate features of varying predictive power and simplicity, allow us to systematically investigate and measure SB in SGD-trained neural networks.

\vspace{-10px}
\paragraph{Observations from new dataset.} 
The ideal decision boundary that achieves high accuracy \emph{and} robustness relies on all features to obtain a large margin (minimum distance from any point to decision boundary).
For example, the orange decision boundary in~\Cref{fig:intro} that learns $\phi_1$ and $\phi_2$ attains 100\% accuracy and exhibits more robustness than the linear boundary because of larger margin.  	
Given the expressive power of large neural networks, 
one might expect that a network trained on the dataset in~\Cref{fig:intro} would result in the larger-margin orange piecewise linear boundary. However, in practice, we find quite the opposite---trained neural networks have a linear boundary. Surprisingly, neural networks exclusively use feature $\phi_1$ and remain \emph{completely invariant} to $\phi_2$. More generally, we observe that SB is extreme: neural networks simply ignore several complex predictive features in the presence of few simple predictive features. 
We first theoretically show that one-hidden-layer neural networks trained on the piecewise linear dataset exhibit SB. 
Then, through controlled experiments, we validate the extreme nature of SB across model architectures and optimizers.

\vspace{-7px}
\paragraph{Implications of extreme SB.} Theoretical analysis and controlled experiments reveal three major pitfalls of SB in the context of proposed synthetic and image-based datasets, which we {\em conjecture} to hold more widely across datasets and domains:

(i) \underline{Lack of robustness}:  Neural networks exclusively latch on to the simplest feature (e.g., background) at the expense of very small margin and completely ignore complex predictive features (e.g., semantics of the object), \emph{even when all features have equal predictive power}. This results in susceptibility to small adversarial perturbations (due to small margin) and spurious correlations (with simple features). Furthermore, in~\Cref{sec:extreme}, we provide a concrete connection between SB and data-agnostic and model-agnostic universal adversarial perturbations ~\cite{moosavi2017universal} observed in practice.

(ii) \underline{Lack of reliable confidence estimates}: Ideally, a network should have high confidence only if all predictive features agree in their prediction. However due to extreme SB, the network has high confidence even if several complex predictive features contradict the simple feature, mirroring the widely reported inaccurate and substantially higher confidence estimates reported in practice~\cite{nguyen2015deep, guo2017calibration}. 

(iii) \underline{Suboptimal generalization}: Surprisingly, neural networks exclusively rely on the simplest feature \emph{even if it less predictive of the label than all complex features} in the synthetic datasets. Consequently, contrary to conventional wisdom, extreme SB can hurt robustness as well as generalization.

In  contrast, prior works~\cite{brutzkus2017sgd,soudry2018implicit,gunasekar2018implicit} only extol SB by considering settings where all predictive features are simple and hence do not reveal the pitfalls observed in real-world settings.
{While our results on the pitfalls of SB are established in the context of the proposed datasets, the two design principles underlying these datasets---combining multiple features of varying simplicity \& predictive power and capturing multiple failure modes of neural networks in practice---suggest that our conclusions {\em could} be justifiable more broadly.}

\vspace{-12px}
\paragraph{Summary.} This work makes two key contributions. 
{First, we design datasets that offer a precise stratification of features based on simplicity and predictive power.
Second, using the proposed datasets, we provide theoretical and empirical evidence that neural networks exhibit extreme SB, which we postulate as a unifying contributing factor underlying key failure modes of deep learning: poor out-of-distribution performance, adversarial vulnerability and suboptimal generalization. 
To the best of our knowledge, prior works only focus on the positive aspect of SB: the lack of overfitting in practice.
Additionally, we find that standard approaches to improve generalization and robustness---ensembles and adversarial training---do not mitigate simplicity bias and its shortcomings on the proposed datasets.
Given the important implications of SB, we hope that the datasets we introduce serve (a) as a useful testbed for devising better training procedures and (b) as a starting point to design more realistic datasets that are amenable to theoretical analysis and controlled experiments.}

\textbf{Organization.} We discuss related work in~\Cref{sec:related}. ~\Cref{sec:prelims} describes the proposed datasets and metrics. In~\Cref{sec:extreme}, we concretely establish the extreme nature of Simplicity Bias (SB) and its shortcomings through theory and empirics.
	~\Cref{sec:gen} shows that extreme SB can in fact hurt generalization as well. 
	We conclude and discuss the way forward in~\Cref{sec:discussion}.
\vspace{-4px}

\vspace{-5px}
\section{Related Work}
\label{sec:related}
\vspace{-8px}

{\bf Out-of-Distribution (OOD) performance}: Several works demonstrate that NNs tend to learn spurious features \& low-level statistical patterns rather than semantic features \& high-level abstractions, resulting in poor OOD performance~\cite{jo2017measuring,geirhos2018imagenet,mccoy-etal-2019-right,oakden2019hidden}. 
This phenomenon has been exploited to design backdoor attacks against NNs~\cite{biggio2012poisoning,chen2017targeted} as well.
Recent works~\cite{wang2018learning,wang2019learning} that encourage models to learn higher-level features improve OOD performance, but require domain-specific knowledge to penalize reliance on spurious features such as image texture~\cite{geirhos2018imagenet} and annotation artifacts~\cite{gururangan2018annotation} in vision \& language tasks.
Learning robust representations without domain knowledge, however, necessitates formalizing the notion of features and feature reliance;
	our work takes a step in this direction.

\vspace{-2px}
{\bf Adversarial robustness}: 
	Neural networks exhibit vulnerability to small adversarial perturbations~\cite{szegedy2013intriguing}. Standard approaches to mitigate this issue---adversarial training~\cite{goodfellow2014explaining,madry2017towards} and ensembles~\cite{strauss2017ensemble,pang2019improving,kariyappa2019improving}---have had limited success on large-scale datasets.
Consequently, several works have investigated reasons underlying the existence of adversarial examples:~\cite{goodfellow2014explaining} suggests local linearity of trained NNs,~\cite{schmidt2018adversarially} indicates insufficient data,~\cite{shafahi2018adversarial} suggests inevitability in high dimensions,~\cite{bubeck2018adversarial} suggests computational barriers,~\cite{degwekar2019computational} proposes limitations of neural network architectures, and \cite{ilyas2019adversarial} proposes the presence of non-robust features. Additionally, Jacobsen et al.~\cite{jacobsen2018excessive} show that NNs exhibit invariance to large label-relevant perturbations.
Prior works have also demonstrated the existence of \emph{universal adversarial perturbations} (UAPs) that are agnostic to model and data~\cite{papernot2016transferability,moosavi2017universal,tramer2017space}.

\vspace{-2px}
{\bf Implicit bias of stochastic gradient descent }: Brutzkus et al.~\cite{brutzkus2017sgd} show that neural networks trained with SGD provably generalize on linearly separable data. 
Recent works~\cite{soudry2018implicit,ji2019implicit} also analyze the limiting direction of gradient descent on logistic regression with linearly separable and non-separable data respectively. 
Empirical findings~\cite{nakkiran2019sgd,mangalam2019deep} provide further evidence to suggest that SGD-trained NNs generalize well because SGD learns models of increasing complexity.
Additional recent works investigate the implicit bias of SGD on non-linearly separable data for linear classifiers~\cite{ji2019implicit} and infinite-width two-layer NNs~\cite{chizat2020implicit}, showing convergence to maximum margin classifiers in appropriate spaces. 
In~\Cref{sec:extreme}, we show that SGD's implicit bias towards simplicity can result in  small-margin and feature-impoverished classifiers instead of large-margin and feature-dense classifiers.

\vspace{-2px}
\textbf{Feature reliance}: %
Two recent works study the relation between inductive biases of training procedures and the set of features that models learn. 
Hermann et al.~\cite{hermann2020shapes} use color, shape and texture features in stylized settings to show that standard training procedures can (a) increase reliance on task-relevant features that are partially decodable using untrained networks and (b) suppress reliance on non-discriminative or correlated features. 
Ortiz et al.~\cite{ortiz2020hold} show that neural networks learned using standard training (a) develop invariance to non-discriminative features and (b) adversarial training induces a sharp transition in the models' decision boundaries.
 In contrast, we develop a \emph{precise} notion of feature simplicity and subsequently show that SGD-trained models can exhibit invariance to multiple \emph{discriminative-but-complex} features. 
 We also identify three pitfalls of this phenomenon---poor OOD performance, adversarial vulnerability, suboptimal generalization---and show that adversarial training and standard ensembles do not mitigate the pitfalls of simplicity bias.
	
Multiple works mentioned above (a) differentially characterize \emph{learned} features and \emph{desired} features---statistical regularities vs. high-level concepts~\cite{jo2017measuring}, syntactic cues vs. semantic meaning~\cite{mccoy-etal-2019-right},  robust vs. non-robust features~\cite{ilyas2019adversarial}---and (b) posit that the mismatch between these 
	features results in non-robustness.
In contrast, our work probes \emph{why} neural networks prefer one set of features over another and unifies the aforementioned feature characterizations through the lens of feature \emph{simplicity}.

\section{Preliminaries: Setup and Metrics}
\label{sec:prelims}
\vspace{-10px}
{\bf Setting and metrics}: We focus on binary classification. Given samples $\widehat{\D}=\{(x_i,y_i)\}^n_{i=1}$ from distribution $\D$ over $\Rd \times \{\minus 1,1\}$,
the goal is to learn a scoring function $s(x):\Rd\rightarrow \R$ (such as logits), and an associated classifier $f:\Rd \rightarrow \{\minus 1,1\}$ defined as $f(x)=2h(x)\minus 1$ where $h(x)=\mathds{1}\{\text{softmax}(s(x)) < 0.5\}$. We use two well-studied metrics for generalization and robustness:

\vspace{-10px}
\paragraph{Standard accuracy.} The standard accuracy of a classifier $f$ is: $\ED{\ind{f(x) = y}}$.

\vspace{-10px}
\paragraph{$\delta$-Robust accuracy.} Given norm $\norm{\cdot}$ and perturbation budget $\delta$, the $\delta$-robust accuracy of a classifier $f$ is: $\ED{\min_{\norm{\hat{x} - x}\leq \delta} \ind{f(\hat{x}) = y}}$.

Next, we introduce two metrics that quantitatively capture the extent to which a model relies on different input coordinates (or features). Let $S$ denote some subset of coordinates $[d]$ and $\DRS$ denote the $S$-randomized distribution, which is obtained as follows: given $\D^S$, the marginal distribution of $S$, $\DRS$ independently samples $((x^S, x^{S^\mathsf{c}}),y) \sim \D$ and $\overline{x}^S \sim \D^S$ and then outputs $((\overline{x}^S, x^{S^\mathsf{c}}),y)$. In $\DRS$, the coordinates in $S$ are rendered independent of the label $y$. The two metrics are as follows. 
\begin{definition}[Randomized accuracy]\label{def:ran_acc}
	Given data distribution $\D$, and subset of coordinates $S \subseteq [d]$, the $S$-randomized accuracy of a classifier $f$ is given by: $\EDRS{\ind{f(x) = y}}$.
\end{definition}
\begin{definition}[Randomized AUC]
	Given data distribution $\D$ and subset of coordinates $S \subseteq [d]$, the $S$-randomized AUC of classifier $f$ equals the area under the precision-recall curve of distribution $\DRS$.
\end{definition}
\vspace{-4px}
Our experiments use \{\SF, \SFc\}-randomized metrics---accuracy, AUC, logits---to establish that $f$ depends \emph{exclusively on some features \SF} and \emph{remains invariant to the rest \SFc}.
\begin{table}[t]
	\centering
	\def\arraystretch{0.95}
	\resizebox{.77\linewidth}{!}{%
	\begin{tabular}{cccc}
		\toprule
		& Accuracy & AUC & Logits \\ \midrule
		\SF-Randomized & 0.50 & 0.50 & randomly shuffled \\
		\SFc-Randomized & standard accuracy & standard AUC & essentially identical \\ \bottomrule	
	\end{tabular}%
	}
	\parbox{11cm}{
		\caption{If the \{\SF, \SFc\}-randomized metrics of a model behave as above, then that model relies \emph{exclusively} on \SF and is \emph{invariant} to \SFc.}
		\label{tab:metrics}
	}
	\vspace{-7px}
\end{table}
First, if (a) \SF-randomized accuracy and AUC equal $0.5$ and (b) \SF-randomized logit distribution is a random shuffling of the original distribution (i.e., logits in the original distribution are randomly shuffled across true positives and true negatives), then $f$ depends \emph{exclusively} on \SF. Conversely, if (a) \SFc-randomized accuracy and AUC are equal to standard accuracy and AUC and (b) \SFc-randomized logit distribution is essentially identical to the original distribution, then $f$ is \emph{invariant} to \SFc; ~\Cref{tab:metrics} summarizes these observations.

\vspace{-7px}
\subsection{Datasets}
\vspace{-2px}
\textbf{One-dimensional Building Blocks}:
\label{subsection:one-dim-blocks} 
Our synthetic datasets use three one-dimensional data blocks---linear, noisy linear and $k$-slabs---shown in top row of~\Cref{fig:datasets}. In the \textbf{linear} block, positive and negative examples are uniformly distributed in $[0.1, 1]$ and $[\minus 1, \minus 0.1]$ respectively. In the \textbf{noisy linear} block, given a noise parameter $p \in [0,1]$, $1-p$ fraction of points are distributed like the linear block described above and $p$ fraction of the examples are uniformly distributed in $[\minus 0.1, 0.1]$. In \textbf{$k$-slab} blocks, positive and negative examples are distributed in $k$ well-separated, alternating regions. 

\textbf{Simplicity of Building Blocks}: Linear classifiers can attain the optimal (Bayes) accuracy of $1$ and $1-\sfrac{p}{2}$ on the linear and $p$-noisy linear blocks respectively. For $k$-slabs, however, $(k\minus 1)$-piecewise linear classifiers are required to obtain the optimal accuracy of $1$.  Consequently, the building blocks have a natural notion of simplicity: \emph{minimum number of pieces required by a piecewise linear classifier to attain optimal accuracy}. With this notion, the linear and noisy linear blocks are simpler than $k$-slab blocks when $k > 2$, and $k$-slab blocks are simpler than $\ell$-slab blocks when $k < \ell$. 
\begin{figure}[t]
	\centering
	\includegraphics[width=\linewidth]{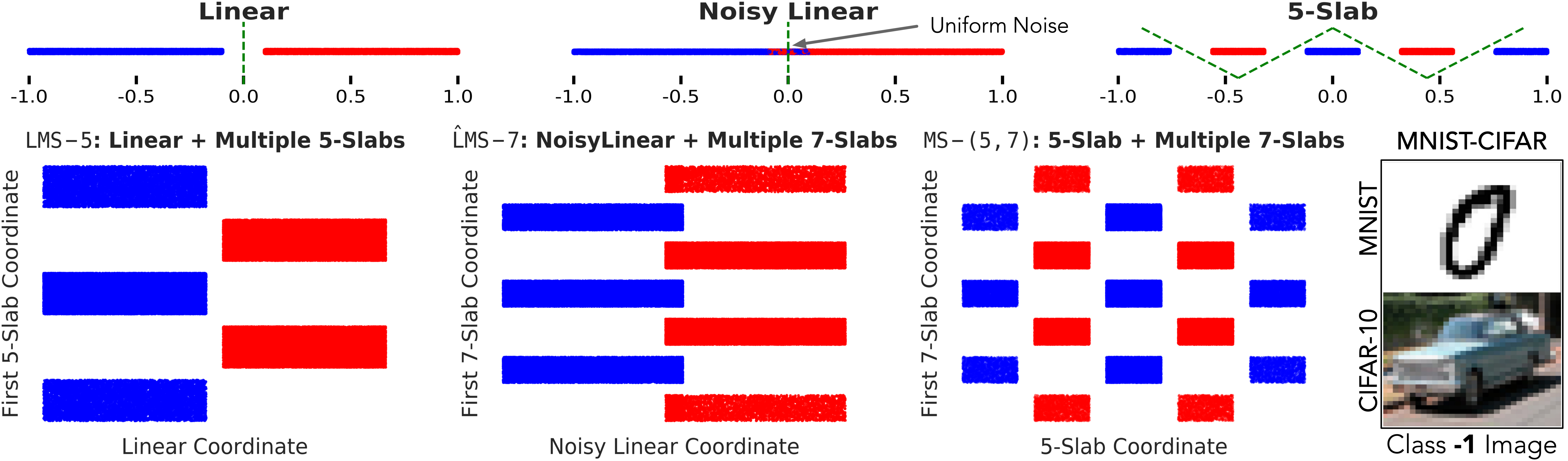}
	\caption{(Synthetic \& Image-based Datasets)
		One-dimensional building blocks (top row)---linear, noisy linear, $k$-slab---are used to construct multi-dimensional datasets (bottom row): \texttt{LMS-5} (linear \& multiple $5$-slabs), $\texttt{\^LMS-5}$ (noisy linear \& multiple $5$-slabs) and \texttt{MS-(5,7)} ($5$-slab \& multiple $7$-slabs). ~\mnistcifar data vertically concatenates \texttt{MNIST} and \texttt{CIFAR} images (see~\Cref{subsection:mnistcifar}).
	}
	\label{fig:datasets}	
\end{figure}

\textbf{Multi-dimensional Synthetic Datasets}:
\label{sec:synth}
We now outline four $d$-dimensional datasets wherein each coordinate corresponds to one of three building blocks described above. See Figure~\ref{fig:datasets} for illustration.
\vspace{-3px}
\begin{itemize}[leftmargin=*]
	\itemsep 2pt
	\topsep -5pt
	\parskip 0pt
	\item \underline{\texttt{LMS-k}}: Linear and multiple $k$-slabs; the first coordinate is a linear block and the remaining $d\minus 1$ coordinates are independent $k$-slab blocks; we use \texttt{LMS-5} \& \texttt{LMS-7} datasets in our analysis.
	\item \underline{\texttt{\^LMS-k}}: Noisy linear and multiple $k$-slab blocks; the first coordinate is a \emph{noisy} linear block and the remaining $d\minus 1$ coordinates are independent $k$-slab blocks. The noise parameter $p$ is $0.1$ by default.
	\item \underline{\texttt{MS-(5,7)}}: 5-slab and multiple 7-slab blocks; the first coordinate is a 5-slab block and the remaining $d\minus 1$ coordinates are independent $7$-slab blocks, as shown in~\Cref{fig:datasets}.
	\item \underline{\texttt{MS-5}}: Multiple 5-slab blocks; all coordinates are independent 5-slab blocks. 
\end{itemize}

\vspace{-2px}
We now describe the \lsn (linear, 3-slab \& noise) dataset, a stylized version of \lmsk that is amenable to theoretical analysis. 
We note that recent works~\cite{ortiz2020hold,gardner2020evaluating} {empirically} analyze variants of the \lsn dataset.
In \lsn, conditioned on the label $y$, the first and second coordinates of $x$ are \emph{singleton} linear and $3$-slab blocks: linear and $3$-slab blocks have support on $\{\minus 1,1\}$ and $\{\minus 1,0,1\}$ respectively. The remaining coordinates are standard gaussians and not predictive of the label. 

The synthetic datasets comprise features of varying simplicity; in~\lmsk, \nlmsk, and~\msfiveseven, the first coordinate is the simplest feature and in~\msfive, all features are equally simple. All datasets, even \nlmsk, can be \emph{perfectly} classified via piecewise linear classifiers.
Though the $k$-slab features are special cases of linear periodic functions on which gradient-based methods have been shown to fail for large $k$~\cite{shalev2017failures}, we note that we use small values of $k \in \{5,7\}$ which are quickly learned by SGD in practice.
Note that we (a) apply a random rotation matrix to the data and (b) use $50$-dimensional synthetic data (i.e., $d=50$) by default. Note that all code and datasets are available at the following repository: \href{https://github.com/harshays/simplicitybiaspitfalls}{https://github.com/harshays/simplicitybiaspitfalls}.

\textbf{MNIST-CIFAR Data}:
\label{subsection:mnistcifar}
The \mnistcifar dataset consists of two classes: images in class $\minus 1$ and class $1$ are vertical concatenations of \texttt{MNIST} digit zero \& \texttt{CIFAR-10} automobile and \texttt{MNIST} digit one \& \texttt{CIFAR-10} truck images respectively, as shown in~\Cref{fig:datasets}. The training and test datasets comprise 50,000 and 10,000 images of size $3\times 64\times 32$.
The \mnistcifar dataset mirrors the structure in the synthetic \lmsk dataset---both incorporate simple and complex features. 
The \texttt{MNIST} and \texttt{CIFAR} blocks correspond to the linear and $k$-slab blocks in \lmsk respectively.
Also note that \texttt{MNIST} images are zero-padded \& replicated across three channels to match \texttt{CIFAR} dimensions before concatenation.

\Cref{sec:setup} provides details about the datasets, models, and optimizers used in our experiments. In~\Cref{sec:ood}, we show that our results are robust to the exact choice of~\mnistcifar class pairs.

\vspace{-5px}
\section{Simplicity Bias (SB) is Extreme and Leads to Non-Robustness}
\label{sec:extreme}
\vspace{-10px}
We first establish the \emph{extreme} nature of SB in neural networks (NNs) {on the proposed synthetic datasets} using SGD and variants.
In particular, we show that for the datasets considered, \emph{if all features have full predictive power, NNs rely exclusively on the simplest feature \SF and remain invariant to all complex features \SFc}.
Then, we explain why extreme SB {on these datasets} results in neural networks that are vulnerable to distribution shifts and data-agnostic \& transferable adversarial perturbations.

\vspace{-8px}
\subsection{Neural networks provably exhibit Simplicity Bias (SB)}
We consider the \lsn dataset (described in~\Cref{sec:synth}) that has one {\em linear} coordinate and one {\em 3-slab} coordinate, both fully predictive of the label on their own; the remaining $d\minus 2$ noise coordinatesI  do not have any predictive power. Now, a "large-margin" one-hidden-layer NN with ReLU activation should give equal weight to the linear and 3-slab coordinates.
	However,  we prove that NNs trained with standard mini-batch gradient descent (GD) on the \lsn dataset (described in~\Cref{sec:synth}) provably learns a classifier that \emph{exclusively} relies on the ``simple" linear coordinate, thus exhibiting simplicity bias at the cost of margin. Further, our claim holds even when the margin in the linear coordinate (minimum distance between linear coordinate of positives and negatives) is significantly smaller than the margin in the slab coordinate. The proof of the following theorem is presented in Appendix~\ref{sec:proof}.

\begin{theorem}\label{thm:main}
	Let $f(x) = \sum_{j=1}^{k} v_j \cdot \textrm{ReLU}(\sum_{i=1}^{d} w_{i,j} x_i)$ 	denote a one-hidden-layer neural network with $k$ hidden units and ReLU activations.
	Set $v_{j} = \pm \sfrac{1}{\sqrt{k}}$ w.p. $\sfrac{1}{2}$ $\forall j \in [k]$.
	Let $\{(x^{i},y^{i})\}^m_{i=1}$ denote i.i.d. samples from \lsn where $m \in [c d^2, d^\alpha / c]$ for some $\alpha > 2$.
	Then, given $d > \Omega(\sqrt{k}\log k)$ and initial $w_{ij} \sim \mathcal{N}(0,\frac{1}{dk \log^4 d})$, after $O(1)$ iterations, mini-batch gradient descent (over $w$) with hinge loss, {step size $\eta = \Omega{{(\log d)^{\sfrac{\minus 1}{2}}}}$}, mini-batch size $\Theta(m)$, satisfies:
	\vspace{-7.5px}
	\begin{itemize}[leftmargin=*]
		\itemsep 0pt
		\topsep 0pt
		\parskip 0pt
		\item Test error is at most $\sfrac{1}{\textrm{poly}(d)}$
		\item The learned weights of hidden units $w_{ij}$ satisfy:
		\small
		$$ \underbrace{\abs{w_{1j}} = {\frac{2}{\sqrt{k}} \left(1-\frac{c}{\sqrt{\log d}}\right) + \order{\frac{1}{\sqrt{dk} \log d}}}}_{{\textstyle \textbf{Linear Coordinate}}} ,\; \underbrace{\abs{w_{2,j}} = {\order{\frac{1}{\sqrt{dk} \log d}}}}_{{\textstyle \textbf{3-Slab Coordinate}}} ,\; \underbrace{\norm{w_{3:d, j}} = \order{\frac{1}{\sqrt{k} \log d}}}_{{\textstyle \textbf{$d\minus 2$ Noise Coordinates}}} $$
		\normalsize
	\end{itemize}
\vspace{-5px}
with probability greater than $1-\frac{1}{\textrm{poly}(d)}$. Note that $c$ is a universal constant.
\end{theorem}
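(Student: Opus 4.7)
The plan is to exploit the fundamental asymmetry at the heart of \lsn: the linear coordinate satisfies $\mathbb{E}[y x_1]=\pm 1$, whereas the three-slab distribution of $x_2$ is symmetric enough that $\mathbb{E}[y x_2]=0$, and the remaining $d{-}2$ Gaussian coordinates are independent of $y$. The initialization variance $1/(dk\log^4 d)$ is chosen so that each initial preactivation $\sum_i w_{ij}x_i$ has magnitude $O(1/\log^2 d)\ll 1$, making the ReLU derivative a nearly symmetric Bernoulli that is essentially independent of the label. This reduces the population gradient of the hinge loss with respect to $w_{ij}$ to $-\tfrac12 v_j\,\mathbb{E}[y x_i]$ at leading order, which is $\Theta(v_j)$ for $i=1$ and $0$ for every $i\geq 2$. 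A single mini-batch step therefore shifts $w_{1,j}$ by $\Theta(\eta/\sqrt{k})$ in the direction of $-\mathrm{sign}(v_j)$, while every other coordinate stays within $O(\eta/\sqrt{m})$ of initialization by Hoeffding-type concentration (using boundedness in $[-1,1]$ for $x_2$ and sub-Gaussianity for the noise coordinates, together with $m\geq cd^2$).

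I would then track the iterates for a constant number of further steps. Once $|w_{1,j}|$ exceeds the ambient preactivation scale $\sim 1/\log^2 d$, each unit's activation pattern becomes essentially determined by $w_{1,j}y$, so unit $j$ fires exactly on the label sign opposite to $v_j$. In this regime the population gradient on $w_{1,j}$ continues to align with $v_j$ at magnitude $\Theta(1/\sqrt{k})$ per step, driving $|w_{1,j}|$ upward until $yf(x)\geq 1$ on most samples; the hinge gradient then vanishes and the dynamics self-terminate. A second-order Taylor expansion of this stopping equation, together with the $O(1/\sqrt{\log d})$ mass of samples sitting near the margin, pins down the precise limit $|w_{1,j}|=(2/\sqrt{k})(1-c/\sqrt{\log d})+O(1/(\sqrt{dk}\log d))$.

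The bound on $|w_{2,j}|$ follows because its population gradient vanishes at every iteration: conditioned on the (label-correlated) half-space determined by $w_{1,j}y$ plus small perturbations, the three-slab distribution of $x_2$ is still symmetric about $0$, so $\mathbb{E}[y x_2\mid \mathds{1}_j=1]=0$. Only empirical fluctuation of size $O(\eta/\sqrt{m})$ per step contributes, and over $O(1)$ iterations with $m\geq cd^2$ this amounts to $O(1/\sqrt{dk\log d})$. The same sampling argument, combined with $\ell_2$-Hoeffding across the $d{-}2$ independent Gaussian directions, yields $\|w_{3:d,j}\|=O(\sqrt{d/m}\,\eta)=O(1/(\sqrt{k}\log d))$. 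The test-error bound then comes from noting that the perturbation $\xi_j:=w_{2,j}x_2+\sum_{i\geq 3}w_{ij}x_i$ is of magnitude $O(1/\log d)$, uniformly dominated by $|w_{1,j}|\sim 1/\sqrt{k}$, so $\mathrm{sign}(f(x))=y$ except on an $x$-set of Gaussian mass $1/\mathrm{poly}(d)$.

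The hardest step is the second one: preserving the conditional symmetry that kills the $w_{2,j}$ signal as the activation pattern evolves. After the first iteration each unit's active set becomes correlated with the label through $w_{1,j}$, and once $w_{2,j}$ drifts away from $0$ the term $w_{2,j}x_2$ itself enters the ReLU-derivative indicator and creates a self-amplifying cross contribution to $\mathbb{E}[y x_2\,\mathds{1}\{\text{unit active}\}]$. Showing that this cross-term remains $o(1/\sqrt{dk\log d})$ across all $O(1)$ iterations---and simultaneously calibrating the $\log d$ constants inside the $w_{1,j}$ stopping value---requires a tight second-order Taylor argument rather than the crude leading-order analysis sketched above.
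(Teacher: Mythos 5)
Your high-level plan is the right one and coincides with the paper's: show that every ReLU unit stays in the hinge-active regime for a constant number of iterations, compute the population gradient in each of the three coordinate groups, and argue that only the linear coordinate receives a $\Theta(1/\sqrt{k})$ per-step push while the slab and noise coordinates remain near initialization. Your leading-order computation of the linear gradient (roughly $-\tfrac12 v_j\,\mathbb{E}[yx_1]$ at initialization) agrees with the paper's closed-form, which at $w_1=w_2=0$ evaluates to $-v_i/2$.

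However, you have correctly identified---and then not resolved---the step on which the entire argument hinges, and your sketch of how to close it does not match what actually works. You first assert that $\mathbb{E}[y x_2\mid \mathds{1}_j=1]=0$ by conditional symmetry and then, two paragraphs later, concede that once $w_{2,j}$ drifts from $0$ this symmetry is broken and a self-amplifying cross-term appears; you gesture at a ``tight second-order Taylor argument'' without producing it. The paper's resolution is not a Taylor expansion. It derives the exact population gradient of the slab coordinate as proportional to $\phi\bigl(\tfrac{w_1+w_2}{\|\bar w\|}\bigr)-\phi\bigl(\tfrac{w_1-w_2}{\|\bar w\|}\bigr)$, bounds this difference by $2|w_2|$ times the supremum of the Gaussian density $\tfrac{1}{\|\bar w\|}\varphi\bigl(\tfrac{w_1+\delta}{\|\bar w\|}\bigr)$ over $|\delta|\le|w_2|$, and then proves (via an explicit analysis of the maximizer of $x\mapsto \tfrac{1}{x}\varphi(z/x)$) that this density is at most $1$ once $\|\bar w_j\|$ has fallen below $|w_{1,j}|$. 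This makes the slab-gradient a contraction in $|w_2|$ at each step, which is what keeps $|w_{2,j}|$ at scale $1/(\sqrt{dk}\log d)$ throughout $O(1/\eta)$ iterations; nothing about the slab distribution being ``still symmetric about $0$'' conditional on the active set is invoked, nor would it be true. A similar decomposition---component along $\bar w_j$ plus a small orthogonal sampling fluctuation---controls the noise block.

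Two further points where your plan diverges from what is actually proved. First, the dynamics do not ``self-terminate'' when $yf(x)\ge 1$: the paper deliberately stops after $\hat t=\lfloor\tfrac{4}{\eta}(1-c_n/\sqrt{\log d})\rfloor$ iterations chosen so that $yf(x)=(t\pm\tfrac12)\tfrac{\eta}{4}<1$ still holds uniformly over the batch, and the factor $(1-c/\sqrt{\log d})$ in the final $|w_{1,j}|$ estimate is exactly the effect of this early stopping, not of a Taylor-expanded stopping equation; analyzing the regime where some samples have deactivated would be genuinely harder, and the paper avoids it. Second, a sign slip: once $|w_{1,j}|$ dominates, unit $j$ fires on the label sign that \emph{agrees} with $\mathrm{sign}(v_j)$ (the paper's Lemma~\ref{lemma:active} uses $\ind{y v_i\ge 0}$), not the opposite.
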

\vspace{-5px}
{\bf Remarks}: First, we see that the trained model essentially relies only on the linear coordinate $w_{1j}$---SGD sets the value of $w_{1j}$ roughly $\tilde{\Omega}(\sqrt{d})$ larger than the slab coordinates $w_{2j}$ that do not change much from their initial value.
Second, the initialization we use is widely studied in the deep learning theory~\cite{mei2018mean,woodworth2020kernel} as it better reflects the practical performance of neural networks~\cite{chizat2019lazy}.
Third, given that \lsn is linearly separable, Brutzkus et al.~\cite{brutzkus2017sgd} also guarantee convergence of test error. However, our result additionally gives a precise description of the \emph{final} classifier. Finally, we note that our result shows that extreme SB bias holds even for overparameterized networks ($k=O(d^2/\log d)$). 

\begin{figure}[t]
	\captionsetup[subfloat]{farskip=2pt,captionskip=2pt}
	\subfloat[original, \SFc-randomized and \SF-randomized logit distribution for true positives]{
		\includegraphics[width=\linewidth]{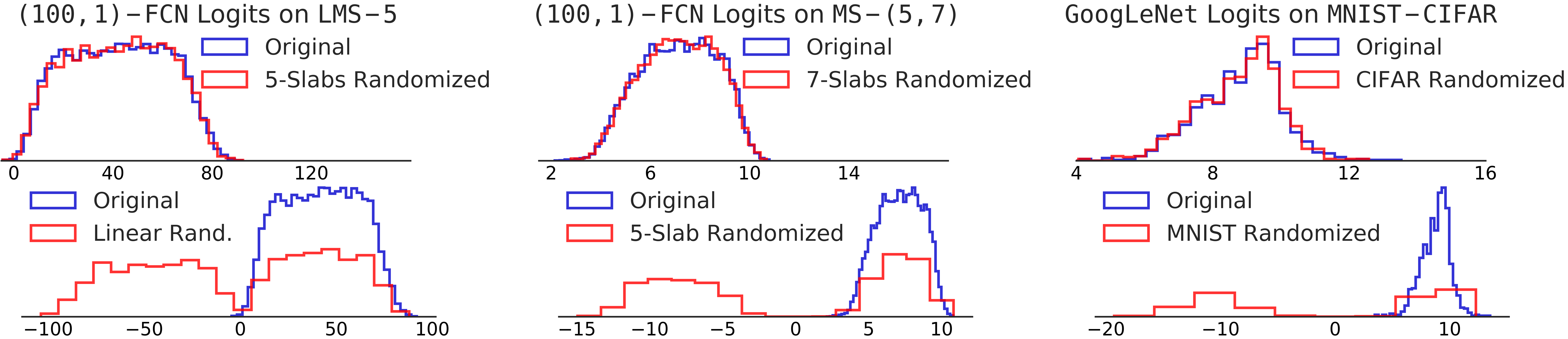}
	}\\
	\subfloat[$\{\mathtt{S},\mathtt{S}^c\}$-Randomized AUCs of (2000,1)-\texttt{FCN}s]{
		\raisebox{0.97\height}{\resizebox{0.48\linewidth}{!}{\def\arraystretch{1.15}
\large
\centering
\begin{tabular}{ccccc}
	\toprule
     \multirow{2}{*}{Dataset} & \multirow{2}{*}{Set $\mathtt{S}^c$} & \multirow{2}{*}{Size $|\mathtt{S}^c|$} &  \multicolumn{2}{c}{Randomized AUC} \\ \cmidrule{4-5}
      & & & Set $\mathtt{S}$ & Set $\mathtt{S^c}$ \\ \midrule
     \multirow{2}{*}{\texttt{LMS-5}}   & \multirow{2}{*}{\texttt{\shortstack{all\\5-Slabs}}} & $49$ & $0.50 \pm 0.01$ & $1.00 \pm 0.00$ \\
     & & $249$ & $0.50 \pm 0.01$ & $1.00 \pm 0.00$ \\ \midrule
     \multirow{2}{*}{\texttt{MS-(5,7)}}   & \multirow{2}{*}{\texttt{\shortstack{all\\7-Slabs}}} & $49$ & $0.49 \pm 0.01$ & $1.00 \pm 0.00$ \\
     & & $249$ & $0.50 \pm 0.01$ & $1.00 \pm 0.00$ \\ \bottomrule     
\end{tabular}
\normalsize}}
	}
	\subfloat[Decision boundaries of trained (100,1)-\texttt{FCNs}]{
		\raisebox{-0.01\height}{\includegraphics[width=0.52\linewidth]{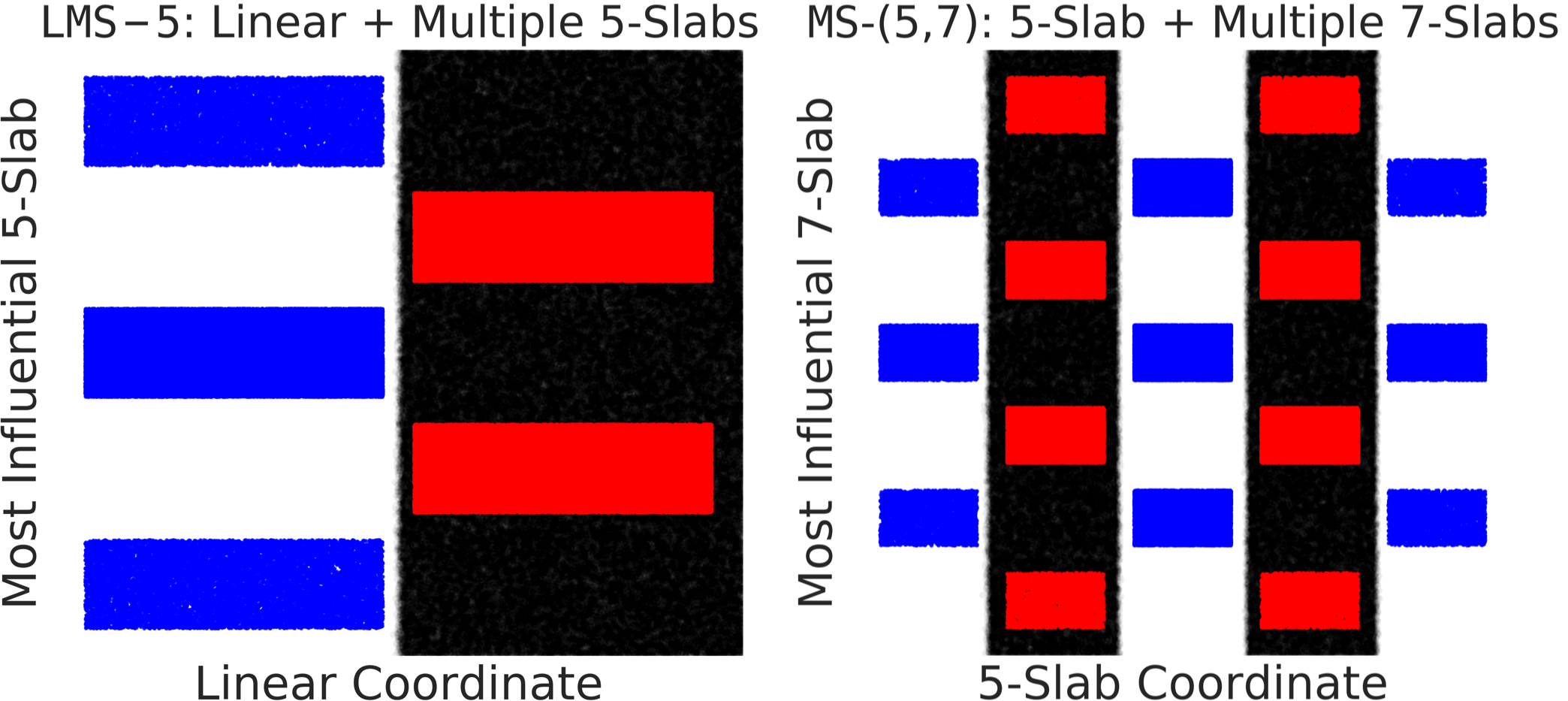}}
	}
	\caption{{Extreme SB on~\lmsfive,~\msfiveseven~and~\mnistcifar~datasets} (a) \SF-randomized logit distribution of true positives is essentially identical to the original logit distribution of true positives (before randomization). However, \SFc-randomized logit distribution of true positives is a randomly shuffled version of original logit distribution; \SFc-randomized logits are \emph{shuffled across true positives and negatives}. (b) \{\SF, \SFc\}-randomized AUCs (summarized in~\Cref{tab:metrics}) are $0.5$ and $1.0$ respectively for varying number of complex features $\abs{\mathtt{S}^c}$. (c) \texttt{FCN} decision boundaries projected onto \SF \& the most influential coordinate in \SFc shows that the boundary depends only on \SF and is invariant to \SFc.}
	\label{fig:oodnew}
\end{figure}
\subsection{Simplicity Bias (SB) is Extreme in Practice}\label{sec:sb-emp}
We now establish the extreme nature of SB on datasets with features of varying simplicity---\lmsfive, \msfiveseven, \mnistcifar (described in Section~\ref{sec:prelims})---across {\em multiple model architectures and optimizers}. 
Recall that (a) the simplicity of  one-dimensional building blocks is defined as the number of pieces required by a piecewise linear classifier acting {\em only} on that block to get optimal accuracy and (b) \lmsfive has one linear block \& multiple $5$-slabs, \msfiveseven has one $5$-slab and multiple $7$-slabs and \mnistcifar concatenates \mnist and \cifar images. 
We now use \SF to denote the simplest feature in each dataset: linear in \lmsfive, 5-slab in \msfiveseven, and \mnist in \mnistcifar.

We first consistently observe that SGD-trained models trained on \lmsfive and \msfiveseven datasets exhibit extreme SB: they \emph{exclusively} rely on the simplest feature \SF and remain invariant to all complex features \SFc. 
Using \SF-randomized \& \SFc-randomized metrics summarized in Table~\ref{tab:metrics}, we first establish extreme SB on fully-connected (\texttt{FCN}), convolutional (\texttt{CNN}) \& sequential (\texttt{GRU}~\cite{cho2014learning}) models.
We observe that the \SF-randomized AUC is 0.5 across models. That is, unsurprisingly, all models are critically dependent on \SF. Surprisingly, however, \SFc-randomized AUC of all models on both datasets equals 1.0. That is, arbitrarily perturbing \SFc coordinates has {\em no impact} on the class predictions or the ranking of true positives' logits against true negatives' logits. One might expect that perturbing \SFc would at least bring the {\em logits} of positives and negatives closer to each other. ~\Cref{fig:oodnew}(a) answers this in negative---the logit distributions over true positives of \texttt{(100,1)-FCN}s (i.e., with width 100 \& depth 1) remain unchanged even after randomizing \emph{all} complex features \SFc. {Conversely, randomizing the simplest feature \SF randomly shuffles the original logits across true positives as well as true negatives.} 
	The two-dimensional projections of \texttt{FCN} decision boundaries in~\Cref{fig:oodnew}(c) visually confirm that \texttt{FCN}s exclusively depend on the simpler coordinate \SF and are invariant to all complex features \SFc. 
	
Note that sample size and model architecture do not present any obstacles in learning complex features \SFc to achieve 100\% accuracy. 
In fact, if \SF is \emph{removed} from the dataset, SGD-trained models with the same sample size indeed rely on \SFc to attain 100\% accuracy. 
Increasing the number of complex features does not mitigate extreme SB either. 
~\Cref{fig:oodnew}(b) shows that even when there are 249 complex features and only one simple feature, \texttt{(2000,1)-FCNs} exclusively rely on the simplest feature \SF; randomizing \SFc keeps AUC score of $1.0$ intact but simply randomizing \SF drops the AUC score to $0.5$.
\texttt{(2000,1)-FCNs} exhibit extreme SB despite their expressive power to learn large-margin classifiers that rely on all simple \emph{and} complex features.

Similarly, on the \mnistcifar dataset, 
MobileNetV2~\cite{sandler2018mobilenetv2}, GoogLeNet~\cite{szegedy2015going}, ResNet50~\cite{he2015deep} and DenseNet121~\cite{huang2016densely} exhibit extreme SB.
All models exclusively latch on to the simpler \texttt{MNIST} block to acheive 100\% accuracy and remain invariant to the \texttt{CIFAR} block, even though the \texttt{CIFAR} block alone is almost fully predictive of its label---GoogLeNet attains 95.4\% accuracy on the corresponding \texttt{CIFAR} binary classification task.
~\Cref{fig:oodnew}(a) shows that randomizing the simpler \texttt{MNIST} block randomly shuffles the logit distribution of true positives whereas randomizing the \texttt{CIFAR} block has no effect---the \texttt{CIFAR}-randomized and original logit distribution over true positives essentially overlap.

To summarize, we use \SF-randomized and \SFc-randomized metrics to establish that models trained on synthetic and image-based datasets exhibit extreme SB: 
\emph{If all features have full predictive power, NNs rely exclusively on the simplest feature \SF and remain invariant to all complex features \SFc}.
We further validate our results on extreme SB across model architectures, activation functions, optimizers and regularization methods such as $\ell_2$ regularization and dropout in~\Cref{sec:ood}.

\subsection{Extreme Simplicity Bias (SB) leads to Non-Robustness}
\label{sec:ood-uap}
Now, we discuss how our findings about extreme SB in~\Cref{sec:sb-emp} can help reconcile poor OOD performance and adversarial vulnerability with superior generalization on the same data distribution.

\textbf{Poor OOD performance}: 
Given that neural networks tend to heavily rely on spurious features~\cite{mccoy-etal-2019-right,oakden2019hidden}, state-of-the-art accuracies on large and diverse validation sets provide a false sense of security; even benign distributional changes to the data (e.g., domain shifts) during prediction time can drastically degrade or even nullify model performance.
This phenomenon, though counter-intuitive, can be easily explained through the lens of extreme SB.
Specifically, we hypothesize that spurious features are {\em simple}.
	This hypothesis, when combined with extreme SB, explains the outsized impact of spurious features.
For example, ~\Cref{fig:oodnew}(b) shows that simply perturbing the simplest (and potentially spurious in practice) feature \SF drops the AUC of trained neural networks to $0.5$, thereby nullifying model performance.
Randomizing \emph{all} complex features \SFc---5-slabs in~\lmsfive, 7-slabs in \msfiveseven, \texttt{CIFAR} block in \mnistcifar---has negligible effect on the trained neural networks---\SFc-randomized and original logits essentially overlap---even though \SFc and \SF have equal predictive power
This further implies that approaches \cite{hendrycks2016baseline,liang2017enhancing} that aim to detect distribution shifts based on model outputs such as logits or softmax probabilities may themselves fail due to extreme SB.

\textbf{Adversarial Vulnerability}: 
Consider a classifier $f^*$  that attains 100\% accuracy on the \lmsfive dataset by taking an average of the linear classifier on the linear coordinate and $d\minus 1$ piecewise linear classifiers, one for every $5$-slab coordinate. By relying on all $d$ features, $f^*$ has ${\mathcal{O}}(\sqrt{d})$ margin. Now, given the large margin, $f^*$ also attains high robust accuracy to $\ell_2$ adversarial perturbations that have norm ${\mathcal{O}}(\sqrt{d})$---the perturbations need to attack at least $\Omega(d)$ coordinates to flip model predictions.   
However, despite high model capacity, SGD-trained NNs do not learn robust and large-margin classifiers such as $f^*$. Instead, due to extreme SB, SGD-trained NNs exclusively rely on the simplest feature \SF. Consequently, $\ell_2$ perturbations with norm $\mathcal{O}(1)$ are enough to flip predictions and degrade model performance. We validate this hypothesis in~\Cref{fig:uap}, where \texttt{FCN}s, \texttt{CNN}s and \texttt{GRU}s trained on \lmsfive and \msfiveseven as well as DenseNet121 trained on \mnistcifar are vulnerable to small  universal (i.e., data-agnostic) adversarial perturbations (UAPs) of the simplest feature \SF. For example,~\Cref{fig:uap} shows that the $\ell_2$ UAP of DenseNet121 on \mnistcifar only attacks a few pixels in the simpler \texttt{MNIST} block and does not perturb the \texttt{CIFAR} block.
Extreme SB also explains why data-agnostic UAPs of one model transfer well to another: the notion of simplicity is consistent across models;~\Cref{fig:uap} shows that \texttt{FCN}s, \texttt{CNN}s and \texttt{GRU}s trained on~\lmsfive and~\msfiveseven essentially learn the same UAP.
Furthermore, invariance to complex features \SFc (e.g., \texttt{CIFAR} block in \mnistcifar) due to extreme SB explains why ``natural"~\cite{hendrycks2019natural} and semantic label-relevant perturbations~\cite{bhattad2020unrestricted} that modify the true image class do not alter model predictions.  
\begin{figure}[t]
	\centering 
	\includegraphics[width=\linewidth]{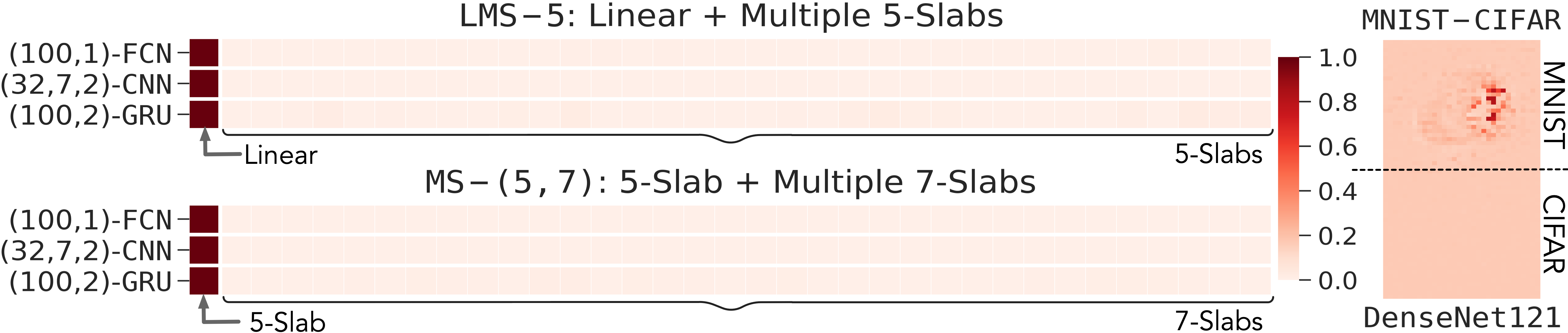}
	\caption{Extreme SB results in vulnerability to small-magnitude model-agnostic and data-agnostic Universal Adversarial Perturbations (UAPs) that nullify model performance by only perturbing the simplest feature \SF. {$\ell_2$ UAPs utilize most of the perturbation budget to attack \SF alone: $99.6\%$ for linear in \lmsfive, $99.9\%$ for $5$-slab in \msfiveseven and $99.3\%$ for \texttt{MNIST} pixels in \mnistcifar. The UAPs in this figure are rescaled for visualization purposes.} 
	}
	\label{fig:uap}
\end{figure}

To summarize, through theoretical analysis and extensive experiments on synthetic and image-based datasets, we (a) establish that SB is extreme in nature across model architectures and datasets and (b) show that extreme SB can result in poor OOD performance and adversarial vulnerability, {\em even when all simple and complex features have equal predictive power}.

\vspace{-5px}
\section{Extreme Simplicity Bias (SB) can hurt Generalization}
\label{sec:gen}
\begin{table}[b]
	\centering
\def\arraystretch{1.1}
\resizebox{1.\linewidth}{!}{%
\begin{tabular}{r|ccccccc}
\toprule
	Accuracy & \texttt{(100,1)-FCN} & \texttt{(200,1)-FCN} & \texttt{(300,1)-FCN} & \texttt{(100,2)-FCN} & \texttt{(200,2)-FCN} & \texttt{(300,2)-FCN} \\ \midrule
	Training Data &  $0.984 \pm 0.003$ &  $0.998 \pm 0.000$ &  $0.995 \pm 0.000$ &  $0.999 \pm 0.000$ &  $0.997 \pm 0.002$ & $0.998 \pm 0.002$ \\
	Test Data & $0.940 \pm 0.002$ & $0.949 \pm 0.003$ & $0.948 \pm 0.002$ & 
		$0.945 \pm 0.003$ & $0.946 \pm 0.003$ & $0.947 \pm 0.002$ \\
	\SFc-Randomized & $0.941 \pm 0.001$ & $0.946 \pm 0.001$ & $0.946 \pm 0.001$ & $0.944 \pm 0.001$ & $0.945 \pm 0.001$ & $0.946 \pm 0.001$ \\
	\SF-Randomized & $0.498 \pm 0.001$ & $0.498 \pm 0.000$ & $0.497 \pm 0.001$ & $0.498 \pm 0.001$ & $0.497 \pm 0.000$ & $0.498 \pm 0.001$ \\ \bottomrule
\end{tabular}%
}

	\vspace{5px}
	\caption{Extreme SB can hurt generalization: \texttt{FCN}s of depth $\{1,2\}$ and width $\{100,200,300\}$ trained on \nlmsseven data with SGD attain $95\%$ test accuracy. The randomized accuracies show that \texttt{FCN}s exclusively rely on the simpler {noisy linear} feature \SF and remain invariant to all 7-slab features that have 100\% predictive power.}
	\label{table:nlms}
\end{table}

\vspace{-5px}
In this section, we show that, contrary to conventional wisdom, extreme SB can potentially result in suboptimal generalization of SGD-trained models on the same data distribution as well. 
This is because exclusive reliance on the simplest feature \SF can persist even when \emph{every complex feature in \SFc has significantly greater predictive power than \SF}.

We verify this phenomenon on \nlmsseven data defined in Section~\ref{sec:prelims}. Recall that \nlmsseven has one noisy linear coordinate \SF with 95\% predictive power (i.e., 10\% noise in linear coordinate) and multiple 7-slab coordinates \SFc, each with 100\% predictive power.
Note that our training sample size is large enough for \texttt{FCN}s of depth $\{1,2\}$ and width $\{100,200,300\}$ trained on \SFc \textit{only} (i.e., after removing \SF from data) to attain 100\% test accuracy.
However, when trained on \nlmsseven (i.e., including \SF), SGD-trained \texttt{FCN}s exhibit extreme SB and \emph{only} rely on \SF, the noisy linear coordinate. 
In~\Cref{table:nlms}, we report accuracies of SGD-trained \texttt{FCN}s that are selected based on validation accuracy after performing a grid search over four SGD hyperparameters: learning rate, batch size, momentum, and weight decay.
The train, test and randomized accuracies in~\Cref{table:nlms} collectively show that \texttt{FCN}s exclusively rely on the noisy linear feature and consequently attain 5\% generalization error. 

To summarize, the mere presence of a simple-but-noisy feature in \nlmsseven data can significantly degrade the performance of SGD-trained \texttt{FCN}s due to extreme SB. 
Note that our results show that even an extensive grid search over SGD hyperparameters does not improve the performance of SGD-trained \texttt{FCN}s on \nlmsseven data but does not necessarily imply that mitigating SB via SGD and its variants is impossible.
We provide additional information about the experiment setup in~\Cref{sec:generror}.

\section{Conclusion and Discussion}
\label{sec:discussion}
We investigated Simplicity Bias (SB) in SGD-trained neural networks (NNs) using synthetic and image-based datasets that (a) incorporate a precise notion of feature simplicity, (b) are amenable to theoretical analysis and (c) capture the non-robustness of NNs observed in practice.
We first showed that one-hidden-layer ReLU NNs provably exhibit SB on the \lsn dataset.
Then, we analyzed the proposed datasets to empirically demonstrate that SB can be extreme, and can help explain poor OOD performance and adversarial vulnerability of NNs.
We also showed that, contrary to conventional wisdom, extreme SB can potentially hurt generalization.

{\bf Can we mitigate SB?}
It is natural to wonder if any modifications to the standard training procedure can help in mitigating extreme SB and its adverse consequences. 
In~\Cref{sec:fixes}, we show that well-studied approaches for improving generalization and adversarial robustness---ensemble methods and adversarial training---do not mitigate SB, at least on the proposed datasets. 
{Specifically, ``vanilla" ensembles of independent models trained on the proposed datasets mitigate SB to some extent by aggregating predictions, but continue to exclusively rely on the simplest feature. That is, the resulting ensemble remains \emph{invariant} to \emph{all} complex features (e.g., $5$-Slabs in \lmsfive). 
Our results suggest that in practice, the improvement in generalization due to vanilla ensembles stem from combining multiple simple-but-noisy features (such as color, texture) and not by learning diverse and complex features (such as shape). 
Similarly, adversarially training \texttt{FCN}s on the proposed datasets increases margin (and hence adversarial robustness) to some extent by combining multiple simple features but does not achieve the maximum possible adversarial robustness;  the resulting adversarially trained models remain invariant to \emph{all} complex features.}

Our results collectively motivate the need for novel algorithmic approaches that avoid the pitfalls of extreme SB.
Furthermore, the proposed datasets capture the key aspects of training neural networks on real world data, while being amenable to theoretical analysis and controlled experiments, and can serve as an effective testbed to understand deep learning phenomena. evaluating new algorithmic approaches aimed at avoiding the pitfalls of SB.

\section*{Broader Impact}
Our work is foundational in nature and seeks to improve our understanding of neural networks.  We do not foresee any significant societal consequences in the short term. However, in the long term, we believe that a concrete understanding of deep learning phenomena is essential to develop reliable deep learning systems for practical applications that have societal impact.

\clearpage
{\small \bibliography{main}}
\bibliographystyle{plain}

\newpage
\appendix
\appendixpage

The supplementary material is organized as follows.
We first discuss additional related work and provide experiment details in~\Cref{sec:related} and~\Cref{sec:setup} respectively.
In~\Cref{sec:ood}, we provide additional experiments to further validate the extreme nature of Simplicity Bias (SB).
Then, in~\Cref{sec:generror}, we provide additional information about the experiment setup used to to show that extreme SB can hurt generalization.
We evaluate the extent to which ensemble methods and adversarial training mitigate Simplicity Bias (SB) in~\Cref{sec:fixes}.
Finally, we provide the proof of Theorem 1 in~\Cref{sec:proof}.

\section{Additional Related Work}
\label{sec:morerelated}
In this section, we provide a more thorough discussion of relevant work related to margin-based generalization bounds, adversarial attacks and robustness, and out-of-distribution (OOD) examples.

{\bf Margin-based generalization bounds}: Building up on the classical work of~\cite{bartlett1998sample}, recent works try to obtain tighter generalization bounds for neural networks in terms of \emph{normalized} margin~\cite{bartlett2017spectrally,neyshabur2017pac,dziugaite2017computing,golowich2017size}. Here, margin is defined as the difference in the probability of the true label and the largest probability of the incorrect labels. While these bounds seem to capture generalization of neural networks at a coarse level, it has been argued~\cite{nagarajan2019uniform} that these approaches may be incapable of fully explaining the generalization ability of neural networks. Furthermore, it is unclear if the notion of model complexity used in these works, based on Lipschitz constant, captures generalization ability accurately. In any case, our results suggest that due to extreme simplicity bias (SB), even if a formulation captures both margin and model complexity accurately, current optimization techniques may not be able to find the optimal solution in terms of generalization \emph{and} robustness-, as they are strongly biased towards small-margin classifiers that exclusively rely on the simplest features.

{\bf Adversarial Defenses}: Neural networks trained using standard procedures such as SGD are extremely vulnerable~\cite{goodfellow2014explaining} to $\epsilon$-bound adversarial attacks such as FGSM~\cite{goodfellow2014explaining}, PGD~\cite{madry2017towards}, CW~\cite{carlini2016towards}, and Momentum~\cite{dong2017boosting}; Unrestricted attacks~\cite{brendel2017decision,engstrom2017exploring} can significantly degrade model performance as well.
Defense strategies based on heuristics such as feature squeezing~\cite{xu2017feature}, denoising~\cite{xie2019feature}, encoding~\cite{buckman2018thermometer}, specialized nonlinearities~\cite{zantedeschi2017efficient} and distillation~\cite{papernot2015distillation} have had limited success against stronger attacks~\cite{athalye2018obfuscated}.  
On the other hand, standard adversarial training~\cite{madry2017towards} and its variants such as~\cite{zhang2019theoretically} are fairly effective on datasets such as \texttt{MNIST}, \texttt{CIFAR-10} and \texttt{CIFAR-100}.
However, on larger datasets such as \texttt{ImageNet}, these methods have limited success~\cite{shafahi2019adversarial}; recent attempts~\cite{wong2020fast,shafahi2019adversarial} that make adversarial training faster do not improve robustness either.
In~\Cref{sec:fixes}, we show that $\ell_2$ adversarial training  on synthetic datasets can improve robustness by some extent but it is unable to learn optimal large-margin $\ell_2$-robust classifiers.

{\bf Detecting OOD Examples}: Neural networks trained using standard training procedures tend to rely on low-level features and spurious correlations and hence exhibit brittleness to benign distributional changes to the data. Recent works thus aim to detect OOD examples using generative models~\cite{ren2019likelihood}, statistical tests \cite{roth2019odds}, and model confidence scores~\cite{hendrycks2016baseline,liang2017enhancing,lakshminarayanan2016simple}. Our experiments in Section 4 that validate extreme SB in practice also show that detectors that directly or indirectly rely on model scores to detect OOD examples may not work well as SGD-trained neural networks can exhibit complete invariance to predictive-but-complex features.

\clearpage
\section{Experiment Details}
\label{sec:setup}
In this section, we provide additional details on the datasets, models, optimization methods and training hyperparameters used in our experiments.

\textbf{One-dimensional Building Blocks}: We first describe the data generation process underlying each building block: linear, noisy linear, and $k$-slab. Then, we introduce a noisy version of the $5$-slab block, which we later use in~\Cref{sec:generror}.
	\begin{itemize}[leftmargin=*]
		\item \underline{Linear$(\gamma, B)$}: The linear block is parameterized by the effective margin $\gamma$ and width $B$. The distribution first samples a label $y \in \{\minus 1, 1\}$ uniformly at random, and then given $y$, $x$ is sampled as follows:
	$ x = y(B\gamma + (B-B\gamma)\cdot \text{U}(0,1), $
	where $\text{U}(0,1)$ is the uniform distribution on $[0,1]$.
		\item \underline{NoisyLinear$(\gamma, B, p)$}: The noisy linear block is parameterized by effective margin $\gamma$, width $B$, and noise parameter $p$. Linear classifiers can attain the optimal classification accuracy of $1-\sfrac{p}{2}$. Given label $y \in \{\minus 1, 1\}$ sampled uniformly at random, $x$ is sampled as follows:
			\begin{align*}
				x = \begin{cases}
 						y(B\gamma + (B-B\gamma)\cdot \text{U}(0,1) & \text{w.p. } p \\
 						\text{U}(\minus \gamma, \gamma) & \text{w.p. } 1\minus p
 					\end{cases}
			\end{align*}
		\item \underline{Slab$(\gamma, B, k)$}: The $k$-slab block is parameterized by effective margin $\gamma$, width $B$, and number of slabs $k$. We use $k \in \{3,5,7\}$ in our paper. The width of each slab, $w_k=\sfrac{2B(1-(k-1)\gamma)}{k}$, in the $k$-slab block is chosen such that the farthest points are at $\minus B$ and $B$. For example, given label $y \in \{\minus 1, 1\}$ and random sign $z \in \{\minus 1, 1\}$ sampled unif. at random, we can sample $x$ from a $3$-slab block as follows:
				\begin{align*}
					x &= \begin{cases}
 							z(\frac{1}{2}w_3 \cdot \text{U}(0,1)) & \text{if } y=\minus 1 \\
 							z(\frac{1}{2}w_3 + 2B\gamma + w_3 \cdot \text{U}(0,1)) & \text{if } y=+1
 						 \end{cases}
				\end{align*}
	For $k$-slab blocks with $k\in \{5,7\}$, the probability of sampling from the two slabs (one on each side) that are farthest away from the origin are $\sfrac{1}{4}$ and $\sfrac{1}{8}$ respectively to ensure that the variance of instances in positive and negative classes, $x_+$ and $x_-$, are equal. 
		\item \underline{NoisySlab$(\gamma, B, k, p)$}: Analogous to the noisy linear block, the noisy variant of the $k$-slab block is additionally parameterized by a noise parameter $p$. In this setting, a $(k\minus 1)$-piecewise linear classifier can attain the optimal classification accuracy of $1-\sfrac{p}{2}$. For example, For example, given label $y \in \{\minus 1, 1\}$ and random sign $z \in \{\minus 1, 1\}$ sampled uniformly at random, we can sample $x$ from a $p$-noisy $3$-slab block as follows:
			\begin{align*}
				x = \begin{cases}
						\begin{cases}
							z(\frac{1}{2}w_3 \cdot \text{U}(0,1)) & \text{if } y=\minus 1 \\ 
							z(\frac{1}{2}w_3 + 2B\gamma + w_3 \cdot \text{U}(0,1)) & \text{if } y=+1 
						\end{cases} & \text{w.p. } 1-p \\ 
						z(\frac{1}{2}w_3 + (2B\gamma - \frac{1}{2}w_3) \cdot \text{U}(0,1)) & \text{w.p. } p
				\end{cases}
			\end{align*}
			
	\end{itemize}

\textbf{Datasets}: We now outline the default hyperparameters for generating the synthetic datasets used in the paper, provide additional details on the \lsn dataset, and introduce two additional synthetic datasets as well as multiple versions of the \mnistcifar dataset (i.e., with different class pairs).
\begin{itemize}[leftmargin=*]
	\item \underline{Synthetic Dataset Hyperparameters}: Recall that we use four $d$-dimensional synthetic datasets---\lmsk, \nlmsk, \msfiveseven, and \msfive---wherein each coordinate corresponds to one of the building blocks described above. Unless mentioned otherwise, for all four datasets, we set the effective margin parameter $\gamma=0.1$, width parameter $B=1$, and noise parameter $p=0.1$ in all blocks/coordinates. Also recall that each dataset comprises at most one ``simple" feature \SF and multiple independent complex features \SFc. In our experiments, all datasets have sample sizes that are large enough for all models considered in the paper to learn complex features \SFc and attain optimal test accuracy, even in the absence of \SF; we use sample sizes of $50000$ for \lmsfive and \msfive and $40000$ for \nlmsseven.
	\item \underline{\lsn Dataset}: Recall that the \lsn dataset (described in Section 3) is a stylized version of the \lmsk that is amenable to theoretical analysis. In \lsn, conditioned on the label $y$, the first and second coordinates of $x$ are \emph{singleton} linear and $3$-slab blocks: linear and $3$-slab blocks have support on $\{\minus 1,1\}$ and $\{\minus 1,0,1\}$ respectively. The remaining coordinates are standard gaussians and not predictive of the label. Each data point $(x_i,y_i) \in \Re^d \times \{-1,1\}$ can be sampled as follows:
			\begin{align*}
				y_i &= \pm 1,\ \mbox{ w.p. } 1/2, \ \ \ \varepsilon_i = \pm 1,\  \mbox{ w.p. } 1/2,\\
				x_{i1} &= y_i & \text{(Linear coordinate)},\\
				x_{i2} &= \Big(\frac{y_i+1}{2}\Big) \varepsilon_i & \text{(Slab coordinate)},\\
				x_{i,3:d} &\sim \mathcal{N}(0, I_{d-2}) & \text{($d \minus 2$ Noise coordinates)}.
			\end{align*}
	\item \underline{Additional Datasets}: We now introduce \nmsfiveseven, the noisy version of \msfiveseven, and three \mnistcifar datasets, each with different \mnist and \cifar classes. 
		\begin{itemize}[leftmargin=*]
			\item \underline{\nmsfiveseven}: Noisy 5-slab and multiple noiseless 7-slab blocks; the first coordinate is a noisy 5-slab block and the remaining $d\minus 1$ coordinates are independent $7$-slab blocks. Note that this dataset comprises a noisy-but-simpler $5$-slab block and multiple noiseless $7$-slab blocks; a $6$-piecewise linear classifier can attain 100\% accuracy by learn any $7$-slab block. 
			\item \underline{\mnistcifar datasets}: Recall that images in the \mnistcifar datasets are concatenations of \mnist and \cifar images. We introduce additional variants of the \mnistcifar using different class pairs to show that our results in the paper are robust to the exact choice of pairs:
				\begin{table}[h]
					\def\arraystretch{.95}
					\centering
					\begin{tabular}{rccccc}
						\toprule
						\multirow{2}{*}{Datasets} & \multicolumn{2}{c}{Class $\minus 1$} & & \multicolumn{2}{c}{Class $+1$}  \\ \cmidrule{2-3} \cmidrule{5-6}  
						& \mnist & \cifar & & \mnist & \cifar \\ \midrule
						\mnistcifara & Digit $0$ & Automobile & & Digit $1$ & Truck \\
						\mnistcifarb & Digit $1$ & Automobile & & Digit $4$ & Truck \\
						\mnistcifarc & Digit $0$ & Airplane & & Digit $1$ & Ship \\ \bottomrule
					\end{tabular}
					\vspace{5px}
					\caption{Three \mnistcifar datasets. We use \mnistcifara in the paper. In \mnistcifarb, we use different \mnist classes: digits 1 and 4. In \mnistcifarc, we use different \cifar classes: airplane and ship. Our results in Section 4 hold on all three \mnistcifar datasets.}
					\label{table:mnistcifar}
				\end{table}

		\end{itemize}
\end{itemize}

\vspace{-15px}
\textbf{Models}: Here, we briefly describe the models (and its abbreviations) used in the paper. We use fully-connected (\texttt{FCN}s), convolutional (\texttt{CNN}s), and sequential neural networks (\texttt{GRU}s \cite{cho2014learning}) on synthetic datasets. Abbreviations $(w,d)$-\texttt{FCN} denotes \texttt{FCN} with width $w$ and depth $d$, $(f,k,d)$-\texttt{CNN} denotes $d$-layer \texttt{CNN}s with $f$ filters of size $k \times k$ in each layer with and $(h,l,d)$-\texttt{GRU} denotes $d$-layer $d$-layer \texttt{GRU} with input dimensionality $l$ and hidden state dimensionality $h$. On \mnistcifar, we train MobileNetV2~\cite{sandler2018mobilenetv2}, GoogLeNet~\cite{szegedy2015going}, ResNet50~\cite{he2015deep} and DenseNet121~\cite{huang2016densely}. 

\textbf{Training Procedures}:  Unless mentioned otherwise, we use the following hyperparameters for standard training and adversarial training on synthetic and \mnistcifar data:
\begin{itemize}[leftmargin=*]
	\item \underline{Standard Training}: On synthetic datasets, we use Stochastic Gradient Descent (SGD) with (fixed) learning rate $0.1$ and batch size $256$, and $\ell_2$ regularization $5 \cdot 10^{-7}$. On \mnistcifar datasets, we use SGD with initial learning rate $0.05$ with decay factor of $0.2$ every $30$ epochs, momentum $0.9$ and $\ell_2$ regularization $5\cdot 10^{-5 }$. We do not use data augmentation. We run all models for at most 500 epochs and stop early if the training loss goes below $10^{-2}$. 
	\item \underline{Adversarial Training}: We use the same SGD hyperparameters (as described above) on synthetic and \mnistcifar datasets. We use Projected Gradient Descent (PGD) Adversarial Training \cite{madry2017towards} to adversarially train models. We use learning rate $0.1$ and $40$ iterations to generate $\ell_2$ \& $\ell_{\infty}$ perturbations
\end{itemize}

\clearpage
\section{Additional Results on the Extreme Nature of Simplicity Bias (SB)}
\label{sec:ood}
Recall that Section 4 of the paper establishes the extreme nature of SB: \emph{If all features have full predictive power, NNs rely exclusively on the simplest feature \SF and remain invariant to all complex features \SFc}---in Section 4 of the paper.  Now, we further validate the extreme nature of SB across model architectures, datasets, optimizers, activation functions and regularization. We also analyze the effect of input dimensionality, number of complex features, choice and scaling of random initialization and non-random initialization.

\subsection{Effect of Model Architecture}

In this section, we supplement our results in Section 4 of the paper by showing that extreme simplicity bias (SB) persists across several model architectures and on synthetic as well as image-based datasets.
In~\Cref{table:ood_synthetic}, we present \{\SF,\SFc\}-Randomized AUCs for FCNs, CNNs and GRUs with depth \{1,2\} trained on \lms and \msfiveseven datasets and state-of-the-art CNNs trained on \mnistcifara. 
While the \SFc-randomized AUC equals $1.00$ (perfect classification), we see that the \SF-randomized AUCs are approximately $0.5$ for all models. 
This is because all models essentially only rely on the simplest feature \SF and remain invariant to all complex features \SFc, even though all features have equal predictive power.

\begin{table}[h]
	\centering 
	\def\arraystretch{1.1}
\scriptsize
\begin{tabular}{ccclcc}
		  \toprule
      \multirow{2}{*}{Dataset} & \multirow{2}{*}{Set $\mathtt{S}$} &  \multirow{2}{*}{Set $\mathtt{S}^c$} & \multirow{2}{*}{Model} & \multicolumn{2}{c}{Randomized AUC} \\ \cmidrule{5-6}
       & & & & Set $\mathtt{S}$ & Set $\mathtt{S^c}$ \\ \midrule
      \multirow{6}{*}{\texttt{LMS-5}}   & \multirow{6}{*}{\texttt{Linear}} & \multirow{6}{*}{\texttt{5-Slabs}} & {\texttt{(100,1)-FCN}} & $0.50$ & $1.00$  \\
        & &  & {\texttt{(100,2)-FCN}} & $0.49$ & $1.00$ \\
        & &  & {\texttt{(32,7,1)-CNN}} & $0.50$ & $1.00$  \\
        & &  & {\texttt{(32,7,2)-CNN}} & $0.50$ & $1.00$  \\
        & &  & {\texttt{(100,10,1)-GRU}} & $0.51$ & $1.00$  \\
        & &  & {\texttt{(100,10,2)-GRU}} & $0.50$ & $1.00$  \\ \cmidrule{1-6}
      \multirow{6}{*}{\texttt{MS-(5,7)}}   & \multirow{6}{*}{\texttt{5-Slab}} & \multirow{6}{*}{\texttt{7-Slabs}}  & {\texttt{(100,1)-FCN}} & $0.50$ & $1.00$  \\
        & &  & {\texttt{(100,1)-FCN}} & $0.50$ & $1.00$ \\
        & &  & {\texttt{(32,7,1)-CNN}} & $0.50$ & $1.00$  \\
        & &  & {\texttt{(32,7,2)-CNN}} & $0.50$ & $1.00$  \\
        & &  & {\texttt{(100,10,1)-GRU}} & $0.50$ & $1.00$  \\
        & &  & {\texttt{(100,10,2)-GRU}} & $0.50$ & $1.00$  \\  \cmidrule{1-6}
      \multirow{3}{*}{\mnistcifara}   & \multirow{4}{*}{\shortstack{\texttt{MNIST}\\block}} & \multirow{4}{*}{\shortstack{\texttt{CIFAR}\\block}} & {\texttt{MobileNetV2}} & $0.52$ & $1.00$ \\
        & &  & {\texttt{GoogLeNet}} & $0.51$ & $1.00$ \\
        & &  & {\texttt{ResNet50}} & $0.50$ & $1.00$ \\ 
        & &  & {\texttt{DenseNet121}} & $0.52$ & $1.00$ \\  \bottomrule
        
  \end{tabular}	
	\vspace{5px}
	\caption{Extreme SB across models trained on synthetic and image-based datasets show that all models exclusively rely on the simplest feature \SF and remain completely invariant to all complex features \SFc}
	\label{table:ood_synthetic}
\end{table}

\vspace{-17px}
\subsection{Effect of \mnistcifar Class Pairs}
\label{sec:actopt}

In this section, we supplement our results on \mnistcifar (in Section 4) in
order to show that extreme SB observed in  
MobileNetV2~\cite{sandler2018mobilenetv2}, GoogLeNet~\cite{szegedy2015going}, ResNet50~\cite{he2015deep} and DenseNet121~\cite{huang2016densely}
does not depend on the exact choice of \mnist and \cifar class pairs used to construct the \mnistcifar datasets.
To do so, we evaluate the \mnist-randomized and \cifar-randomized metrics of the aforementioned models on three datasets--\mnistcifara, \mnistcifarb, \mnistcifarc---described in~\Cref{sec:setup}.
\begin{table}[h]
	\def\arraystretch{1.01}
	\centering 
		\resizebox{\textwidth}{!}{
		\begin{tabular}{rccccccccccc}
		\toprule
		\multirow{2}{*}{Model} & \multicolumn{3}{c}{\mnistcifara AUCs} &  & \multicolumn{3}{c}{\mnistcifarb AUCs} &  & \multicolumn{3}{c}{\mnistcifarc AUCs} \\ \cline{2-4} \cmidrule{6-8} \cmidrule{10-12}
		                       & Standard  & \shortstack{\cifar\\Randomized}  & \shortstack{\mnist\\Randomized}           &  & Standard   & \shortstack{\cifar\\Randomized}          & \shortstack{\mnist\\Randomized}            &  & Standard    & \shortstack{\cifar\\Randomized}         & \shortstack{\mnist\\Randomized}            \\ \midrule
		       
\texttt{MobileNetV2} 
	& $1.00 \pm 0.00$ & $1.00 \pm 0.00$ & $0.53 \pm 0.01$  
	& & $1.00 \pm 0.00$ & $1.00 \pm 0.00$ & $0.53 \pm 0.02$
	& & $1.00 \pm 0.00$ & $1.00 \pm 0.00$ & $0.50 \pm 0.01$ \\
\texttt{GoogLeNet} 
	& $1.00 \pm 0.00$ & $1.00 \pm 0.00$ & $0.52 \pm 0.02$ 
	& & $1.00 \pm 0.00$ & $1.00 \pm 0.00$ & $0.50 \pm 0.01$
	& & $1.00 \pm 0.00$ & $1.00 \pm 0.00$ & $0.53 \pm 0.01$ \\
\texttt{ResNet50} 
	& $1.00 \pm 0.00$ & $1.00 \pm 0.00$ & $0.50 \pm 0.01$
	& & $1.00 \pm 0.00$ & $1.00 \pm 0.00$ & $0.51 \pm 0.01$
	& &  $1.00 \pm 0.00$ & $1.00 \pm 0.00$ & $0.50 \pm 0.03$ \\
\texttt{DenseNet121} 
	& $1.00 \pm 0.00$ & $1.00 \pm 0.00$ & $0.53 \pm 0.02$
	& & $1.00 \pm 0.00$ & $1.00 \pm 0.00$ & $0.52 \pm 0.01$
	& & $1.00 \pm 0.00$ & $1.00 \pm 0.00$ & $0.54 \pm 0.01$ \\
\bottomrule
		\end{tabular}
	}	
	\vspace{5px}
		\caption{(Extreme SB in three \mnistcifar datasets)
	Standard and randomized AUCs of four state-of-the-art CNNs trained on three \mnistcifar datasets.
	The AUC values collectively indicate that all models exclusively rely on the \mnist block.}
	\label{table:ood_mnistcifar}
\end{table}

\Cref{table:ood_mnistcifar} presents the standard, \mnist-randomized and \cifar-randomized AUC values of MobileNetV2, GoogLeNet, ResNet50 and DenseNet121 on three \mnistcifar datasets. We observe that randomizing over the simpler \mnist block is sufficient to fully degrade the predictive power of all models; for instance,
randomizing the \mnist block drops the AUC values of \texttt{ResNet50} from $1.0$ to $0.5$ (i.e., equivalent to random classifier). However, randomizing the \cifar block has no effect---standard AUC and \cifar-randomized AUCs equal $1.0$. 
In contrast, an ideal classifier that relies on \mnist \& \cifar would attain non-trivial AUC even when the \mnist block is randomized.

\subsection{Effect of Optimizers and Activation Functions}
Now, we study the effect of activation function and optimizer on extreme SB. That is, can the usage of different activation functions and optimizer encourage trained neural networks to rely on complex features \SFc in addition to the simplest feature \SF?
\begin{table}[h]
  \def\arraystretch{1.05}
  \centering
  \resizebox{\textwidth}{!}{\begin{tabular}{cccccccc}
  \toprule
  \multirow{2}{*}{\begin{tabular}[c]{@{}c@{}}Activation\\ Function\end{tabular}} & \multicolumn{3}{c}{\lmsseven}         &  & \multicolumn{3}{c}{\msfiveseven} \\ \cmidrule{2-4} \cmidrule{6-8}
& SGD & Adam & RMSProp &  & SGD & Adam & RMSProp \\ \midrule
  ReLU        & $0.499 \pm 0.001$  & $0.497 \pm 0.003$ &  $0.502 \pm 0.004$       & & $0.499 \pm 0.003$ & $0.499 \pm 0.004$ & $0.496 \pm 0.004$ \\
  Leaky ReLU  & $0.501 \pm 0.001$  & $0.497 \pm 0.003$ &  $0.501 \pm 0.005$       & & $0.499 \pm 0.005$ & $0.498 \pm 0.002$       & $0.498 \pm 0.005$ \\
  PReLU        & $0.500 \pm 0.004$  & $0.500 \pm 0.003$ &  $0.501 \pm 0.004$       & & $0.501 \pm 0.004$       & $0.496 \pm 0.003$       & $0.499 \pm 0.002$ \\
  Tanh        & $0.495 \pm 0.001$  & $0.502 \pm 0.004$ &  $0.495 \pm 0.004$       & & $0.498 \pm 0.004$       & $0.499 \pm 0.004$       & $0.498 \pm 0.002$ \\ \bottomrule
\end{tabular}
}

  \caption{(Effect of activation function and optimizers) $(100,2)$-\texttt{FCN}s with multiple activation functions---ReLU, Leaky ReLU \cite{maas2013rectifier}, PReLU \cite{he2015delving}, and Tanh---trained on \lmsfive data using common first-order optimization methods---SGD, Adam~\cite{kingma2014adam}, and RMSProp \cite{tieleman2012lecture}---exhibit extreme SB.}
	\label{table:actopt}
\end{table}

\vspace{-18px}
\Cref{table:actopt} presents the \SF-randomized AUCs of $(100,2)$-FCNs with multiple activation functions---ReLU, Leaky ReLU \cite{maas2013rectifier}, PReLU \cite{he2015delving}, and Tanh---trained on \lmsseven and \msfiveseven datasets using multiple commonly-used optimizers: SGD, Adam~\cite{kingma2014adam},and RMSProp \cite{tieleman2012lecture}. We observe that for all combinations of activations and optimizers, trained FCNs still only rely on simplest feature \SF; \SF-randomized and \SFc-randomized AUCs are approximately $0.50$ and $1.0$ respectively for all optimizers and activation functions. Therefore, in addition to SGD, commonly used first-order optimization methods such as Adam and RMSProp cannot jointly learn large-margin classifiers that rely on learn slab-structured features in the presence of a noisy linear structure.
To summarize, the experiment in \Cref{sec:actopt} shows that simply altering the choice of optimizer and activation function does not have any effect on extreme SB. Similar to the experiments in Section 4 of the paper, all models exclusively rely on simplest feature \SF and remain invariant to complex features \SFc.

\subsection{Effect of $\ell_2$ Regularization and Dropout}
In this section, we use SGD-trained \texttt{FCN}s trained on \lmsseven data to examine the extent to which Dropout \cite{srivastava2013improving} and $\ell_2$ regularization alters the extreme nature of SB.
Specifically, we use Dropout probability parameter $\{0.0, 0.05, 0.10\}$ and $\ell_2$ regularization parameters $\{0.01, 0.001\}$
	when training \texttt{FCN}s with width $100$ and depth $\{1,2\}$ on \lmsseven data using SGD. 
In~\Cref{table:dropout}, we show the standard and \SFc-randomized AUCs equal $1.00$ (perfect classification), whereas the \SF-randomized AUCs are approximately $0.5$ for all models. 
Applying Dropout while reducing the amount of $\ell_2$ regularization has negligible effect on the extreme nature of SB observed in the synthetic or image-based datasets.
\begin{table}[b]
	\def\arraystretch{1.1}
	\centering
	\resizebox{1.0\textwidth}{!}{
    \def\arraystretch{1.1}
    \begin{tabular}{ccccccccccc}
    \toprule
    \multirow{2}{*}{Model} & \multirow{2}{*}{Dropout} & \multicolumn{2}{c}{Standard AUC} & & \multicolumn{2}{c}{\SF-Randomized AUC} & & \multicolumn{2}{c}{\SFc-Randomized AUC} \\ \cmidrule{3-4} \cmidrule{6-7} \cmidrule{9-10}
    & & $\lambda=10^{-2}$ & $\lambda=10^{-4}$ & &
        $\lambda=10^{-2}$ & $\lambda=10^{-4}$ & &
        $\lambda=10^{-2}$ & $\lambda=10^{-4}$ \\ \midrule
\multirow{3}{*}{\texttt{(100,1)-FCN}}
    & $0.00$  & $1.00 \pm 0.00$  &  $1.00 \pm 0.00$  & &
               $0.50 \pm 0.00$  &  $0.50 \pm 0.01$  & &
               $1.00 \pm 0.00$  &  $1.00 \pm 0.00$ \\
    & $0.05$  & $1.00 \pm 0.00$  &  $1.00 \pm 0.00$  & &
               $0.50 \pm 0.00$  &  $0.50 \pm 0.00$  & &
               $1.00 \pm 0.00$  &  $1.00 \pm 0.00$  \\
    & $0.10$  & $1.00 \pm 0.00$  &  $1.00 \pm 0.00$  & &
               $0.50 \pm 0.00$  &  $0.50 \pm 0.00$  & &
               $1.00 \pm 0.00$  &  $1.00 \pm 0.00$   \\
               \cmidrule{1-10}
\multirow{3}{*}{\texttt{(100,2)-FCN}}
    & $0.00$  & $1.00 \pm 0.00$  &  $1.00 \pm 0.00$  & &
               $0.50 \pm 0.00$  &  $0.50 \pm 0.00$  & &
               $1.00 \pm 0.00$  &  $1.00 \pm 0.00$   \\
    & $0.05$  & $1.00 \pm 0.00$  &  $1.00 \pm 0.00$  & &
               $0.50 \pm 0.00$  &  $0.50 \pm 0.00$  & &
               $1.00 \pm 0.00$  &  $1.00 \pm 0.00$   \\
    & $0.10$  & $1.00 \pm 0.00$  &  $1.00 \pm 0.00$  & &
               $0.50 \pm 0.00$  &  $0.50 \pm 0.00$  & &
               $1.00 \pm 0.00$  &  $1.00 \pm 0.00$  \\
    \bottomrule
    \end{tabular}
}

	\vspace{5px}
	\caption{Dropout and $\ell_2$ regularization have no effect on extreme SB of \texttt{FCN}s trained on \lmsseven datasets. The standard and \{\SF,\SFc\}-randomized AUC values of \texttt{(100,1)-FCN}s and \texttt{(100,2)-FCN}s collectively indicate that the models still exclusively latch on to \SF (linear block) and remain invariant to \SFc (7-slab blocks). }
	\label{table:dropout}
\end{table}

\subsection{Effect of Input Dimension and Number of Complex Features}
In this section, we evaluate the performance of \texttt{FCN}s trained on \lmsseven data using SGD to show the extreme SB persists in the low-dimensional setting ($d < 10$) and also with varying number of $7$-slab features $1 \leq |\mathtt{S}^{c}| \leq d$. 
\begin{table}[t]
	\def\arraystretch{1.1}
	\centering
	\resizebox{1.0\textwidth}{!}{
    \def\arraystretch{1.1}
    \begin{tabular}{ccccc}
    \toprule
	Input Dimension $d$ & Number of $7$-Slabs $|\mathtt{S}^c|$ & Standard AUC & \SF-Randomized AUC & \SFc-Randomized AUC \\ \midrule
	$2$ & $1$ & $1.00 \pm 0.00$ & $0.51 \pm 0.02$ & $1.00 \pm 0.00$ \\ \cmidrule{1-5}
	\multirow{2}{*}{$5$} 
		& $1$ & $1.00 \pm 0.00$ & $0.50 \pm 0.00$ & $1.00 \pm 0.00$ \\
		& $3$ & $1.00 \pm 0.00$ & $0.50 \pm 0.00$ & $1.00 \pm 0.00$ \\ \cmidrule{1-5}
	\multirow{3}{*}{$10$} 
		& $1$ & $1.00 \pm 0.00$ & $0.50 \pm 0.00$ & $1.00 \pm 0.00$ \\
		& $3$ & $1.00 \pm 0.00$ & $0.50 \pm 0.01$ & $1.00 \pm 0.00$ \\
		& $6$ & $1.00 \pm 0.00$ & $0.50 \pm 0.00$ & $1.00 \pm 0.00$ \\		 
	\bottomrule
	\end{tabular}
}

	\vspace{5px}
	\caption{Effect of input dimension and number of complex features on extreme SB of \texttt{FCN}s trained on \lmsseven data. The standard and \{\SF,\SFc\}-randomized AUCs of \texttt{(100,1)-FCN}s collectively indicate that the models exclusively latch on to \SF (linear) and remain invariant to \SFc (7-slabs).}
	\label{table:dimnum}
\end{table}
As shown in~\Cref{table:dimnum}, decreasing the input dimension $d$ or the number of complex features $|\mathtt{S}^{c}|$ has no effect on extreme SB of \texttt{FCN}s trained on the \lmsseven dataset. 
Similar to our results in~\Cref{fig:oodnew}, the standard and randomized AUCs collectively show that the SGD-trained \texttt{(100,1)-FCN}s exclusively rely on the linear component and do not rely on the $7$-slab coordinates.

\subsection{Effect of Random Initialization Scale}
Now, we analyze the effect of the choice and scale (i.e., magnitude of the weights of randomly initialized \texttt{FCN}s) of random initialization on simplicity bias using \texttt{FCN}s trained on \lmsseven data.
\begin{table}
	\def\arraystretch{1.1}
	\centering
	\resizebox{0.85\textwidth}{!}{
    \def\arraystretch{1.1}
    \begin{tabular}{ccccc}
    \toprule
	Initialization & Scaling Factor & Standard AUC & \SF-Randomized AUC & \SFc-Randomized AUC \\ \midrule
	\multirow{4}{*}{Kaiming} 
		& 0.1 & $1.00 \pm 0.00$ & $0.50 \pm 0.00$ & $1.00 \pm 0.00$ \\ 
		& 0.5 & $1.00 \pm 0.00$ & $0.49 \pm 0.01$ & $1.00 \pm 0.00$ \\ 
		& 2.0 & $1.00 \pm 0.00$ & $0.50 \pm 0.00$ & $1.00 \pm 0.00$ \\ 
		& 10.0 & $1.00 \pm 0.00$ & $0.50 \pm 0.00$ & $1.00 \pm 0.00$ \\ \cmidrule{1-5} 
	\multirow{4}{*}{Xavier} 
		& 0.1 & $1.00 \pm 0.00$ & $0.50 \pm 0.00$ & $1.00 \pm 0.00$ \\ 
		& 0.5 & $1.00 \pm 0.00$ & $0.50 \pm 0.00$ & $1.00 \pm 0.00$ \\ 
		& 2.0 & $1.00 \pm 0.00$ & $0.49 \pm 0.01$ & $1.00 \pm 0.00$ \\ 
		& 10.0 & $1.00 \pm 0.00$ & $0.51 \pm 0.01$ & $1.00 \pm 0.00$ \\ 
	\bottomrule
	\end{tabular}
}

	\vspace{5px}
	\caption{Effect of choice (Kaiming and Xavier) and scale of random initialization on simplicity bias of \texttt{FCN}s trained on \lmsseven data. The AUCs of \texttt{(100,1)-FCN}s collectively show increasing the scale of random initialization does not alleviate extreme simplicity bias in this setting.}
	\label{table:randinit}
\end{table}

As shown in~\Cref{table:randinit}, the choice (Kaiming and Xavier) and the scale of random initialization do not alter the extreme SB phenomenon on the \lmsseven dataset. That is, scaling the randomly initialized models by up to $0.1$ and $10.0$ has no effect on simplicity bias---SGD-trained \texttt{(100,1)-FCN}s exclusively rely on the linear component and do not rely on the $7$-slab coordinates.

\subsection{Effect of Non-random Initialization}
In this section, we investigate the effect of non-random initialization on simplicity bias using \texttt{FCN}s trained on \msseven and \lmsseven data. 
The goal of this experiment is to determine the extent to which simplicity bias persists when the untrained network (at timestep $t=0$) attains non-random standard accuracy by relying on one or more ``complex" $7$-slab features.

To vary the degree of non-random initialization $\alpha$, we obtain model $\mathtt{M}_{\alpha}$ by linearly interpolating the weights of a randomly initialized network $\mathtt{M}_{\text{rand}}$ and a network $\mathtt{M}_{\text{slab}}$ that exclusively relies on one or more ``complex" $7$-slab features to attain $100\%$ accuracy on the \lmsseven dataset. That is, $\mathtt{M}_{\alpha} \equiv \alpha \cdot \mathtt{M}_{\text{slab}} + (1-\alpha) \cdot \mathtt{M}_{\text{rand}}$. Note that $\mathtt{M}_{\text{slab}}$ is trained on \msseven data to attain $100\%$ standard accuracy by relying on one or more $7$-slab coordinates. As shown in~\Cref{fig:interp_acc}(a), increasing the interpolation constant $\alpha$ monotonically increases the standard accuracy of $\mathtt{M}_{\alpha}$ on \msseven data. 

Now, we use the linearly interpolated model $\mathtt{M}_{\alpha}$ (for varying values of $\alpha$) as initialization and train $\mathtt{M}_{\alpha}$ on \lmsseven data, which additionally consists of a ``simple" linearly-separable coordinate.
To maintain the input dimensionality, we obtain \lmsseven data by replacing a $7$-slab coordinate by the simpler linear coordinate. 
In order to maintain the non-random accuracy of $\mathtt{M}_{\alpha}$, we use coordinate-randomized AUCs to choose and replace a $7$-slab coordinate that the model does not depend on. 
	
\begin{figure}
	\subfloat[]{\includegraphics[width=0.33\linewidth]{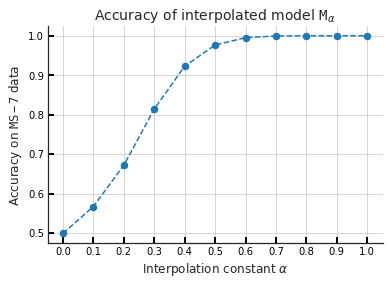}}
	\subfloat[]{\includegraphics[width=0.33\linewidth]{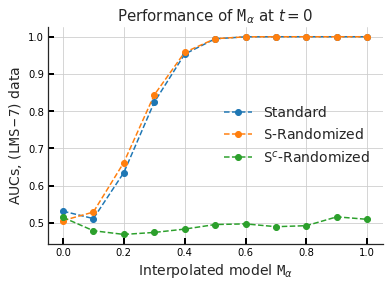}}
	\subfloat[]{\includegraphics[width=0.33\linewidth]{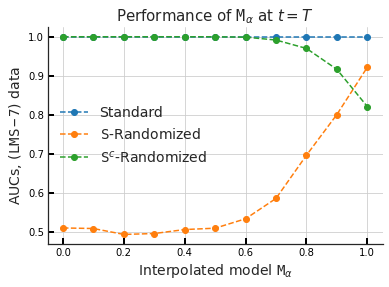}}
	\caption{
	Effect of non-random initialization on simplicity bias. Subplot (a) standard test accuracy (on \msseven data) of linearly-interpolated model $\mathtt{M}_{\alpha} \equiv \alpha \cdot \mathtt{M}_{\text{slab}} + (1-\alpha) \cdot \mathtt{M}_{\text{rand}}$ increases monotonically with the interpolation constant $\alpha$. Subplots (b) and (c) show how standard, \SF-randomized and \SFc-randomized AUCs vary with interpolated model $\mathtt{M}_{\alpha}$ before and after training on \lmsseven data. 
	}
	\label{fig:interp_acc}
\end{figure}

As expected,~\Cref{fig:interp_acc}(b) shows that prior to training on \lmsseven data, the interpolated models $\mathtt{M}_{\alpha}$ attain \SFc-randomized AUC $0.50$ and \SFc-randomized accuracy equals standard AUC because the models exclusively rely on one or more $7$-slab coordinates. However, as shown in~\Cref{fig:interp_acc}(c), after training $\mathtt{M}_{\alpha}$ on \lmsseven data, the models no longer exhibit exclusive reliance on $7$-slab features. In particular, when $\alpha \leq 0.5$, the models now exclusively rely on the linear coordinate only. When $\alpha \geq 0.5$, the dependence on the $7$-slab coordinates is considerably reduced. Surprisingly, even when $\alpha=1.0$, we observe that the model additionally relies on the linear coordinate, possibly due to non-zero training loss at initialization. These results collectively suggest that in this setting, non-random initialization by first training the model on complex features only can be effective in mitigating extreme simplicity bias to some extent.

\clearpage
\section{Additional Results on the Effect of Extreme SB on Generalization}
\label{sec:generror}
Recall that in Section 5 of the paper,  we showed that extreme SB can result in suboptimal generalization of SGD-trained models on the same data distribution.
In this section, we present additional information about the experiment setup used in Section 5 to show that SB can worsen standard generalization.

\subsection{Experiment Setup in Section 5}
\label{subsec:geninfo}

We now provide additional information about the experimental setup used in~\Cref{sec:gen} of the paper, where we show that extreme simplicity bias can result in suboptimal generalization.
We train fully-connected networks (\texttt{FCN}s) of width $\{100,200,300\}$ and depth $\{1,2\}$ using SGD on $50$-dimensional \nlmsseven dataset of $40000$ samples, which comprises of a noisy linear coordinate ($10\%$ noise) and $49$ $7$-slab coordinates. 
For each model architecture, we perform a grid search over SGD hyperparameters---learning rate, batch size, momentum, and weight decay---and report standard and randomized test accuracies of the model (in~\Cref{table:nlms}) that perform best on a \nlmsseven validation dataset.
We perform a grid search over the following SGD hyperparameters:
\begin{itemize}[leftmargin=*]
	\item Learning rate: $\{0.001, 0.01, 0.05, 0.1, 0.3\}$
	\item Batch size: $\{4, 16, 64, 256\}$
	\item Weight decay: $\{0, 0.00005, 0.0005\}$
	\item Momentum: $\{0, 0.9, 0.95\}$
\end{itemize}
We train all models for at most $10$ million updates with constant learning rate schedule and stop early if the training loss diverges or goes below $0.01$. In this setting, $10$ million updates is equivalent to $1000$ epochs with batch size $4$ and $64000$ epochs with batch size $256$.
As mentioned in~\Cref{sec:gen}, the training sample size of $40000$ data points is large enough for \texttt{FCN}s of depth $\{1,2\}$ and width $\{100,200,300\}$ trained on \msseven data (i.e., \SFc \textit{only}, after removing \SF from data) to attain $\approx 100\%$ test accuracy using SGD with \textit{learning rate $0.3$, batch size $256$, weight decay $0.0005$, and momentum $0.9$}. 
The optimal hyperparameters for \texttt{FCN}s trained on \nlmsseven data are provided in~\Cref{table:hparams}. 
Note that we do not consider $\ell_2$ weight decay values larger than $0.0005$ because it results in $95\%$ test and train accuracy even after $10$ million updates by preventing \texttt{FCN}s from overfitting to the noise in the linear component.
Also note that \texttt{(100,1)-FCN}s are not able to completely overfit due to insufficient representation capacity when trained with the chosen SGD hyperparameters (see~\Cref{table:hparams}).

\begin{table}[h]
	\def\arraystretch{1.1}
	\centering
	\centering
\def\arraystretch{1.1}
\resizebox{1.\linewidth}{!}{%
\begin{tabular}{r|ccccccc}
\toprule
	Hyperparameter & \texttt{(100,1)-FCN} & \texttt{(200,1)-FCN} & \texttt{(300,1)-FCN} & \texttt{(100,2)-FCN} & \texttt{(200,2)-FCN} & \texttt{(300,2)-FCN} \\ \midrule
	Learning rate &  $0.30$ &  $0.10$ &  $0.01$ &  $0.30$ &  $0.10$ & $1.00$ \\
	Batch size & $16$ & $256$ & $16$ & $64$ & $64$ & $256$ \\
	Weight decay & $0.0$ & $0.0$ & $0.0$ & $0.0$ & $0.0005$ & $0.0005$ \\
	Momentum & $0.90$ & $0.0$ & $0.0$ & $0.0$ & $0.0$ & $0.0$ \\ \bottomrule
\end{tabular}%
}

	\vspace{5px}
	\caption{SGD hyperparameters of SGD-trained \texttt{FCN}s trained on \nlmsseven data that result in the best validation accuracy out of all $180$ hyperparameter combinations listed in~\Cref{subsec:geninfo}.}
	\label{table:hparams}
\end{table}

\clearpage
\section{Can we mitigate Simplicity Bias?}
\label{sec:fixes}
In this section, we investigate whether standard approaches for improving generalization error and adversarial robustness---ensembles and adversarial training---help in mitigating SB.

\subsection{Ensemble Methods}
We now study the extent to which ensembles mitigate SB and its adverse effect on generalization.
Specifically, we evaluate the performance of ensembles of fully-connected networks (\texttt{FCN}s) that are  trained on two datasets: \nlmsseven and \msfive.
Recall that the \nlmsseven data comprises one simple-but-noisy linear coordinate and multiple relatively complex $7$-slab coordinates that have no noise, whereas \msfive data comprises multiple noiseless $5$-slab coordinates only.
\vspace{-15px}
\begin{figure}[h]
	\subfloat[
	]{\includegraphics[width=0.29\linewidth]{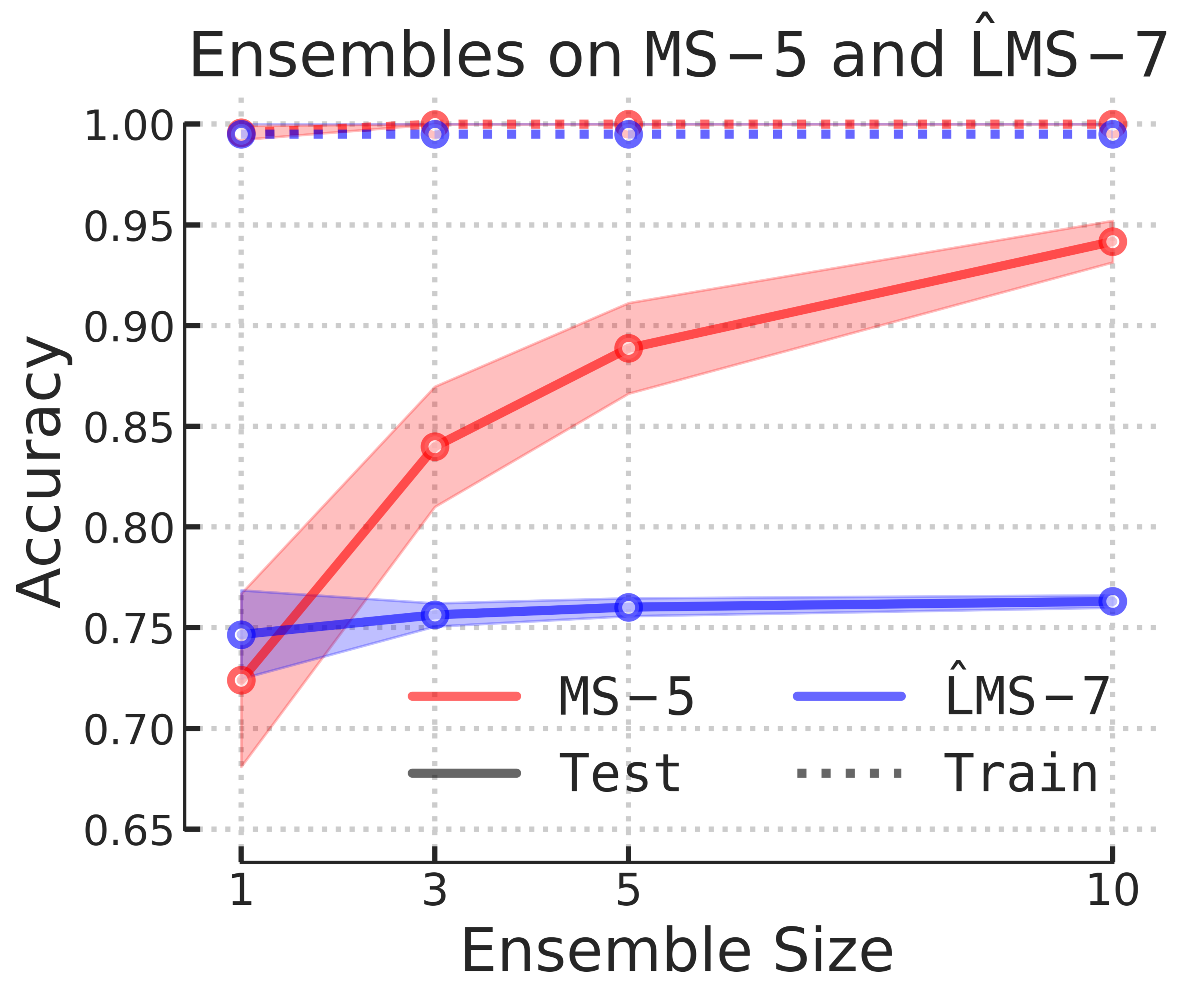}}
	\subfloat[
	]{\includegraphics[width=0.71\linewidth]{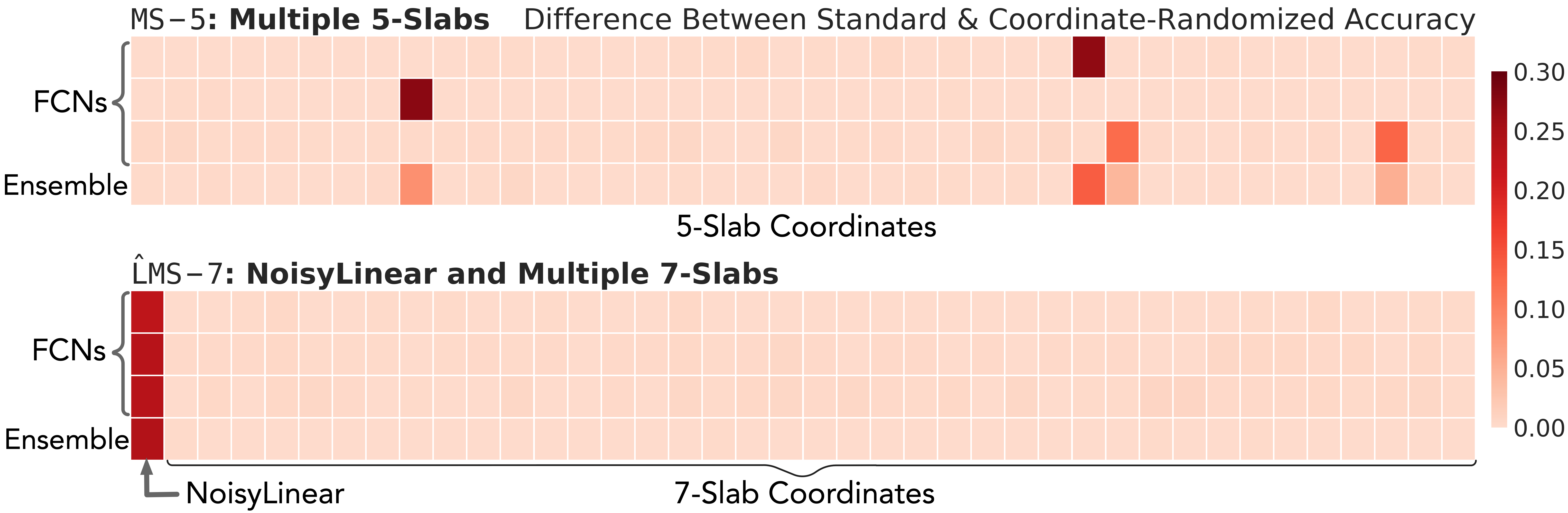}}
	\vspace{-10px}
	\caption{Ensembles improve performance on \msfive data that comprises features with equal predictive power and simplicity. However, as shown in (a), ensembles do not improve performance on \nlmsseven data that has a simple-but-noisy linear coordinate that has less predictive power than than the $7$-slab coordinates; this is because individual \texttt{FCN}s trained on \nlmsseven data exclusively rely on the noisy linear coordinate and consequently misclassify the same set of instances, as shown in subplot (b).}
	\label{fig:fixes}
\end{figure}

\vspace{-15px}
To better highlight the effect of ensembles on generalization, we choose a sample size (for both datasets) such that individual models (a) overfit to training data (i.e., non-zero generalization gap) but (b) still attain non-trivial test accuracy. We now discuss the performance of ensembles of independently trained models on \msfive and \nlmsseven datasets:
\begin{itemize}[leftmargin=*]
\item \textbf{\msfive data}: Recall that \msfive data comprises multiple independent 5-slab blocks, one in each coordinate, that have equal simplicity and predictive power. Thus, since all features have equal simplicity, independent SGD-trained \texttt{(100,2)-FCN} end up relying on different $5$-slab coordinates due to random initialization, as shown in~\Cref{fig:fixes}(b). As the training sample size is small, \texttt{FCN}s overfit to the training data and  attain approximately $75\%$ test accuracy, as shown in~\Cref{fig:fixes}. Consequently, as shown in~\Cref{fig:fixes}, ensembles of these models rely on all $5$-slab coordinates learned by the individual models and attain better test accuracy by aggregating model predictions and averaging out overfitting. For example,~\Cref{fig:fixes} shows that ensembles of size $5$ and $10$ improves generalization by approximately $15\%$ and $20\%$ respectively.
\item \textbf{\nlmsseven data}:	 Recall that \nlmsfive data comprises one simple-but-noisy linear block (with $50\%$ noise) and multiple independent $7$-slab blocks that have no noise. Now, due to extreme SB, every independently trained \texttt{FCN} exclusively latches on (and overfits to) the simpler-but-noisy linear block, as shown in~\Cref{fig:fixes}(b). As a result, all models collectively lack diversity and essentially learn the same decision boundary because of extreme SB. Therefore, ensembles of these models do not improve generalization because the independent models make misclassifications on the same instances. As shown in~\Cref{fig:fixes}(a), ensembles of size $3$, $5$ and $10$ do not improve generalization---the test accuracy remains $75\%$. 
\end{itemize}

\vspace{-6px}
The ensemble performance on \msfive data indicates that when datasets have \emph{multiple} equally simple features, ensembles of independently trained models mitigate SB to some extent by aggregating predictions of models that rely on simple features. 
Conversely, the ensemble performance on \nlmsseven data suggests that when datasets comprise few features that the more noisy and less predictive than the rest, ensembles may not improve generalization.
Our results also suggest that the generalization improvements using ensemble methods in practice may stem from combining multiple simple-but-noisy features (such as color, texture) and not by learning complex features (such as shape). 

\subsection{Adversarial Training}
\label{sec:advtrain}
We now investigate the extent to which adversarial training~\cite{madry2017towards} mitigates SB and its adverse effect on adversarial robustness using two datasets: \mnistcifar and \advms.

\subsubsection{Adversarially training \texttt{FCN}s  on \advms data}
 Now, we first introduce \advms, a variant of the \msfiveseven dataset, and then investigate if adversarially trained \texttt{FCN}s that improve adversarial robustness by some extent also mitigate extreme simplicity bias (SB).

\underline{\advms dataset}: Recall that $d$-dimensional \msfiveseven data, introduced in~\Cref{sec:prelims}, consists of $d-1$ $7$-slab coordinates and a single relatively simpler $5$-slab coordinate, all of which have perfect predictive power.
Similar to \msfiveseven data, the $d$-dimensional \advms data comprises $5$-slab and $7$-slab coordinates. 
Specifically, the first $\sfrac{d}{2}$ coordinates correspond to independent $5$-slabs, each with effective margin $\gamma_{5}$ and the other $\sfrac{d}{2}$ coordinates correspond to independent $7$-slabs with effective margin $\gamma_{7}$. 
In contrast to \msfiveseven data, the \advms dataset (a) comprises $\sfrac{d}{2}$ $5$-slab coordinates and (b) the $5$-slabs and $7$-slabs do not necessarily share the same effective margin. In our experiments below, we set $d=20$, $\gamma_5=0.05$ and $\gamma_7=0.15$. That is, we conduct our experiments on $20$-dimensional data in which the the simple features \SF and complex features \SFc correspond to the $10$ small-margin $5$-slab and $10$ large-margin $7$-slab coordinates respectively. 

\underline{Experiment setup}: 
In~\Cref{sec:extreme}, we observed that SGD-trained \texttt{FCN}s, due to extreme SB, exclusively rely on the simplest feature \SF and consequently learn small-margin classifiers that are highly vulnerable to adversarial attacks.
To mitigate SB and attain optimal adversarial robustness, fully-connected networks with width $w$ and depth $d$, $(w,d)$-\texttt{FCN}s, must learn maximum-margin classifiers and consequently rely on \emph{all} simple and complex features (i.e., features in \SF and \SFc).  
The margin of the $20$-dimensional \advms dataset described above is approximately $\gamma_{\text{data}}=0.62$, which implies that the maximum-margin classifier should exhibit robustness to $\ell_2$ adversarial perturbations that have norm $\epsilon < \gamma_{\text{data}}$.
Therefore, to check if adversarial training mitigates extreme SB, we evaluate the robustness of adversarially trained \texttt{FCN}s against $\ell_2$ adversarial perturbations that have norm $\epsilon < \gamma_{\text{data}}$. 
In addition to $\gamma_{\text{data}}$, let $\gamma_{\mathtt{S}}=0.30$ denote the maximum margin of the classifier that exclusively relies on all $5$-slab features in \SF.

Also note that (a) adversarial perturbations are generated using PGD attacks~\cite{madry2017towards}, (b) $(200,2)$-\texttt{FCN}s and $(1000,2)$-\texttt{FCN}s are expressive enough to learn the maximum-margin classifier on the $20$-dimensional \advms data, (c) \texttt{FCN}s are adversarially trained for $4000$ epochs with initial learning rate $0.1$ that decays by a multiplicative factor of $0.1$ after every $1000$ epochs, and (d) the training data comprises $6000$ data points, which is enough for SGD-trained $(1000,2)$-\texttt{FCN}s to learn $7$-slab coordinates and attain $100\%$ generalization.

\begin{figure}[b]
	\centering
	\includegraphics[width=\linewidth]{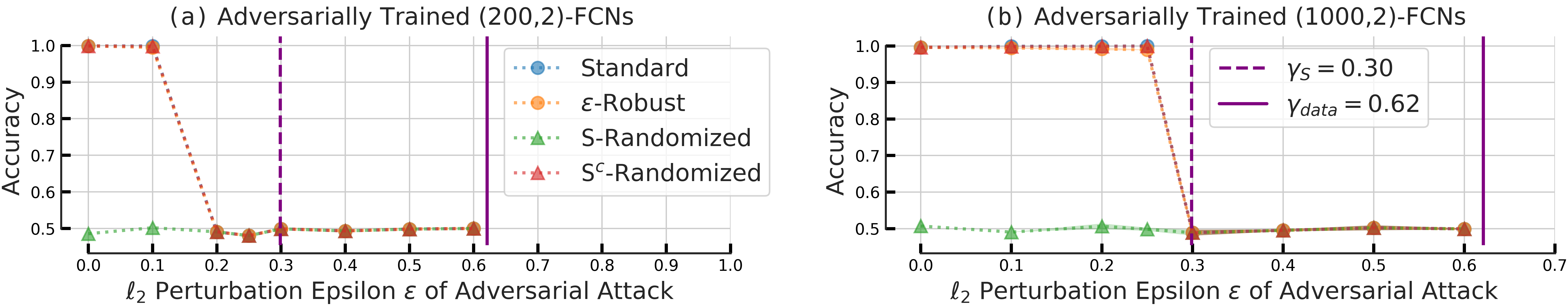}
	\vspace{-18px}
	\caption{ 
		Adversarially trained \texttt{FCN}s on \advms data exhibit adversarial robustness to some extent, but are (a) unable to mitigate extreme simplicity bias, as shown by the \{\SF,\SFc\}-randomized accuracies and (b) do not learn maximum-margin classifiers that attain optimal adversarial robustness (i.e., $100\%$ $\gamma_{\text{data}}$-robust accuracy). See~\Cref{sec:advtrain} for more detail.}
	\label{fig:adv_noise}
\end{figure}

\underline{Experiment results}:
Recall that the feature sets \SF and \SFc correspond to the set of ten $5$-slab and $7$-slab coordinates in the \advms dataset respectively. 
As shown in~\Cref{fig:adv_noise}, we evaluate the standard (blue), $\epsilon$-robust (orange), \SF-randomized (green) and \SFc-randomized (red) accuracies, defined in~\Cref{sec:prelims}, of \texttt{FCN}s with width $\{200,1000\}$ and depth $2$ that are adversarially trained with $\ell_2$ perturbation norm $\epsilon \leq \gamma_{\text{data}}$. The dashed and solid purple vertical bars denote $\gamma_{\mathtt{S}}$ and $\gamma_{\text{data}}$ respectively.
We make two key observations:
\begin{itemize}[leftmargin=*]
	\item \underline{Adversarial trained \texttt{FCN}s do not learn maximum-margin classifiers}. When $\epsilon \leq 0.1$, $(200,2)$-\texttt{FCN}s learn classifiers that attain $100\%$ standard and $\epsilon$-robust accuracies. However, when $\epsilon \geq 0.2$, due to optimization-related issues, adversarially trained $(200,2)$-\texttt{FCN}s are unable to learn a non-trivial classifier that obtains more than $50\%$ standard and $\epsilon$-robust accuracy. Increasing the model width from $200$ to $1000$ improves adversarial robustness to some extent---adversarially trained $(1000,2)$-FCNs learn classifiers with $100\%$ standard and robust accuracies when $\epsilon \leq 0.25$. However, when $\epsilon \geq \gamma_{\mathtt{S}}=0.3$ (dashed purple line), adversarially trained $(2000,2)$-\texttt{FCN}s are unable to learn $\epsilon$-robust classifiers as well. We note that further increasing the model width to $2000$ does not improve robustness. Consequently, adversarial training does not result in maximum-margin classifiers that have optimal adversarial robustness (i.e., classifiers with $100\%$ $\gamma_{\text{data}}$-robust accuracy) on \advms data. These results reconcile two phenomena observed in practice: larger capacity models can improve adversarial robustness~\cite{xie2019intriguing}, but large-epsilon adversarial training can ``fail" and result in trivial classifiers due to optimization-related issues~\cite{sitawarin2020improving}.
	\item \underline{Adversarial training does not mitigate extreme SB}. The \{\SF,\SFc\}-randomized accuracies of adversarially trained \texttt{FCN}s collectively show that adversarial training does not mitigate extreme SB. When the perturbation budget $\epsilon \leq \gamma_{\mathtt{S}}$, adversarially trained $(2000,2)$-\texttt{FCN}s exhibit robustness by \emph{exclusively} relying on \emph{multiple} ``simple" $5$-slab coordinates. That is, randomizing \SF drops the model accuracy to $50\%$, but randomizing the more complex $7$-slab coordinates has no effect on model accuracy. Conversely, when $\epsilon \geq \gamma_{\mathtt{S}}$, classifiers with $100\%$ $\epsilon$-robust accuracy must rely on features in \SF \emph{and} \SFc. However, as shown in~\Cref{fig:adv_noise}, when $\epsilon \geq \gamma_{\mathtt{S}}$, adversarial training fails and results in trivial classifiers that attain $50\%$ standard and robust accuracy.
\end{itemize}

To summarize, the experiment above shows that adversarially trained \texttt{FCN}s do not mitigate simplicity bias or learn maximum-margin classifiers that are optimally robust to $\ell_2$ adversarial attacks.

\subsubsection{Adversarially training \texttt{CNN}s on \mnistcifar data}
Recall that the \mnistcifar images, described in~\Cref{sec:prelims}, are vertically concatenations of ``simple" \mnist images and the more complex \cifar images. Next, we show that models that are $\ell_{\infty}$-adversarially trained on the \mnistcifar data improve $\epsilon$-robust accuracy (defined in~\Cref{sec:prelims}) but do not achieve the best possible $\epsilon$-robust accuracy due to extreme SB.

\Cref{table:advtrain_cifar} evaluates the standard, $\epsilon$-robust and \cifar-randomized accuracies of SGD-trained and adversarially trained MobileNetV2, ResNet50 and DenseNet121 using the \mnistcifar dataset. 
First, we observe that adversarial training with perturbation norm $0.3$ significantly improves $\epsilon$-robust accuracies over those of SGD-trained models, without degrading the models' standard test accuracies. However, the \cifar-randomized accuracies indicate that the adversarial training does not lead to reliance on the \cifar block---adversarially trained \texttt{CNN}s continue to remain invariant to the \cifar block even though it is almost fully predictive of the label. Consequently, these results suggest that adversarial training improves robustness but does not achieve the best $\epsilon$-robust accuracy on the \mnistcifar dataset.

To summarize, our experiments on \advms and \mnistcifar datasets show that while adversarial training does improve the $\epsilon$-robust accuracy over that of SGD-trained model, adversarially trained models continue to remain susceptible to extreme SB and consequently do not achieve maximum possible adversarial robustness.
\begin{table*}[h]
	\def\arraystretch{1.1}
\centering
\resizebox{\textwidth}{!}{
\begin{tabular}{cccccccccc}
\toprule
\multirow{2}{*}{Model} & \multirow{2}{*}{{$\ell_{\infty}$ budget $\varepsilon$ }} & \multicolumn{2}{c}{Test Accuracy} & & \multicolumn{2}{c}{$\epsilon$-Robust Accuracy} & & \multicolumn{2}{c}{\cifar-Randomized Accuracy} \\  \cmidrule{3-4} \cmidrule{6-7} \cmidrule{9-10}
	& & Standard SGD & $\ell_{\infty}$ Adv.  Training & &  Standard SGD & $\ell_{\infty}$ Adv.  Training & &  Standard SGD & $\ell_{\infty}$ Adv.  Training \\ \midrule
	\multirow{1}{*}{\texttt{MobileNetV2}} & $0.30$ & $0.999 \pm 0.001$ & $0.999 \pm 0.000$ & & $0.000 \pm 0.000$ & $0.991 \pm 0.000$ & & $0.493 \pm 0.005$ & $0.493 \pm 0.001$ \\
	\multirow{1}{*}{\texttt{DenseNet121}} & $0.30$ & $1.000 \pm 0.000$ & $0.999 \pm 0.000$ & & $0.000 \pm 0.000$ & $0.981 \pm 0.003$ & & $0.494 \pm 0.005$ & $0.501 \pm 0.003$ \\
	\multirow{1}{*}{\texttt{ResNet50}} & $0.30$ & $1.000 \pm 0.000$ & $0.999 \pm 0.001$ & & $0.001 \pm 0.000$ & $0.982 \pm 0.002$ & & $0.501 \pm 0.001$ & $0.499 \pm 0.002$ \\
	\bottomrule
	\end{tabular}
}

	\caption{Adversarial training on \mnistcifar: The table above presents standard, $\epsilon$-robust and \cifar-randomized accuracies of SGD-trained and adversarially trained MobileNetV2, DenseNet121 and ResNet50 models. While adversarial training significantly improves $\varepsilon$-robust accuracy, it does not encourage models to learn complex features (\cifar block in this case). The \cifar-randomized accuracies indicate that adversarially trained models do not mitigate extreme SB, as they exclusively rely on the \mnist block.}
	\label{table:advtrain_cifar}
\end{table*}

\clearpage
\section{Proof of Theorem 1}
\label{sec:proof}
In this section, we first re-introduce the data distribution and theorem. Then,
we describe the proof sketch and notation, before moving on to
the proof.

\textbf{\textbf{L}inear-\textbf{S}lab-\textbf{N}oise (\lsn) data}: The \lsn dataset is a stylized version of \lmsk that is amenable to theoretical analysis. In \lsn, conditioned on the label $y$, the first and second coordinates of $x$ are \emph{singleton} linear and $3$-slab blocks: linear and $3$-slab blocks have support on $\{\minus 1,1\}$ and $\{\minus 1,0,1\}$ respectively. The remaining coordinates are standard gaussians and not predictive of the label. Each data point $(x_i,y_i) \in \Re^d \times \{-1,1\}$ from \lsn can be sampled as follows:
\begin{align*}
y_i &= \pm 1,\ \mbox{ w.p. } 1/2, \ \ \ \varepsilon_i = \pm 1,\  \mbox{ w.p. } 1/2,\\
x_{i1} &= y_i & \text{(Linear coordinate)},\\
x_{i2} &= \Big(\frac{y_i+1}{2}\Big) \varepsilon_i & \text{(Slab coordinate)},\\
x_{i,3:d} &\sim \mathcal{N}(0, I_{d-2}) & \text{($d \minus 2$ Noise coordinates)}.
\end{align*}
According to~\Cref{thm:main} (re-stated), one-hidden-layer ReLU neural networks trained with standard mini-batch gradient descent (GD) on the \lsn dataset provably learns a classifier that \emph{exclusively} relies on the ``simple" linear coordinate, thus exhibiting simplicity bias at the cost of margin.
\begin{retheorem}
	Let $f(x) = \sum_{j=1}^{k} v_j \cdot \textrm{ReLU}(\sum_{i=1}^{d} w_{i,j} x_i)$ 	denote a one-hidden-layer neural network with $k$ hidden units and ReLU activations.
	Set $v_{j} = \pm \sfrac{1}{\sqrt{k}}$ w.p. $\sfrac{1}{2}$ $\forall j \in [k]$.
	Let $\{(x^{i},y^{i})\}^m_{i=1}$ denote i.i.d. samples from \lsn where $m \in [c d^2, d^\alpha / c]$ for some $\alpha > 2$.
	Then, given $d > \Omega(\sqrt{k}\log k)$ and initial $w_{ij} \sim \mathcal{N}(0,\frac{1}{dk \log^4 d})$, after $O(1)$ iterations, mini-batch gradient descent (over $w$) with hinge loss, {step size $\eta = \Omega{{(\log d)^{\sfrac{\minus 1}{2}}}}$}, mini-batch size $\Theta(m)$, satisfies:
	\vspace{-7.5px}
	\begin{itemize}[leftmargin=*]
		\itemsep 0pt
		\topsep 0pt
		\parskip 0pt
		\item Test error is at most $\sfrac{1}{\textrm{poly}(d)}$
		\item The learned weights of hidden units $w_{ij}$ satisfy:
		\small
		$$ \underbrace{\abs{w_{1j}} = {\frac{2}{\sqrt{k}} \left(1-\frac{c}{\sqrt{\log d}}\right) + \order{\frac{1}{\sqrt{dk} \log d}}}}_{{\textstyle \textbf{Linear Coordinate}}} ,\; \underbrace{\abs{w_{2,j}} = {\order{\frac{1}{\sqrt{dk} \log d}}}}_{{\textstyle \textbf{3-Slab Coordinate}}} ,\; \underbrace{\norm{w_{3:d, j}} = \order{\frac{1}{\sqrt{k} \log d}}}_{{\textstyle \textbf{$d\minus 2$ Noise Coordinates}}} $$
		\normalsize
	\end{itemize}
\vspace{-5px}
with probability greater than $1-\frac{1}{\textrm{poly}(d)}$. Note that $c$ is a universal constant.
\end{retheorem}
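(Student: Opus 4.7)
The plan is to exploit (i) the near-linear regime at initialization and (ii) the strong asymmetry, across the three coordinate types in \lsn, of the expected hinge-loss gradient. Because $w_{ij}\sim\mathcal{N}(0,1/(dk\log^4 d))$ is tiny, $|f(x^i)|=o(1)$ on every training point with high probability, so $y^i f(x^i)<1$ holds on every sample and the per-sample gradient reduces to $-y^i v_j x^i\,\mathds{1}\{w_j^\top x^i>0\}$. I would first compute the population expectation $\mathbb{E}[y\,x_c\,\mathds{1}\{w_j^\top x>0\}]$ coordinate by coordinate: for the linear coordinate $c=1$ this equals $\mathbb{E}[\mathds{1}\{w_j^\top x>0\}]=1/2+o(1)$ since $x_1=y$; for the slab coordinate $c=2$ it reduces by conditioning on $(y,\varepsilon)$ to $\tfrac14[\mathbb{P}(z>-w_{1j}-w_{2j})-\mathbb{P}(z>-w_{1j}+w_{2j})]$, which is $O(w_{2j})$ by Taylor expansion of the Gaussian CDF around zero; for the noise coordinates $c\ge 3$ it vanishes by independence of $x_c$ from $y$ modulo a correction of order $|w_{cj}|$. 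Thus, of the three coordinate blocks, only the linear one carries a leading-order population signal, of magnitude $\Theta(v_j)$.

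Next, I would convert these population identities into per-iteration bounds. Since each per-sample coordinate of the gradient is bounded or sub-Gaussian, a Bernstein/Hoeffding argument union-bounded over all $kd$ entries shows that for $m\ge cd^2$ the empirical minibatch gradient differs from the population mean by at most $O(\log d/\sqrt{m})=O(\log d/d)$, with probability $1-1/\mathrm{poly}(d)$. This yields one-step updates of the form $w_{1j}^{(t+1)}=w_{1j}^{(t)}+\eta v_j/2+O(\eta/d)$, $w_{2j}^{(t+1)}=w_{2j}^{(t)}+O(\eta/d)$, and $\|w_{3:d,j}^{(t+1)}\|\le\|w_{3:d,j}^{(t)}\|+O(\eta/\sqrt{d})$. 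Iterating by induction for $T=O(1)$ rounds with $T\eta=\Theta(1)$ (compatible with $\eta=\Omega(1/\sqrt{\log d})$), I would verify at each step that (a) $w_j^\top x$ is dominated by $w_{1j}^{(t)} y$ so the activation rate remains $1/2+o(1)$, and (b) the hinge constraint $y f(x)<1$ remains active on the training set. The net effect is that $|w_{1j}|$ reaches $2/\sqrt{k}$ up to the announced $1/\sqrt{\log d}$ correction, while $|w_{2j}|=O(\eta T/d)=O(1/(\sqrt{dk}\log d))$ and $\|w_{3:d,j}\|=O(\eta T/\sqrt{d})=O(1/(\sqrt{k}\log d))$.

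For generalization, I would write $w_j^\top x=(2/\sqrt{k})\,\mathrm{sign}(v_j)\,y+\zeta_j$, where $\zeta_j$ is a zero-mean perturbation of magnitude $O(1/(\sqrt{k}\log d))$ coming from the slab and noise contributions. Gaussian tail bounds on $\zeta_j$ give $\mathds{1}\{w_j^\top x>0\}=\mathds{1}\{\mathrm{sign}(v_j)=y\}$ except on an event of probability $\exp(-\Omega(\log^2 d))=1/\mathrm{poly}(d)$, and on the complement the linear signal yields $f(x)=y(1\pm o(1))$, delivering the advertised $1/\mathrm{poly}(d)$ test error.

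The main obstacle will be the inductive control of $w_{2j}$. Its population gradient is not identically zero but proportional to $w_{2j}\cdot\phi_z(w_{1j})$, where $\phi_z$ is the density of the noise pre-activation $z\sim\mathcal{N}(0,\|w_{3:d,j}\|^2)$; since $\|w_{3:d,j}\|$ starts at the small scale $1/(\sqrt{k}\log^2 d)$, the factor $\phi_z(0)$ is of order $\sqrt{k}\log^2 d$, raising the possibility of an almost exponential per-step amplification of $w_{2j}$. The argument must show that over $O(1)$ iterations this feedback is dominated by the $\eta/d$ sampling fluctuation and by the aggressive initialization scale $1/\sqrt{dk\log^4 d}$, so that $|w_{2j}|$ stays at or below $1/(\sqrt{dk}\log d)$ throughout the trajectory. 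A secondary subtlety is that the same training set is reused across iterations, so the ``good event'' under which all single-step concentration bounds hold must be defined once at the start and propagated uniformly, relying on the $T\eta=\Theta(1)$ budget to close the induction.
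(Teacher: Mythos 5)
Your plan follows the same overall route as the paper: compute the population hinge-loss gradient coordinate block by coordinate block, show that only the linear coordinate carries an $\Theta(v_j)$ signal, bound the deviation of the minibatch gradient from its mean by $\tilde O(1/d)$ using $m\geq cd^2$, and run an induction for $O(1/\eta)$ steps while maintaining that the hinge constraint stays active. This is exactly the skeleton of the paper's argument (its Lemmas~\ref{lemma:grad_linear}--\ref{lemma:grad_noise} compute the closed-form population gradients, Lemma~\ref{lemma:active} keeps the hinge active, and Lemma~\ref{lemma:main_ih} carries the induction). However there are two substantive gaps in how you propose to close the argument.

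First, you correctly identify the central difficulty---that the slab-coordinate gradient is of the form $w_{2j}\cdot\tfrac{1}{\|\bar w_j\|}\varphi\!\big(\tfrac{w_{1j}}{\|\bar w_j\|}\big)$ and that at initialization $\tfrac{1}{\|\bar w_j\|}\varphi(\cdot)$ can be as large as $\Theta(\sqrt{k}\log^2 d)$---but the resolution you propose, that ``this feedback is dominated by the $\eta/d$ sampling fluctuation and by the aggressive initialization scale,'' is not the mechanism that makes the induction close. The sampling fluctuation does not cancel a deterministic drift; it only adds to it. What actually controls $w_{2j}$ (the content of the paper's Lemma~\ref{lemma:density}) is geometric: after the very first step $|w_{1j}|$ jumps to $\Theta(\eta/\sqrt{k})$, which exceeds $\|\bar w_j\|=O(1/(\sqrt{k}\log d))$ by a super-constant factor, so the density $\varphi\!\big(\tfrac{w_{1j}}{\|\bar w_j\|}\big)$ is evaluated far in the Gaussian tail and the product $\tfrac{1}{\|\bar w_j\|}\varphi\!\big(\tfrac{w_{1j}}{\|\bar w_j\|}\big)$ is in fact bounded by a constant. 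You do separately observe that ``$w_j^\top x$ is dominated by $w_{1j}^{(t)}y$,'' which is the right ingredient, but you frame it only as controlling the activation rate and the active set, not as what tames the potential exponential amplification of $w_{2j}$. As written, your sketch has no argument that the per-step multiplier on $w_{2j}$ is $1+O(\eta)$ rather than $1+\Omega(\eta\sqrt{k}\log^2 d)$; you flag the obstacle but leave it open.

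Second, the ``secondary subtlety'' about re-using the same training set across iterations does not arise under the theorem's hypotheses. With mini-batch size $\Theta(m)$ and $O(1)$ iterations, the paper simply partitions the $m$ samples into $t=O(1)$ disjoint mini-batches, one per iteration, so each step uses fresh data and the concentration bounds at step $t+1$ are independent of $w^{(t)}$ given the good event up to step $t$. Your plan of fixing a single good event over the whole trajectory is unnecessary and would actually be harder (it would require uniform-over-weights concentration rather than pointwise concentration at each step). A minor further slip: your per-step noise-norm increment should scale as $O(\eta|v_j|/\sqrt{d})=O(\eta/\sqrt{kd})$, not $O(\eta/\sqrt{d})$; the missing factor of $|v_j|=1/\sqrt{k}$ is what lands $\|\bar w_j\|$ at the stated $O(1/(\sqrt{k}\log d))$ scale.
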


\textbf{Proof Sketch}
Since the number of iterations $t = O(1)$, we partition the dataset into $t$ minibatches each of size $n := m/t$ samples. This means that each iteration uses a fresh batch of $n$ samples and the $t$ iterations together form a single pass over the data.
The overall outline of the proof is as follows. If the step size is $\eta$, then for $t \lesssim \frac{4}{\eta}$ iterations, with probability $\geq 1- \frac{1}{\textrm{poly}(d)}$,
\begin{itemize}[leftmargin=*]
	\item Lemma~\ref{lemma:active} shows that the hinge loss is ``active" (i.e., $yf(x) < 1$) for all data points in a given batch.
	\item Under this condition, we derive closed-form expressions for \emph{population} gradients in Lemmas~\ref{lemma:grad_linear},~\ref{lemma:grad_slab} and~\ref{lemma:grad_noise}.
	\item Lemma~\ref{lemma:main_ih} uses the above lemmas to establish precise estimates of the linear, slab and noise coordinates for all iterations until $t$.
\end{itemize}
The proof is organized as follows.
Appendix~\ref{sec:proof_induction} presents the main lemmas that will directly lead to Theorem~\ref{thm:main}. Appendix~\ref{sec:proof_gradients} derives closed form expressions for population gradients and Appendix~\ref{sec:proof_misc} presents auxiliary lemmas that are useful in the main proofs.

\textbf{Notation}
Recall that $f(x) = \sum^k_{j=1} v_j \cdot \relu(\sum^d_{i=1} w_{ij} x_i) = v^T \relu(W^Tx)$ where $W \in \mathds{R}^{d \times k}$ and $v \in \R^k$. Note that $w_i = [w_{1i} \, w_{2i}, \cdots w_{di}]^T$ is the $i^{th}$ column in $W$. Let $\bar{w}_i$ and $\bar{x}_j$ denote the $w_{3:d,i}$ and $x_{3:d,j}$ respectively. Also, let $\S_n = \{(x_i, y_i)\}^n_{i=1}$ denote a set of $n$ i.i.d. points randomly sampled from $\texttt{LSN}$. For simplicity, we also assume $|\{i : v_i = \sfrac{1}{\sqrt{k}} \}| = |\{i : v_i = -\sfrac{1}{\sqrt{k}} \}| = \sfrac{k}{2}$. We can now define the loss function as $\loss_f(\S_n) = \sfrac{1}{n} \sum^n_i \ell(x_i, y_i)$, where $\ell(x,y)=\max(0,1-yf(x))$ denotes the hinge loss. For notational simplicity, we use $X = \mu \pm \delta$ and $|X-\mu| \leq \delta$ interchangeably. Also let $\varphi$ and $\phi$ denote the probability density function and cumulative distribution function of standard normal distribution.

\vspace{-12px}
\begin{proof}[Proof of Theorem~\ref{thm:main}]
The proof directly follows from Lemma \ref{lemma:main_ih} and Lemma \ref{lemma:active}. In Lemma \ref{lemma:main_ih}, we show that the weights in the linear coordinate are $\Omega(\sqrt{d})$ larger than the weights in the slab and noise coordinates. Applying Lemma \ref{lemma:main_ih} at $\hat{t}=\floor{\frac{4}{\eta}(1-\frac{c_n}{\sqrt{\log d}})}$ gives the following result:
\begin{align*}
w^{(\hat{t})}_{1i} \overset{(a)}{=} \frac{2}{\sqrt{k}}(1-\frac{c_n}{\sqrt{\log d}}) + \bigo(\frac{1}{\sqrt{dk}\log d}) \text{ and }
|w^{(\hat{t})}_{2i}| \overset{(a)}{=} \bigo(\frac{1}{\sqrt{dk}\log d}) \text{ and }
||\bar{w}^{(\hat{t})}_{i}| &\overset{(a)}{=} \bigo(\frac{1}{\sqrt{k}\log d})
\end{align*}
where $(a)$ is due to $c_0(1+\hat{c})^{t} \leq c_0 e^{\hat{c}t} \leq c_0 e^1 = \bigo(1)$.

The $0-1$ error of the function $f$ at timestep $\hat{t}$ is small as well, because we can directly use Lemma \ref{lemma:active} to get $\Pr(yf(x) < 0) = \sfrac{2}{c^3d^6}$. Therefore, the $0-1$ error is at most $\frac{2}{c^3d^6} = \bigo(\frac{1}{d^6})$.
\end{proof}

\subsection{Proof by Induction}
\label{sec:proof_induction}
In this section, we use proof by induction to show that for the first $t=O(\sfrac{1}{\eta})$ steps, (1) the hinge loss is ``active" for all data points (Lemma~\ref{lemma:active}) and (2) hidden layer weights in the linear coordinate are $\Omega(\sqrt{d})$ larger than the hidden layer weights in the slab and noise coordinates (Lemma~\ref{lemma:main_ih}). \\

\begin{lemma}
\label{lemma:main_ih}
Let $|\S_n| \in [cd^{2}, d^{\alpha}/c]$ and initialization $w_{ij} \sim \mathcal{N}(0, \sfrac{1}{dk \log^2 d})$. Also let $\hat{c} = \sfrac{\eta}{4}$, $c_0 = 2$ and $c_n=5\sqrt{\alpha}c_0(1+\hat{c})^t$. Then, for all $t \leq \frac{4}{\eta}(1-\sfrac{c_n}{\sqrt{\log d}})$, $d \geq \exp((\sfrac{8c_n}{\eta})^2)$, $\sqrt{\sfrac{d}{\log^3(d)}} > \sfrac{24\sqrt{k}}{c_0 c}$ and $i \in [k]$, w.p. greater than $1-O(\frac{1}{d^2})$, we have:
\begin{align}
& y_i f(x_i) \leq 1 \, \forall (x_i, y_i) \in \S_n \label{eq:active_ih} \\
& w^{(t)}_{1i} = \frac{t\eta v_i}{2} \pm \frac{c_0 (1+\hat{c})^t}{\sqrt{dk}\log d} \label{eq:lin_ih} \\
&|w^{(t)}_{2i}| \leq \frac{c_0 (1+\hat{c})^t}{\sqrt{dk}\log d} \label{eq:slab_ih} \\
&||\bar{w}^{(t)}||_2 \leq \frac{c_0 (1+\hat{c})^t}{\sqrt{k} \log d}  \label{eq:noise_ih}
\end{align}
\end{lemma}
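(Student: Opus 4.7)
\textbf{Proof plan for Lemma \ref{lemma:main_ih}.}
The plan is a straightforward induction on $t$, leveraging the closed-form population gradient formulas that will be established separately (Lemmas \ref{lemma:grad_linear}, \ref{lemma:grad_slab}, \ref{lemma:grad_noise} from Appendix \ref{sec:proof_gradients}) together with the empirical concentration afforded by the mini-batch size $n = |\S_n| \geq cd^2$. At a high level, the linear coordinate's population gradient is approximately $v_i/2$, so $w_{1i}^{(t)}$ will track the deterministic trajectory $\tfrac{t\eta v_i}{2}$ up to a small noise term, while the slab and noise coordinates have near-zero population gradient and therefore stay close to initialization.

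\textbf{Base case ($t=0$).} Under the initialization $w_{ij} \sim \mathcal{N}(0,\tfrac{1}{dk\log^4 d})$, standard Gaussian tail bounds plus a union bound over $dk$ entries give $|w_{1i}^{(0)}|, |w_{2i}^{(0)}| = O(1/\sqrt{dk}\log d)$ and $\|\bar w_i^{(0)}\|_2 = O(1/\sqrt{k}\log d)$ with probability $\geq 1 - O(1/d^2)$ (using that $d^2 \gg k$). This yields \eqref{eq:lin_ih}--\eqref{eq:noise_ih} at $t=0$ with the stated constant $c_0 = 2$. For \eqref{eq:active_ih}, at initialization $|f(x)| \leq \|v\|_2 \|W^Tx\|_2 \leq O(1) \cdot O(\sqrt{1/\log^4 d}) \ll 1$ for every $x \in \S_n$, so the hinge loss is active for every sample.

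\textbf{Inductive step ($t \to t+1$).} Assuming the four bounds hold at step $t$, I would proceed in three moves. First, verify \eqref{eq:active_ih} at $t+1$: since $w_{1i}^{(t)} = \tfrac{t\eta v_i}{2} \pm o(1)$ and $v_i = \pm 1/\sqrt{k}$, the bound $t \leq \tfrac{4}{\eta}(1 - c_n/\sqrt{\log d})$ combined with $|w_{2i}^{(t)}|, \|\bar w_i^{(t)}\|_2$ being much smaller than $|w_{1i}^{(t)}|$ forces $y f(x) < 1$ for every sample in the next mini-batch — here I would compute $yf(x)$ separately for each realization of the slab coordinate ($x_{i2} \in \{-1,0,1\}$) using the ReLU pattern, noting that the noise coordinates contribute at most $O(\|v\|_2 \cdot \max_i\|\bar w_i\|_2 \cdot \|\bar x\|_2)$, which is $O(\sqrt{d}/\log d) \cdot O(1/\sqrt{k}\log d)$ — small relative to the $O(1)$ linear contribution. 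Second, since the hinge loss is active on every sample, the sample gradient equals the average of $-y_i f'(x_i) \cdot (\text{features})$ terms, whose \emph{population} means come from Lemmas \ref{lemma:grad_linear}--\ref{lemma:grad_noise}: the linear-coordinate gradient contributes $-v_i/2 + O(1/\sqrt{\log d})$, while the slab- and noise-coordinate gradients have magnitude $O(1/\sqrt{dk}\log d)$ and $O(1/\sqrt{k}\log d)$ respectively. Third, bound the sample-to-population deviation using vector-valued Hoeffding/Bernstein on the $n \geq cd^2$ i.i.d. samples; this yields deviation $O(\sqrt{d/n}) = O(1/\sqrt{d})$ in each coordinate, small enough to be absorbed into the $(1+\hat c)^{t+1}$ growth budget. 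Combining the update rule $w^{(t+1)} = w^{(t)} - \eta \nabla \loss(\S_n)$ with the inductive bounds gives \eqref{eq:lin_ih}--\eqref{eq:noise_ih} at $t+1$; the geometric factor $(1+\hat c)^t$ in $c_n$ exactly captures the worst-case multiplicative compounding of the error when propagating the slab/noise bounds through the gradient expressions (which themselves depend on the current weights).

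\textbf{Main obstacle.} The delicate part is closing the induction on the slab and noise coordinates, because their population gradient is not exactly zero — Lemma \ref{lemma:grad_slab} produces a term proportional to the current $w_{2i}^{(t)}$ itself, and Lemma \ref{lemma:grad_noise} produces a term proportional to $\bar w_i^{(t)}$. This is what forces the inductive bounds to be multiplied by $(1+\hat c)^t$ at each step (rather than staying constant) and is the reason the theorem only runs for $t = O(1/\eta)$: beyond that window the error budget $c_0(1+\hat c)^t$ would exceed what is needed to keep \eqref{eq:active_ih} valid. The second subtle point is the union bound over all $t = O(1/\eta)$ iterations of the concentration event for the mini-batch gradient deviation; since $t = O(1)$ and each failure probability is $1/\text{poly}(d)$, this is handled by choosing the polynomial degree large enough (using $m \leq d^\alpha/c$ to keep Bernstein's sub-Gaussian tail usable for the noise-coordinate gradient, which is the heaviest-tailed of the three).
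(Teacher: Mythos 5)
Your plan is essentially the paper's proof: an induction on $t$ where the base case is handled by Gaussian tail bounds on the initialization, eq.~\eqref{eq:active_ih} is re-established each round via the activity lemma (Lemma~\ref{lemma:active}), and the inductive step uses the closed-form population gradients of Lemmas~\ref{lemma:grad_linear}--\ref{lemma:grad_noise} plus Hoeffding concentration on the fresh batch of $n\geq cd^2$ samples, with the $(1+\hat c)^t$ compounding absorbing both the self-referential slab/noise terms and the empirical deviations. You have correctly identified the two key structural features: that the slab and noise gradients contain a term proportional to the current $w_{2i}$ and $\bar w_i$ (forcing geometric growth of the error budget and limiting the argument to $t=O(1/\eta)$), and that fresh batches per iteration avoid dependence issues.

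The one mechanism you do not explicitly name, and which carries most of the technical weight, is the role of Lemma~\ref{lemma:density}. The update to $w_{2i}$ has a factor $\max_{|\delta|\leq|w_{2i}|}\frac{1}{\|\bar w_i\|}\varphi\bigl(\frac{w_{1i}+\delta}{\|\bar w_i\|}\bigr)$, and similarly for $\bar w_i$. Saying that the slab gradient is ``proportional to the current $w_{2i}$'' is only useful if this density-ratio prefactor is $O(1)$; that is not automatic, since $\frac{1}{\sigma}\varphi(\mu/\sigma)$ can be arbitrarily large when $\sigma$ is small and $\mu$ is comparable. The paper's Lemma~\ref{lemma:density} shows the prefactor is $\leq 1$, and this relies on the inductive invariant that $|w_{1i}^{(t)}|$ is already $\Omega(\eta/\sqrt{k})$ away from zero while $\|\bar w_i^{(t)}\|$ is much smaller --- the argument $w_{1i}/\|\bar w_i\|$ lands far in the Gaussian tail, so $\varphi$ is exponentially small and the $1/\|\bar w_i\|$ blowup is harmless. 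This is precisely where the conditions $d\geq\exp((8c_n/\eta)^2)$ and $\sqrt{d/\log^3 d}\gg\sqrt{k}$ enter, and it is the crux of why the slab and noise coordinates stay at initialization scale while the linear coordinate drifts. Aside from that omission (and a minor imprecision in the per-coordinate concentration rate, which should be $O(\sqrt{\log d}/d)$ per coordinate rather than $O(1/\sqrt d)$, the latter being the correct rate for the full noise-vector norm), your outline is faithful to the paper's argument.
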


\begin{proof}
First, we prove that equations \eqref{eq:lin_ih}, \eqref{eq:slab_ih} \& \eqref{eq:noise_ih} hold at initialization (i.e., $t=0$) with high probability. Using \ref{lemma:norm_max} and \ref{lemma:norm_l2}:
$$ \max_{i \in \{1,2\}} \max_{j \leq k} |w_{ij}| \leq \frac{2}{\sqrt{dk}\log d} \;  \text{ and } \; \max_{i \leq k} ||\bar{w}_i|| \leq \frac{2}{\sqrt{k}\log d} \quad \text{w.p. } 1-\frac{2}{d^4}$$
Therefore, $w_{1i} = \frac{(0)\eta v_i}{2} \pm \frac{c_0 (1+\hat{c})^0}{\sqrt{dk}\log d}$ and $|w_{2i}| \leq \frac{c_0 (1+\hat{c})^0}{\sqrt{dk}\log d}$ and $||\bar{w}_i|| \leq \frac{c_0(1+\hat{c})^0}{\sqrt{k}\log d}$. Since equations \eqref{eq:lin_ih}, \eqref{eq:slab_ih} \& \eqref{eq:noise_ih} hold at $t=0$, we can use Lemma \ref{lemma:active} to show that the hinge loss is ``active" with high probability:$$y_if(x_i) = \pm \frac{c_n}{\sqrt{\log d}} < 1 \quad \text{when } d \geq \exp(c^2_n)$$

Now, we assume that the inductive hypothesis---equations \eqref{eq:active_ih}, \eqref{eq:lin_ih}, \eqref{eq:slab_ih} and \eqref{eq:noise_ih}---is true after every timestep $\tau$ where $\tau \in \{0, \cdots, t\}$.

We now prove that the inductive hypothesis is true at timestep $t+1$, after applying gradient descent using the $(t+1)^{th}$ batch. Since z\eqref{eq:active_ih} holds at timestep $t$, we can use the closed-form expression of the gradient along the linear coordinate (lemma \ref{lemma:grad_linear}) to prove that equation \eqref{eq:lin_ih} holds at timestep $t+1$ as well:
\small
\begin{align*}
w^{(t+1)}_{1i} &= w^{(t)}_{1i} + \frac{\eta v_i}{4}\Big[2+\phi\Big(\frac{w_{1i}+w_{2i}}{||\bar{w}_i||}\Big) + \phi\Big(\frac{w_{1i} \minus w_{2i}}{||\bar{w}_i||}\Big) - 2\phi\Big(\frac{ w_{1i}}{||\bar{w}_i||}\Big) \Big] \pm \frac{5\eta v_i}{d} \sqrt{\frac{\log (cd^2)}{c}} \\
&= w^{(t)}_{1i} + \frac{\eta v_i}{2} + \frac{\eta v_i}{2} |w^{(t)}_{2i}| \cdot \max_{|\delta| \leq |w^{(t)}_{2i}|} \frac{1}{||\bar{w}^t||} \varphi \Big(\frac{w^{(t)}_{1i}+\delta}{||\bar{w}^{(t)}_i||}\Big) \pm \frac{5\eta v_i}{d} \sqrt{\frac{\log (cd^2)}{c}} \\
&\overset{(a)}{=} \frac{t\eta v_i}{2} \pm \frac{c_0 (1+\hat{c})^t}{\sqrt{dk}\log d} + \frac{\eta v_i}{2} + \frac{\eta v_i}{2} \frac{c_0 (1+\hat{c})^t}{\sqrt{dk}\log d} \pm \frac{5\eta v_i}{d} \sqrt{\frac{\log (cd^2)}{c}} \\
&\overset{(b)}{=} \frac{(t+1)\eta v_i}{2} \pm \frac{c_0 (1+\hat{c})^t}{\sqrt{dk}\log d} \pm \eta v_i \frac{c_0 (1+\hat{c})^t}{\sqrt{dk}\log d} = \frac{(t+1)\eta v_i}{2} \pm \frac{c_0 (1+\hat{c})^t(1+\eta v_i)}{\sqrt{dk}\log d} \\
&\overset{(c)}{=} \frac{(t+1)\eta v_i}{2} \pm \frac{c_0 (1+\hat{c})^{t+1}}{\sqrt{dk}\log d}
\end{align*} \normalsize
where $(a)$ is via equation \eqref{eq:cdf2} in Lemma \ref{lemma:density}, $(b)$ is because $\sfrac{d}{\log^3(d)} \geq \sfrac{20}{c_0 e^{1} \sqrt{c}}$ and $(c)$ is due to $\eta v_i \leq \hat{c}$.

Similarly, since equation \eqref{eq:active_ih} holds at timestep $t$ (via the inductive hypothesis), we can use the closed-form expression of the gradient along the slab coordinate (lemma \ref{lemma:grad_slab}) to show that the weights in the slab (i.e., second) coordinate are small (equation \eqref{eq:slab_ih}) at timestep $t+1$ as well:
\small
\begin{align*}
w^{(t+1)}_{2i} &= w^{(t)}_{2i} + \frac{\eta v_i}{4}\Big[\phi\Big(\frac{w_{1i}+w_{2i}}{||\bar{w}_i||}\Big) - \phi\Big(\frac{w_{1i} \minus w_{2i}}{||\bar{w}_i||}\Big)\Big] \pm \frac{5\eta v_i}{d} \sqrt{\frac{\log (cd^2)}{c}} \\
&= w^{(t)}_{2i} + \frac{\eta v_i}{2} |w^{(t)}_{2i}| \cdot \max_{|\delta| \leq |w^{(t)}_{2i}|} \frac{1}{||\bar{w}^t||} \varphi \Big(\frac{w^{(t)}_{1i}+\delta}{||\bar{w}^{(t)}_i||}\Big) \pm \frac{5\eta v_i}{d} \sqrt{\frac{\log (cd^2)}{c}} \\
&\overset{(a)}{=} \pm \frac{c_0 (1+\hat{c})^t}{\sqrt{dk}\log d} \pm \frac{\eta v_i}{2} \frac{c_0 (1+\hat{c})^t}{\sqrt{dk}\log d} \pm  \frac{\eta v_i}{2} \frac{c_0 (1+\hat{c})^t}{\sqrt{dk}\log d} \leq \pm  \frac{c_0 (1+\hat{c})^{t+1}}{\sqrt{dk}\log d}
\end{align*} \normalsize
where $(a)$ is due to equations \eqref{eq:cdf1} in Lemma \ref{lemma:density}, \eqref{eq:slab_ih} and $\sfrac{d}{\log^3(d)} \geq \sfrac{20}{c_0 e^{1} \sqrt{c}}$.

Finally, we can use the closed-form expression of the gradient along the noise  coordinate (lemma \ref{lemma:grad_noise}) to prove that the norm of the gradient along the noise coordinates (i.e., coordinates $3$ to $d$) is small (equation \eqref{eq:noise_ih}) at timestep $t+1$:
\footnotesize
\begin{align}
\bar{w}^{(t+1)}_i = \bar{w}^{(t+1)}_i + \underbrace{\frac{\eta v_i}{4}\Big[\varphi\Big(\frac{w_{1i}+w_{2i}}{||\bar{w}_i||}\Big) + \varphi\Big(\frac{w_{1i}\minus w_{2i}}{||\bar{w}_i||}\Big) - 2\varphi\Big(\frac{w_{1i}}{||\bar{w}_i||}\Big) \Big]}_{\bar{\mathcal{G}}_1} \frac{\bar{w}^{(t)}_i}{ ||\bar{w}_i||} \pm \underbrace{\frac{3\eta |v_i|\log (\sqrt{c}d)}{\sqrt{c}d} \frac{\bar{w}_i}{||\bar{w}_i||} \pm \frac{6\eta |v_i|}{\sqrt{cd}} u^{\perp}_i}_{\bar{\mathcal{G}}_2} \nonumber
\end{align} \normalsize
We first show that the the first part of the noise gradient, $\bar{\mathcal{G}}_1$, is at most $\sfrac{\eta v_i}{2}$:
\small
\begin{align*}
&\frac{\eta v_i}{4}\Big[\varphi\Big(\frac{w_{1i}+w_{2i}}{||\bar{w}_i||}\Big) + \varphi\Big(\frac{w_{1i}\minus w_{2i}}{||\bar{w}_i||}\Big) - 2\varphi\Big(\frac{w_{1i}}{||\bar{w}_i||}\Big) \Big]\\
&= \frac{\eta v_i}{4} \underbrace{\frac{1}{||\bar{w}^{(t)}_i||} \varphi \Big(\frac{w^{(t)}_{1i}}{||\bar{w}^{(t)}_i||} \Big)}_{\leq 1 (\text{see eq. \ref{eq:pdf1} in Lemma \ref{lemma:density}})} \underbrace{\Big[2 - \varphi \Big(\frac{w^{(t)}_{2i}}{||\bar{w}^{(t)}_i||} \Big)\Big(\exp(\frac{w^{(t)}_{1i}w^{(t)}_{2i}}{||\bar{w}^{(t)}_i||^2}) + \exp(\frac{-w^{(t)}_{1i}w^{(t)}_{2i}}{||\bar{w}^{(t)}_i||^2})\Big)\Big]}_{\leq 2}
\leq \frac{\eta v_i}{2}
\end{align*} \normalsize
Next, we show that the $\ell_2$ norm of the second part of the noise gradient, $||\bar{\mathcal{G}}_2||$, is $O(\sfrac{1}{\sqrt{d}})$:
$$||B|| \leq 3\eta |v_i| \frac{\log(\sqrt{c}d)}{\sqrt{c}d} + \frac{6\eta|v_i|}{\sqrt{cd}} \leq \frac{12\eta |v_i|}{\sqrt{cd}}$$
Now, we can use the upper bounds on $\mathcal{G}_1$ and $\mathcal{G}_2$ to show that the $\ell_2$ norm of the gradient along the noise gradients is small as well:
\begin{align*}
||\bar{w}^{(t+1)}_i|| &\leq ||\bar{w}^{(t)}_i|| + \frac{\eta v_i}{2} ||\bar{w}^{(t)}_i|| + \frac{12\eta |v_i|}{\sqrt{cd}} \overset{(a)}\leq ||\bar{w}^{(t)}_i|| + \frac{\eta v_i}{2} ||\bar{w}^{(t)}_i|| + \frac{\eta v_i}{2} ||\bar{w}^{(t)}_i|| \\
&\overset{(b)}{\leq} \frac{c_0(1+\hat{c})^t(1+\eta v_i)}{\sqrt{k}\log d} \overset{(c)}{\leq} \frac{c_0(1+\hat{c})^{t+1}}{\sqrt{k}\log d}
\end{align*}
where $(a)$ is because $\sfrac{d}{\log d} \geq (\sfrac{24\sqrt{k}}{c_0c})^2$, $(b)$ is due to equation \eqref{eq:noise_ih} and $(c)$ is because $\eta v_i \leq \hat{c}$.
\end{proof}

Since equations \eqref{eq:lin_ih}, \eqref{eq:slab_ih} \& \eqref{eq:noise_ih} hold at timestep $t$ (from Lemma \ref{lemma:main_ih}), we can show that the hinge loss is positive (i.e., $yf(x)<1$) for all data points with high probability as well. \\

\begin{lemma}
\label{lemma:active}
Let $\S_n$ denote a set of $n \in [cd^{2}, d^{\alpha}/c]$ i.i.d. samples from $\texttt{LSN}$, where $\alpha > 2$ and $c>1$.
Suppose equations \eqref{eq:lin_ih}, \eqref{eq:slab_ih} \& \eqref{eq:noise_ih} hold at timestep $t$.
Also let $d \geq \exp((\frac{8c_n}{\eta})^2)$ where $c_n=5\sqrt{\alpha}c_0(1+\hat{c})^t$.
Then, w.p. greater than $1-\frac{2}{c^3d^6}$, we have:
\begin{align}
	 y_if(x_i) = \frac{t\eta}{4} \pm \frac{c_n}{\sqrt{\log d}} = (t \pm \sfrac{1}{2})\frac{\eta}{4}  \quad \forall (x_i, y_i) \in \S_n \label{eq:active}
\end{align}
\end{lemma}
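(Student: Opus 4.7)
The plan is to compare $f$ against an \emph{ideal} one-hidden-layer network $\tilde f$ whose weights equal the deterministic targets implied by the inductive hypothesis: $\tilde w_{1j} = t\eta v_j/2$, $\tilde w_{2j} = 0$, $\bar{\tilde w}_j = 0$. A direct case analysis on $y \in \{-1,+1\}$, using the balance $|S_+| = |S_-| = k/2$ where $S_{\pm} = \{j : v_j = \pm 1/\sqrt{k}\}$, shows $y\tilde f(x) = t\eta/4$ exactly for every sample. The problem then reduces to bounding $|y f(x) - y \tilde f(x)| \leq c_n/\sqrt{\log d}$ uniformly over the batch $\S_n$.

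The central technical estimate is on the noise terms $\bar w_j^{(t)} \cdot \bar x^{(i)}$. Since $\bar x^{(i)} \sim \mathcal{N}(0, I_{d-2})$ independent of $\bar w_j^{(t)}$, this is a mean-zero Gaussian with variance $\|\bar w_j^{(t)}\|^2$; a standard tail bound and a union bound over the $kn \leq k d^\alpha$ pairs give, with probability $1 - 1/\mathrm{poly}(d)$,
\[
\max_{i \leq n,\, j \leq k} \bigl|\bar w_j^{(t)} \cdot \bar x^{(i)}\bigr| \;\leq\; C\sqrt{\alpha \log d}\cdot \max_j \|\bar w_j^{(t)}\| \;\leq\; \frac{C c_0 (1+\hat c)^t \sqrt{\alpha}}{\sqrt{k}\,\sqrt{\log d}}.
\]
For $t \geq 1$, the hypothesis $d \geq \exp\bigl((8c_n/\eta)^2\bigr)$ makes this noise (together with the $O(1/(\sqrt{dk}\log d))$ slack in $w_{1j}$ and $w_{2j}$) strictly smaller than the target activation magnitude $|\tilde a_j| = t\eta/(2\sqrt{k})$, so every ReLU in $f$ fires iff the corresponding ReLU in $\tilde f$ fires. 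Under this matched activation pattern, $f(x) - \tilde f(x) = \sum_{j \in S_{\mathrm{on}}} v_j (a_j - \tilde a_j)$, and the key cancellation is that the noise contribution telescopes into a single inner product $(\sum_{j \in S_{\mathrm{on}}} \bar w_j) \cdot \bar x$, which is a Gaussian of standard deviation at most $\sqrt{k}\,c_0(1+\hat c)^t/(2\log d)$. After multiplying by the $1/\sqrt{k}$ from the $v_j$ factor and applying the Gaussian bound once more with a union over $\S_n$, this yields an error of $O(c_0(1+\hat c)^t\sqrt{\alpha}/\sqrt{\log d}) \leq c_n/\sqrt{\log d}$, with the $w_{1j}$ and $w_{2j}$ slack contributing a negligible $O(c_0(1+\hat c)^t/(\sqrt{d}\log d))$.

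The base case $t = 0$ is handled separately because the activation-matching argument is vacuous when $\tilde a_j = 0$. Instead I would simply use $|\textrm{ReLU}(a)| \leq |a|$ to get $|f(x)| \leq (1/\sqrt{k})\sum_j |a_j|$ and plug in the initialization bounds from Lemmas \ref{lemma:norm_max} and \ref{lemma:norm_l2}, yielding $|f(x)| \leq c_n/\sqrt{\log d}$ directly, which matches the statement since $y\tilde f(x) = 0$. The main obstacle is the activation-pattern matching step: one must show that the accumulated weight error plus the Gaussian fluctuation on each neuron is dominated by the separation $t\eta/(2\sqrt{k})$, which is precisely why the quantitative condition on $d$ is needed. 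A secondary bookkeeping obstacle is tracking constants so that the final error has the stated form $c_n/\sqrt{\log d} = 5\sqrt{\alpha}c_0(1+\hat c)^t/\sqrt{\log d}$; this forces one to be careful when applying Gaussian tail bounds and when collapsing the sum of $\bar w_j$ over $S_{\pm}$ via the triangle inequality rather than a trivial Cauchy--Schwarz bound.
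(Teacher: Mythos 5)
Your proposal is correct and follows essentially the same route as the paper's proof: both establish that $w_j^{(t)} \cdot x \approx t\eta v_j y/2$ (equivalently, that the ReLU activation pattern is determined by $\mathrm{sign}(v_j y)$ for $t \geq 1$) via a union bound on the noise terms $\bar w_j \cdot \bar x$ over all neuron--sample pairs, and then sum over the $k/2$ active units using the balance of the $v_j$ signs to get $y f(x) = t\eta/4 \pm O(c_n/\sqrt{\log d})$, with $t=0$ handled by bounding each $|a_j|$ directly. Framing this through the reference network $\tilde f$ is clean, but your extra step---telescoping the noise into $\bigl(\sum_{j \in S_{\mathrm{on}}} v_j \bar w_j\bigr) \cdot \bar x$ and applying a second Gaussian concentration with a union over $\S_n$---is redundant: the per-pair noise bound you already invoked for activation matching, summed over the $k/2$ active neurons by the triangle inequality, yields the same $O(c_n/\sqrt{\log d})$ error, which is exactly the paper's computation.
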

\begin{proof}
We use equations \eqref{eq:lin_ih}, \eqref{eq:slab_ih} \& \eqref{eq:noise_ih} to obtain simplify the dot product between $w^{(t)}_i$ \& $x_j$ and the indicator $\ind{w^{(t)}_i \cdot x_j \geq 0}$. First, we show that the dot product between $w^{(t)}_i$ and $x_j$ is in the band $\frac{t\eta v_i y_j}{2} \pm  \frac{c_n}{\sqrt{k \log d}}$ with high probability:
\small
\begin{align}
w^{(t)}_i \cdot x_j &= w^{(t)}_{1i} y_j + w^{(t)}_{2i} \frac{y_j+1}{2} \varepsilon_j + \bar{w}^{(t)}_i \cdot \bar{x}_j \nonumber
\overset{(a)}{=} w^{(t)}_{1i} y_j + w^{(t)}_{2i} \frac{y_j+1}{2} \varepsilon_j + ||\bar{w}^{(t)}_i|| Z_j \\
&\overset{(b)}{=} w^{(t)}_{1i} y_j + w^{(t)}_{2i} \frac{y_j+1}{2} \varepsilon_j \pm ||\bar{w}^{(t)}_i|| \sqrt{8\alpha\log d} & \text{w.p. } 1-\frac{2}{d^6} \nonumber \\
&{=} \frac{t\eta v_i y_j}{2} \pm  \frac{c_0 (1+\hat{c})^t}{\sqrt{dk}\log d} (y_j + \frac{y_j+1}{2}\varepsilon_j) \pm \frac{c_0(1+\hat{c})^t}{\sqrt{k} \log d} \sqrt{8\alpha\log d} & \text{via eq. \ref{eq:lin_ih}, \ref{eq:slab_ih},  \ref{eq:noise_ih}} \nonumber \\
&\overset{(c)}{=} \frac{t\eta v_i y_j}{2} \pm  \frac{2c_0 (1+\hat{c})^t}{\sqrt{dk}\log d}  \pm \frac{3\sqrt{\alpha}c_0(1+\hat{c})^t}{\sqrt{k \log d}} = \frac{t\eta v_i y_j}{2} \pm  \frac{c_n}{\sqrt{k \log d}} \label{eq:wx_dot}  \\
&= (ty_j \pm \sfrac{1}{4})\frac{\eta v_i}{2} \quad \text{ when } d \geq \exp((\frac{8c_n}{\eta})^2) \label{eq:wx_dot_simple}
\end{align} \normalsize
where $(a)$ is because $\bar{w}^{(t)}_i \cdot \bar{x}_j = ||\bar{w}^{(t)}_i|| \mathcal{N}(0,1)$, $(b)$ is via lemma \ref{lemma:norm_max} \& $c>1$, and $(c)$ is because $(y_j + \frac{y_j+1}{2}\varepsilon_j) < 2$. Next, when $d \geq \exp((\frac{8c_n}{\eta})^2)$, we can simplify $\ind{w^{(t)}_i \cdot x_j \geq 0}$ as follows:
\small
\begin{align}
\ind{w^{(t)}_i \cdot x_j \geq 0} \overset{eq. \ref{eq:wx_dot_simple}}\leq \ind{
(ty_j \pm \sfrac{1}{4})\frac{\eta v_i}{2}  \geq 0} \leq \begin{cases}
      1, & \text{if}\ t=0 \\
      1, & \text{if}\ t>0 \text{ and } y_j v_i \geq 0 \\
      0, & \text{if}\ t>0 \text{ and } y_j v_i < 0
    \end{cases}
    \leq \ind{t=0 \vee yv_i \geq 0} \label{eq:wx_ind}
\end{align} \normalsize
We can now use equations \eqref{eq:wx_dot} \& \eqref{eq:wx_ind} to show that $y_j f^{(t)}(x_j)$ is in the band $\sfrac{t\eta}{4} \pm O(\sfrac{1}{\sqrt{\log d}})$ with high probability:
\small
\begin{align}
y_j f^{(t)}(x_j) &= \sum^k_{i=1} y_j v_i \cdot \relu(w_i \cdot x_j) = \sum^k_{i=1} v_i \ind{t=0 \vee yv_i \geq 0} \big(\frac{t\eta v_i}{2} \pm \frac{c_n}{\sqrt{k \log d}}\big) \nonumber \\
&= \sum^k_{i=1} \ind{t=0 \vee yv_i \geq 0} \big(\frac{t\eta}{2k} \pm \frac{c_n}{k \sqrt{\log d}}\big) \overset{(a)}{=} \begin{cases}
      \pm k \frac{c_n}{k \sqrt{\log d}}, & \text{if}\ t=0 \\
      \frac{k}{2} \big(\frac{t\eta}{2k} \pm \frac{c_n}{k \sqrt{\log d}}\big), & \text{if}\ t>0  \\
    \end{cases} \\
&= \frac{t\eta}{4} \pm \frac{c_n}{\sqrt{\log d}} \overset{(b)}= (t \pm \sfrac{1}{2})\frac{\eta}{4} \nonumber
\end{align} \normalsize
where $(a)$ is due to $|\{v_i \; | \; v_i > 0\}| = |\{v_i \; | \; v_i < 0\}| = \sfrac{k}{2}$ and $(b)$ follows from $\sfrac{c_n}{\sqrt{\log d}} \leq \sfrac{\eta}{8}$ when $d \geq \exp((\sfrac{8c_n}{\eta})^2)$
\end{proof}

\begin{lemma}
\label{lemma:density}
If equations \eqref{eq:lin_ih}, \eqref{eq:slab_ih} \& \eqref{eq:noise_ih} hold at timestep $t$, $d > \exp((\frac{4c_0e^{1}}{\eta})^2)$ and $\sfrac{d}{\log d} > \sqrt{k}$, we have:
\small
\begin{align}
	\max_{|\delta| \leq |w^{(t)}_{2i}|} \frac{1}{||\bar{w}^t||} \varphi \Big(\frac{w^{(t)}_{1i}+\delta}{||\bar{w}^{(t)}_i||}\Big) &\leq 1 \label{eq:density} \\
	\Big| \phi\Big(\frac{w^{(t)}_{1i}+w^{(t)}_{2i}}{||\bar{w}^{(t)}||}\Big) - \phi\Big(\frac{w^{(t)}_{1i}-w^{(t)}_{2i}}{||\bar{w}^{(t)}||}\Big)\Big| &\leq \frac{2c_0 (1+\hat{c})^t}{\sqrt{dk}\log d} \label{eq:cdf1} \\
	\Big| \phi\Big(\frac{w^{(t)}_{1i}+w^{(t)}_{2i}}{||\bar{w}^{(t)}||}\Big) - \phi\Big(\frac{w^{(t)}_{1i}}{||\bar{w}^{(t)}||}\Big)\Big| &\leq \frac{c_0 (1+\hat{c})^t}{\sqrt{dk}\log d} \label{eq:cdf2} \\
	\frac{1}{||\bar{w}^t||} \varphi \Big(\frac{w^{(t)}_{1i}}{||\bar{w}^{(t)}_i||}\Big) &\leq \frac{c_0 (1+\hat{c})^t}{\sqrt{dk}\log d} \label{eq:pdf1}
\end{align}\normalsize
\end{lemma}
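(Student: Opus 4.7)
The plan is to reduce all four inequalities to a common pair of tools: a uniform bound on the Gaussian density $\frac{1}{\sigma}\varphi(x/\sigma)$ along the interval that the perturbation $\delta\in[-|w^{(t)}_{2i}|,|w^{(t)}_{2i}|]$ carves around $w^{(t)}_{1i}$, and a Gaussian tail bound at the central point $w^{(t)}_{1i}$ itself. Throughout, write $\sigma = \|\bar{w}^{(t)}\|$, $a = w^{(t)}_{1i}/\sigma$ and $b = w^{(t)}_{2i}/\sigma$; the induction hypothesis then gives $|w^{(t)}_{1i}| = t\eta/(2\sqrt{k}) \pm O(1/(\sqrt{dk}\log d))$, $|w^{(t)}_{2i}| \le c_0(1+\hat{c})^t/(\sqrt{dk}\log d)$ and $\sigma \le c_0(1+\hat{c})^t/(\sqrt{k}\log d)$, with $(1+\hat{c})^t \le e$ in our regime. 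Combined with $d > \exp((4c_0 e/\eta)^2)$, these estimates force $|a|$ to grow like $\Omega(\eta t\log d)$ for $t \ge 1$, while $|w^{(t)}_{2i}| \ll |w^{(t)}_{1i}|$ and $|b|=O(1/\sqrt{d})$.

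For \eqref{eq:density}, I would view $\frac{1}{\sigma}\varphi((w^{(t)}_{1i}+\delta)/\sigma)$ as the density of $\mathcal{N}(0,\sigma^2)$ at $w^{(t)}_{1i}+\delta$, which is monotone decreasing in $|w^{(t)}_{1i}+\delta|\ge |w^{(t)}_{1i}|-|w^{(t)}_{2i}|$. The target reduces to $(|w^{(t)}_{1i}|-|w^{(t)}_{2i}|)^2 \ge 2\sigma^2 \log(1/(\sqrt{2\pi}\sigma))$, which under the induction hypothesis boils down to an inequality of the form $t\eta \log d \gtrsim \sqrt{\log k + \log\log d}$; this is exactly what the two stated conditions on $d$ and $k$ guarantee for $t \ge 1$.

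The two CDF-difference bounds \eqref{eq:cdf1} and \eqref{eq:cdf2} then follow from \eqref{eq:density} by writing the difference as an integral of the density and changing variables to absorb the $\sigma$:
\[
\phi\!\left(\tfrac{w^{(t)}_{1i}+w^{(t)}_{2i}}{\sigma}\right) - \phi\!\left(\tfrac{w^{(t)}_{1i}-w^{(t)}_{2i}}{\sigma}\right) = \int_{-|w^{(t)}_{2i}|}^{|w^{(t)}_{2i}|} \frac{1}{\sigma}\varphi\!\left(\tfrac{w^{(t)}_{1i}+s}{\sigma}\right)ds,
\]
and analogously for \eqref{eq:cdf2} over $s \in [0,|w^{(t)}_{2i}|]$. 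Bounding the integrand pointwise by \eqref{eq:density} and multiplying by the interval length ($2|w^{(t)}_{2i}|$ and $|w^{(t)}_{2i}|$ respectively) yields both inequalities after inserting $|w^{(t)}_{2i}| \le c_0(1+\hat{c})^t/(\sqrt{dk}\log d)$.

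The remaining estimate \eqref{eq:pdf1} is the strongest quantitatively and needs the Gaussian tail in full. Writing $\frac{1}{\sigma}\varphi(w^{(t)}_{1i}/\sigma) = \frac{1}{\sqrt{2\pi}\sigma}\exp\!\bigl(-(w^{(t)}_{1i})^2/(2\sigma^2)\bigr)$ and substituting the induction bounds gives, for $t \ge 1$, an exponent $(w^{(t)}_{1i})^2/(2\sigma^2) = \Omega((\eta t \log d)^2)$, so the exponential factor is $d^{-\Omega(\log d)}$ and easily dominates the $O(\sqrt{k}\log d)$ prefactor $1/\sigma$ to yield the claimed $O(1/(\sqrt{dk}\log d))$ bound. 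The hard part will be the base case $t=0$, where the induction lower bound on $|w^{(t)}_{1i}|$ is too weak to make $|a|$ large; handling this will require reverting to the sharper initialization-scale estimates $|w^{(0)}_{1i}| = \Theta(1/(\sqrt{dk}\log^2 d))$ and $\sigma = \Theta(1/(\sqrt{k}\log^2 d))$ from concentration of Gaussian vectors, and then absorbing the residual $|a| = O(1/\sqrt{d})$ into a second-order Taylor cancellation in $b$ when the estimate is invoked by Lemma \ref{lemma:main_ih}.
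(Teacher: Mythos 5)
Your overall plan matches the paper's proof for $t\ge 1$: both arguments reduce everything to a single Gaussian-density estimate and then obtain \eqref{eq:cdf1}--\eqref{eq:cdf2} by integrating the density over an interval of length at most $2|w^{(t)}_{2i}|$, and \eqref{eq:pdf1} by specialising the same uniform bound to $\delta=0$. The only cosmetic difference is which variable carries the monotonicity: the paper introduces $g_z(x)=\tfrac1x\varphi(z/x)$ and uses (their Lemma~\ref{lemma:maxima}) that $g_z$ increases on $(0,|z|)$, then evaluates $g$ at the upper bound on $\|\bar w^{(t)}_i\|$ from \eqref{eq:noise_ih}; you instead hold $\sigma$ fixed and use monotonicity of $\varphi$ in its argument, reducing to the inequality $(|w^{(t)}_{1i}|-|w^{(t)}_{2i}|)^2\ge 2\sigma^2\log(1/(\sqrt{2\pi}\sigma))$. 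These are the same calculation written in different coordinates, and the hypotheses $d>\exp((4c_0e/\eta)^2)$ and $d/\log d>\sqrt k$ enter in the same way.

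Where you go beyond the paper is in flagging $t=0$. That concern is real: the paper's chain $x^*\ge\big|\,|\tfrac{t\eta v_i}{2}\pm\tfrac{\eta v_i}{8}|-\tfrac{\eta v_i}{8}\,\big|\ge |t-\sfrac12|\tfrac{\eta v_i}{2}\ge\tfrac{\eta v_i}{4}$ is simply false at $t=0$, where the left side can be $0$ because \eqref{eq:lin_ih} gives no lower bound on $|w^{(0)}_{1i}|$. Indeed at initialisation $|w^{(0)}_{1i}|/\|\bar w^{(0)}_i\|=O(1/\sqrt d)$ while $1/\|\bar w^{(0)}_i\|=\Theta(\sqrt{k}\log d)$, so the quantity on the left of \eqref{eq:density} is of order $\sqrt k\log d$, not $\le1$, and correspondingly the product $|w^{(0)}_{2i}|\cdot\max_\delta\sigma^{-1}\varphi(\cdot)$ is $\Theta(1/\sqrt d)$, which exceeds the claimed $O(1/(\sqrt{dk}\log d))$ by a factor $\sqrt k\log d$. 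So both the paper's proof and the lemma as stated appear to fail at $t=0$. However, your proposed repair --- a second-order Taylor cancellation in $b=w_{2i}/\sigma$ --- only rescues combinations that are even in $b$, namely $\phi(a+b)+\phi(a-b)-2\phi(a)$ and $\varphi(a+b)+\varphi(a-b)-2\varphi(a)$ in the linear and noise updates. The slab update in Lemma~\ref{lemma:main_ih} contains the odd combination $\phi(a+b)-\phi(a-b)\approx 2\varphi(a)b$, which has a surviving linear term of order $1/\sqrt d$; no Taylor cancellation removes it, so the fix as described does not close the gap for \eqref{eq:slab_ih} at the first step. Repairing this would require something beyond what you sketch --- e.g.\ a different invariant for the slab coordinate at $t=0$, a smaller step size, or exploiting a sign pattern in $v_i$ --- so this part should not be presented as a routine technicality.
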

\begin{proof}
Let $g_z(x) = \frac{1}{x} \varphi(\frac{z}{x})$ and $h(x) = \max_{|\delta| \leq |w^{(t)}_{2i}|} \frac{1}{x} \varphi(\frac{w^{(t)}_{1i}+\delta}{x}) = \max_{|\delta| \leq |w^{(t)}_{2i}|} g_{w^t_{1i}+\delta}(x)$. To prove \Cref{eq:density}, we show that an upper bound on $||w^{(t)}||$ is less than a lower bound on $\arg\max_x h(x)$, which subsequently implies that $h(||w^{(t)}||) < \max_x h(x)$ because $h$ is an increasing function for all $|x| \leq \arg\max_x h(x)$.

First, we find the maximizer $x^*$ of $h(x)$ as follows:
\begin{align*}
\max_x h(x) &= \max_x \max_{|\delta| \leq |w^{(t)}_{2i}|} g_{w^{(t)}_{1i}+\delta}(x) \overset{(a)}{=} \max_{|\delta| \leq |w^{(t)}_{2i}|} \frac{e^{-1}}{|w^{(t)}_{1i}+\delta|} & \text{when } x^*=|w^{(t)}_{1i}+\delta|
\end{align*}
where $(a)$ follows from lemma \ref{lemma:maxima}. Next, we lower bound the maximizer $x^*$ of $h(x)$:
\small
\begin{align*}
&x^*=|w^{(t)}_{1i}+\delta|\geq |w^{(t)}_{1i}+w^{(t)}_{2i}|\geq \Big||w^{(t)}_{1i}|-|w^{(t)}_{2i}|\Big|
\overset{(a)}{\geq} \Big||\frac{t\eta v_i}{2} \pm \frac{c_0 (1+\hat{c})^t}{\sqrt{dk}\log d}|-|\frac{c_0 (1+\hat{c})^t}{\sqrt{dk}\log d}|\Big| \\
&\overset{(b)}{\geq} \Big||\frac{t\eta v_i}{2} \pm \frac{\eta v_i}{8}|-|\frac{\eta v_i}{8}|\Big| \geq \Big|t-\sfrac{1}{2}\Big|\frac{\eta v_i}{2} \geq \frac{\eta v_i}{4}
\end{align*} \normalsize
where $(a)$ follows from the weights in the linear and slab coordinate at timestep $t$ (equations \eqref{eq:lin_ih} \& \eqref{eq:slab_ih}) and $(b)$ is because $\frac{c_0 (1+\hat{c})^t}{\sqrt{dk}\log d} \leq \frac{\eta v_i}{8}$ when $\sqrt{d} \geq \frac{8c_0 e^{1}}{\eta}$. Therefore, $\arg\max_x h(x) \geq \sfrac{\eta v_i}{4}$. We can use the upper bound on the $\ell_2$ norm of the gradient along the noise coordinates (equation \eqref{eq:noise_ih}) and $d \geq \exp(\frac{4c_0e^{1}}{\eta})$ to show that $||\bar{w}^{(t)}_i||$ is less than $x^*$:
\begin{align*}
||\bar{w}^{(t)}_i|| \leq \frac{c_0 (1+\hat{c})^t}{\sqrt{dk}\log d} \leq \frac{\eta v_i}{4} \leq \arg\max_x h(x)
\end{align*}
From lemma \ref{lemma:maxima}, we know that $h(x)$ is an increasing function for all $|x| < x^*$. This implies that $h(||w^{(t)}_{i}||) \leq h(\frac{c_0 (1+\hat{c})^t}{\sqrt{dk}\log d}) \leq h(\frac{\eta v_i}{4}) \leq h(x^*)$. Therefore, when $d \geq \exp((\frac{4c_0e^{1}}{\eta})^2)$ and $\sfrac{d}{\log d} \geq \sqrt{k}$, we obtain the desired result as follows:
\begin{align*}
\max_{|\delta| \leq |w^{(t)}_{2i}|} \frac{1}{||\bar{w}^t||} \varphi \Big(\frac{w^{(t)}_{1i}+\delta}{||\bar{w}^{(t)}_i||}\Big) = h(||w^{(t)}_{i}||) \leq h(\frac{c_0 e^{1}}{\sqrt{dk}\log d}) \leq \frac{\sqrt{k} \log d}{c_0 e^{1}} \frac{1}{d^{(\frac{\eta}{4c_0e^{1}})^2 \log d}} \leq 1
\end{align*}
Now, we can prove equations \eqref{eq:cdf1}, \eqref{eq:cdf2} and \eqref{eq:pdf1} using equation \eqref{eq:density} as follows:
\small
\begin{align}
&\Big| \phi\Big(\frac{w^{(t)}_{1i}+w^{(t)}_{2i}}{||\bar{w}^{(t)}||}\Big) - \phi\Big(\frac{w^{(t)}_{1i}-w^{(t)}_{2i}}{||\bar{w}^{(t)}||}\Big)\Big| \leq 2|w^{(t)}_{2i}| \cdot \max_{|\delta| \leq |w^{(t)}_{2i}|} \frac{1}{||\bar{w}^t||} \varphi \Big(\frac{w^{(t)}_{1i}+\delta}{||\bar{w}^{(t)}_i||}\Big) \overset{(a)}{\leq} \frac{2c_0 (1+\hat{c})^t}{\sqrt{dk}\log d}  \\
&\Big| \phi\Big(\frac{w^{(t)}_{1i}+w^{(t)}_{2i}}{||\bar{w}^{(t)}||}\Big) - \phi\Big(\frac{w^{(t)}_{1i}}{||\bar{w}^{(t)}||}\Big)\Big| \leq |w^{(t)}_{2i}| \cdot \max_{|\delta| \leq |w^{(t)}_{2i}|} \frac{1}{||\bar{w}^t||} \varphi \Big(\frac{w^{(t)}_{1i}+\delta}{||\bar{w}^{(t)}_i||}\Big) \overset{(a)}{\leq} \frac{c_0 (1+\hat{c})^t}{\sqrt{dk}\log d}  \\
& \frac{1}{||\bar{w}^t||} \varphi \Big(\frac{w^{(t)}_{1i}}{||\bar{w}^{(t)}_i||}\Big) \leq \max_{|\delta| \leq |w^{(t)}_{2i}|} \frac{1}{||\bar{w}^t||} \varphi \Big(\frac{w^{(t)}_{1i}+\delta}{||\bar{w}^{(t)}_i||}\Big) \overset{(a)}{\leq} \frac{c_0 (1+\hat{c})^t}{\sqrt{dk}\log d}
\end{align} \normalsize
where $(a)$ is due to equation \eqref{eq:noise_ih}.
\end{proof}

\subsection{Closed-form Gradient Expressions}
\label{sec:proof_gradients}

In this section, we provide closed-form expressions for gradients along the linear, slab and noise coordinates: $\grad_{w_{1i}} \loss_f(\S_n)$, $\grad_{w_{2i}} \loss_f(\S_n)$ and $\grad_{\bar{w}_i} \loss_f(\S_n)$. First, we provide a closed-form expression for the gradient along the linear coordinate:
\begin{lemma}
\label{lemma:grad_linear}
If $n > cd^2$ and $y_i f(x_i) < 1 \, \forall (x_i, y_i) \in \S_n$, then w.p. greater than $1-\frac{3}{n}$:
\begin{align*}
  \grad_{w_{1i}} \loss_f(\S_n) = \minus \frac{v_i}{4}\Big[2+\phi\Big(\frac{w_{1i}+w_{2i}}{||\bar{w}_i||}\Big) + \phi\Big(\frac{w_{1i} \minus w_{2i}}{||\bar{w}_i||}\Big) - 2\phi\Big(\frac{ w_{1i}}{||\bar{w}_i||}\Big) \Big] \pm \frac{5v_i}{d} \sqrt{\frac{\log (cd^2)}{c}}
\end{align*}
\end{lemma}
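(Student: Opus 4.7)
The plan is to reduce the empirical gradient to a scaled empirical mean of activation indicators using the structure of \lsn, compute its population expectation in closed form by conditioning on the discrete labels and integrating out the Gaussian noise coordinates, and then control the deviation between empirical and population gradients by Hoeffding's inequality.

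\emph{Step 1 (reduce to an indicator sum).} Since $y_j f(x_j) < 1$ for every $(x_j, y_j) \in \S_n$, the hinge loss is active on the batch, so
\[
\grad_{w_{1i}} \ell(x_j, y_j) = -\,y_j v_i \cdot x_{j,1} \cdot \ind{w_i \cdot x_j \geq 0}.
\]
In \lsn the linear coordinate is $x_{j,1} = y_j$, so $y_j x_{j,1} = y_j^2 = 1$ and the empirical gradient collapses to
\[
\grad_{w_{1i}} \loss_f(\S_n) = -\,\frac{v_i}{n}\sum_{j=1}^n \ind{w_i \cdot x_j \geq 0}.
\]

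\emph{Step 2 (population expectation).} Decompose $w_i \cdot x_j = w_{1i} y_j + w_{2i} \tfrac{y_j+1}{2}\varepsilon_j + \bar w_i \cdot \bar x_j$, where $\bar w_i \cdot \bar x_j \sim \mathcal{N}(0, \|\bar w_i\|^2)$ is independent of the discrete variables $(y_j,\varepsilon_j)$. Conditioning on the four equally likely values of $(y_j,\varepsilon_j) \in \{\pm 1\}^2$ and integrating out the Gaussian component gives
\[
\mathbb{E}\,\ind{w_i \cdot x \geq 0} \;=\; \tfrac{1}{4}\Big[\phi\big(\tfrac{w_{1i}+w_{2i}}{\|\bar w_i\|}\big) + \phi\big(\tfrac{w_{1i}-w_{2i}}{\|\bar w_i\|}\big) + 2\,\phi\big(\tfrac{-w_{1i}}{\|\bar w_i\|}\big)\Big],
\]
where the two $y=-1$ branches collapse into the final term because the slab factor $\tfrac{y+1}{2}$ vanishes there. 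Using the symmetry $\phi(-z) = 1 - \phi(z)$ and multiplying by $-v_i$ reproduces the population expression claimed in the lemma.

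\emph{Step 3 (concentration).} The summands $\ind{w_i \cdot x_j \geq 0} \in [0,1]$ are i.i.d.\ and bounded, so Hoeffding's inequality gives, for any $t > 0$,
\[
\pr\!\Big[\,\Big|\tfrac{1}{n}\textstyle\sum_j \ind{w_i \cdot x_j \geq 0} - \mathbb{E}\ind{w_i \cdot x \geq 0}\Big| > t\,\Big] \;\leq\; 2 e^{-2nt^2}.
\]
Choosing $t = \tfrac{5}{d}\sqrt{\log(cd^2)/c}$ and using $n \geq c d^2$ yields $2nt^2 \geq 50 \log(cd^2)$, so the failure probability is at most $2(cd^2)^{-50}$, well inside the stated $3/n$ slack. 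Scaling by $|v_i|$ converts this indicator-level deviation into the claimed additive gradient error of $\pm \tfrac{5 v_i}{d}\sqrt{\log(cd^2)/c}$.

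\emph{Main obstacle.} The argument is essentially mechanical; the only points demanding care are (a) invoking $x_{j,1} = y_j$ at exactly the right moment so that the leading $y_j$ from the hinge derivative cancels into a constant, and (b) correctly tracking the asymmetry in $y$ of the slab coordinate when enumerating the four label-noise cases---failing to notice that $\tfrac{y+1}{2}$ annihilates half of them would double-count the slab contribution. The concentration step is standard and has ample constant slack, so the same Hoeffding template will transfer verbatim to the analogous slab and noise-coordinate gradient lemmas referenced later.
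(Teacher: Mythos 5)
Your proof is correct and reaches the same closed-form expectation as the paper, but the route through the concentration step is genuinely different and a bit cleaner. The paper conditions on the observed value of the slab coordinate $x_{2j}\in\{0,\pm 1\}$, splits the empirical sum into three sub-sums of the form $\frac{1}{n}\sum_j \ind{x_{2j}=l}\ind{Z_j\geq k_l}$, and controls each with its specialized Lemma~\ref{lemma:sum_cdf} (itself a Hoeffding bound for Bernoulli-times-indicator summands), union-bounding over the three branches to obtain the $1-\tfrac{3}{n}$ probability. You instead condition on the latent $(y,\varepsilon)\in\{\pm 1\}^2$ to evaluate $\mathbb{E}\,\ind{w_i\cdot x\geq 0}$ in one shot, then apply plain Hoeffding once to the single $\{0,1\}$-valued indicator sum. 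Both computations yield the identical mean $\tfrac14\bigl[\phi(\tfrac{w_{1i}+w_{2i}}{\|\bar w_i\|})+\phi(\tfrac{w_{1i}-w_{2i}}{\|\bar w_i\|})+2\phi(\tfrac{-w_{1i}}{\|\bar w_i\|})\bigr]$ and the same $\phi(-z)=1-\phi(z)$ simplification. Your approach avoids the decomposition and the auxiliary lemma, giving a tighter failure probability of $2(cd^2)^{-50}$; the paper's version gets exactly the $\tfrac{3}{n}$ that appears in the lemma statement and is uniformly valid across the whole allowed range $n\leq d^\alpha/c$ regardless of $\alpha$, whereas your bound relies on $\alpha$ not being astronomically large --- a cosmetic difference, not a gap. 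Both proofs share the same implicit (and unaddressed) conditioning subtlety in treating the samples as unconditionally i.i.d.\ while restricting to batches on which the hinge is active, so you are not introducing any new issue. Your warning about the $\tfrac{y+1}{2}$ factor annihilating the slab contribution when $y=-1$ is exactly the point encoded by the paper's $(2l^2-1)$ trick.
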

\begin{proof}
  \small
  \begin{align*}
  \grad_{w_{1i}} \loss_f(\S_n) &= \minus \frac{v_i}{n} \sum^n_{j=1} \ind{y_jf(x_j) \leq 1}\ind{w^T_i x_j \geq 0}y_j x_{1j} \\
  &\overset{(a)}= \minus \frac{v_i}{n} \sum^n_{j=1} \ind{\bar{w}^T_i\bar{x}_j \geq -w_{i1}y_j - w_{i2}\ind{y_j=1}\varepsilon_j} \\
  &\overset{(b)}{=} \minus \frac{v_i}{n} \sum^n_{j=1} \ind{Z_j \geq \frac{-w_{1i}y-w_{2i}\ind{y_j=1}\varepsilon_j}{||\bar{w}_i||}} & \text{where } Z_j \sim \mathcal{N}(0,1) \\
  &= \minus v_i \sum^{\{0,1,-1\}}_{l} \frac{1}{n} \sum^n_{i=1} \ind{x_{2j}=l}\ind{Z_j \geq \frac{\minus w_{1i}(2l^2\minus 1)\minus w_{2i}l}{||\bar{w}_i||}} \\
  &{=} \minus v_i \sum^{\{0,1,\minus 1\}}_{l}  \Big(\pr(x_{2j}=l)\phi \Big(\frac{w_{1i}(2l^2\minus 1)+w_{2i}l}{||\bar{w}_i||}\Big) \pm \sqrt{\frac{\log n}{n}}\Big) & \text{via lemma } \ref{lemma:sum_cdf} \\
  & {=} \minus \frac{v_i}{4} [ \phi(\frac{w_{1i}+w_{2i}}{||\bar{w}_i||}) + \phi(\frac{w_{1i} \minus w_{2i}}{||\bar{w}_i||}) + 2\phi(\frac{\minus w_{1i}}{||\bar{w}_i||})] \pm \frac{5v_i}{d} \sqrt{\frac{\log (cd^2)}{c}} & n > cd^2 \\
  &= \minus \frac{v_i}{4}\Big[2+\phi\Big(\frac{w_{1i}+w_{2i}}{||\bar{w}_i||}\Big) + \phi\Big(\frac{w_{1i} \minus w_{2i}}{||\bar{w}_i||}\Big) - 2\phi\Big(\frac{ w_{1i}}{||\bar{w}_i||}\Big) \Big] \pm \frac{5v_i}{d} \sqrt{\frac{\log (cd^2)}{c}}  &\text{ w.p. } 1-\frac{3}{n}
  \end{align*}
  \normalsize
where $(a)$ is due to $y_i x_{i1} = y^2_i = 1$ \& $\ind{y_j f(x_j) \leq 1}=1$ and $(b)$ is due to $\ind{\bar{w}^T_i \bar{x}_j \geq k} = \ind{||\bar{w}_i|| Z_j \geq k}$.
\end{proof}

Similarly, we provide a closed-form expression for the gradient along the slab coordinate:
\begin{lemma}
\label{lemma:grad_slab}
If $n > cd^2$ and $y_i f(x_i) < 1 \, \forall (x_i, y_i) \in \S_n$, then w.p. greater than $1-\frac{3}{n}$:
\begin{align*}
  &\grad_{w_{2i}} \loss_f(\S_n) = \minus \frac{v_i}{4} \Big[\phi\Big(\frac{w_{1i}+w_{2i}}{||\bar{w}_i||}\Big) - \phi\Big(\frac{w_{1i}-w_{2i}}{||\bar{w}_i||}\Big) \Big] \pm \frac{5v_i}{d} \sqrt{\frac{\log (cd^2)}{c}}
\end{align*}
\end{lemma}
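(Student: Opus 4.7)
The plan is to mirror the proof of Lemma 2 (gradient along the linear coordinate), with the key difference that the slab feature $x_2$ is supported on $\{-1,0,1\}$ rather than $\{-1,1\}$, so only positive-label samples contribute, and the $\pm 1$ values of $\varepsilon_j$ will produce a \emph{difference} of CDFs rather than a sum. Since the hinge loss is active on every sample in $\S_n$ (by assumption $y_j f(x_j) < 1$), the empirical gradient reduces to
\[
\grad_{w_{2i}} \loss_f(\S_n) \;=\; -\frac{v_i}{n} \sum_{j=1}^n \ind{w_i^\top x_j \geq 0}\, y_j x_{2j}.
\]
The first step is to substitute $x_{2j} = \frac{y_j+1}{2}\varepsilon_j$, so that $y_j x_{2j} = \varepsilon_j$ when $y_j = 1$ and $y_j x_{2j} = 0$ when $y_j = -1$. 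Thus only positive-label samples contribute, and on those samples $\ind{w_i^\top x_j \geq 0} = \ind{\bar{w}_i^\top \bar x_j \geq -w_{1i} - w_{2i}\varepsilon_j}$.

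Next, I will use the rotational invariance of the Gaussian noise block to write $\bar{w}_i^\top \bar{x}_j = \|\bar{w}_i\| Z_j$ with $Z_j \sim \mathcal{N}(0,1)$. Conditioning on $y_j = 1$ (which happens with probability $1/2$) and on $\varepsilon_j \in \{+1,-1\}$ (each with probability $1/2$), the population expectation becomes
\[
\mathbb{E}\!\left[\ind{w_i^\top x \geq 0}\, y\, x_2\right] \;=\; \tfrac{1}{4}\!\left[\phi\!\left(\tfrac{w_{1i}+w_{2i}}{\|\bar{w}_i\|}\right) - \phi\!\left(\tfrac{w_{1i}-w_{2i}}{\|\bar{w}_i\|}\right)\right],
\]
where $\phi$ denotes the standard normal CDF, using $\Pr(Z \geq -t) = \phi(t)$. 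Note that the two $\varepsilon_j$ cases enter with opposite signs, producing the desired subtraction.

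The final step is the concentration bound. Following exactly the application of the auxiliary Lemma~\ref{lemma:sum_cdf} used in the proof of Lemma 2, each empirical average of the form $\tfrac{1}{n}\sum_j \ind{x_{2j}=l}\ind{Z_j \geq \cdot}$ concentrates around its mean within $\sqrt{\log n / n}$ with probability at least $1 - 3/n$. Summing the three deviation terms (the $l \in \{0,1,-1\}$ cases, although $l=0$ drops out since $y\cdot 0 = 0$), plugging in $n \geq cd^2$, and absorbing constants gives the claimed error term $\pm \tfrac{5 v_i}{d}\sqrt{\log(cd^2)/c}$.

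I do not expect any serious obstacle: the argument is a direct structural analogue of Lemma 2, and all nontrivial ingredients (the Gaussian projection, the concentration bound, the activity of the hinge loss) are reused verbatim. The only item requiring care is the bookkeeping that the $y_j = -1$ samples contribute zero and that the two $\varepsilon_j$ branches combine as a difference, which is what distinguishes the slab gradient from the linear gradient and ultimately causes the slab weights to remain stuck near initialization in Lemma~\ref{lemma:main_ih}.
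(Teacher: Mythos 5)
Your proposal is correct and follows essentially the same route as the paper's proof: drop the hinge-activity indicator, substitute $y_j x_{2j} = \ind{y_j=1}\varepsilon_j$ so only positive-label samples contribute, invoke Gaussian rotational invariance to reduce $\bar w_i^\top \bar x_j$ to $\|\bar w_i\| Z_j$, split over the two $\varepsilon_j$ branches to produce the difference of CDFs, and then apply Lemma~\ref{lemma:sum_cdf} plus $n \geq cd^2$ for the deviation term. The only small inaccuracy is that Lemma~\ref{lemma:sum_cdf} gives failure probability $1/n$ per term (the aggregate $3/n$ comes from the union bound), but this does not affect the argument.
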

\begin{proof}
  \begin{align*}
  \grad_{w_{2i}} \loss_f(\S_n) &= \minus \frac{v_i}{n} \sum^n_{j=1} \ind{y_jf(x_j) \leq 1}\ind{w^T_i x_j \geq 0}y_j x_{2j} \\
  &\overset{(a)}= \minus \frac{v_i}{n} \sum^n_{j=1} \ind{\bar{w}^T_i\bar{x}_j \geq -w_{1i}y_j - w_{2i}\ind{y_j=1}\varepsilon_j}  \ind{y_j=1} \varepsilon_j \\
  &\overset{(b)}= v_i \sum^{\{\minus 1,1\}}_l \frac{1}{n} \sum^n_{j=1} (-1)^{\ind{\varepsilon_j=l}} \ind{Z_j \geq \frac{-w_{1i}-w_{2i}l}{||\bar{w}_i||}} \\
  &= \minus v_i \sum^{\{1,\minus 1\}}_{l}  \Big(\pr(x_{2j}=l)\phi \Big(\frac{w_{1i}+w_{2i}l}{||\bar{w}_i||}\Big) \pm \sqrt{\frac{\log n}{n}}\Big) & \text{via lemma} ~\ref{lemma:sum_cdf} \\
  &= \minus \frac{v_i}{4} [ \phi(\frac{w_{1i}+w_{2i}}{||\bar{w}_i||}) - \phi(\frac{w_{1i} \minus w_{2i}}{||\bar{w}_i||})] \pm \frac{5v_i}{d} \sqrt{\frac{\log (cd^2)}{c}} & n > cd^2 \\
  &= \minus \frac{v_i}{4} \Big[\phi\Big(\frac{w_{1i}+w_{2i}}{||\bar{w}_i||}\Big) - \phi\Big(\frac{w_{1i}-w_{2i}}{||\bar{w}_i||}\Big) \Big] \pm \frac{5v_i}{d} \sqrt{\frac{\log (cd^2)}{c}} & \text{ w.p. } 1-\frac{3}{n}
  \end{align*}
  \normalsize
where $(a)$ is due to $y_i x_{i2} = \ind{y_i=1}\varepsilon_i$ \& $\ind{y_j f(x_j) \leq 1}=1$ and $(b)$ is due to $\ind{\bar{w}^T_i \bar{x}_j \geq k} = \ind{||\bar{w}_i|| Z_j \geq k}$.
\end{proof}

Next, we provide a closed-form expression for the gradient along the noise coordinates:
\begin{lemma}
\label{lemma:grad_noise}
If $n > cd^2$ and $y_i f(x_i) < 1 \, \forall (x_i, y_i) \in \S_n$, then w.p. greater than $1-\frac{1}{3n}$:
\small
\begin{align*}
  &\grad_{\bar{w}_{i}} \loss_f(\S_n) = \bar{\G}\bar{w}_i \pm \frac{3|v_i|\log (\sqrt{c}d)}{\sqrt{c}d} \frac{\bar{w}_i}{||\bar{w}_i||} \pm \frac{6|v_i|}{\sqrt{cd}} u^{\perp}_i \\
  &\bar{\G} = \minus \frac{v_i}{4 ||\bar{w}_i||}\Big[\varphi\Big(\frac{w_{1i}+w_{2i}}{||\bar{w}_i||}\Big) + \varphi\Big(\frac{w_{1i}\minus w_{2i}}{||\bar{w}_i||}\Big) - 2\varphi\Big(\frac{w_{1i}}{||\bar{w}_i||}\Big) \Big]
\end{align*}
\normalsize
where $u^{\perp}_i$ is some unit vector orthogonal to $\bar{w}_i$.
\end{lemma}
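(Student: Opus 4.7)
The plan is to parallel the derivations of Lemmas~\ref{lemma:grad_linear} and~\ref{lemma:grad_slab}, but now the ``coordinate'' being differentiated is a $(d-2)$-dimensional Gaussian vector rather than a bounded discrete variable. First I would use the standing assumption $y_j f(x_j) < 1$ for every $(x_j,y_j) \in \S_n$ to drop the hinge-loss indicator and write
\begin{align*}
  \grad_{\bar{w}_i}\loss_f(\S_n) = -\frac{v_i}{n}\sum_{j=1}^n \ind{w_i^\top x_j \geq 0}\, y_j\, \bar{x}_j.
\end{align*}
Next I would decompose $\bar{x}_j = Z_j\, \bar{w}_i/\norm{\bar{w}_i} + \bar{x}_j^{\perp}$ where $Z_j := \bar{w}_i^\top \bar{x}_j/\norm{\bar{w}_i} \sim \mathcal{N}(0,1)$ and $\bar{x}_j^{\perp}$ is a standard Gaussian on the $(d-3)$-dimensional subspace orthogonal to $\bar{w}_i$, independent of $Z_j$, $y_j$, and $\varepsilon_j$. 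As in the earlier two lemmas, the ReLU indicator becomes $\ind{Z_j \geq c_j}$ with $c_j := -(w_{1i}y_j + w_{2i}\ind{y_j=1}\varepsilon_j)/\norm{\bar{w}_i}$, and the gradient splits cleanly into a parallel piece $\bar{S}_{\parallel}\, \bar{w}_i/\norm{\bar{w}_i}$ and a perpendicular piece $\bar{S}_{\perp}$, where $\bar{S}_{\parallel} := -\frac{v_i}{n}\sum_j \ind{Z_j \geq c_j} y_j Z_j$ and $\bar{S}_{\perp} := -\frac{v_i}{n}\sum_j \ind{Z_j \geq c_j} y_j \bar{x}_j^{\perp}$.

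For the parallel piece, I would invoke the elementary Gaussian identity $\E[\ind{Z \geq c}Z] = \varphi(c)$ together with the four-way case analysis over $(y_j,\varepsilon_j)$ used in Lemma~\ref{lemma:grad_slab}. Because $\varphi$ is even, $\E[\ind{Z_j \geq c_j}\, y_j Z_j]$ collapses to $\tfrac{1}{4}[\varphi(\tfrac{w_{1i}+w_{2i}}{\norm{\bar{w}_i}}) + \varphi(\tfrac{w_{1i}-w_{2i}}{\norm{\bar{w}_i}}) - 2\varphi(\tfrac{w_{1i}}{\norm{\bar{w}_i}})]$, so the population parallel gradient equals $\bar{\G}\bar{w}_i$ exactly as claimed. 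For concentration, the summands are sub-Gaussian; I would truncate $|Z_j|$ at $\sqrt{2\log n}$ (a union bound over the $n$ Gaussian tails holds with probability at least $1-1/(9n)$) and then apply Hoeffding's inequality to the bounded summands to get a deviation of order $\sqrt{\log n / n}$ with probability $1-1/(9n)$. Substituting $n > cd^2$ and simplifying $\log n = \Theta(\log(\sqrt{c}d))$ yields the $\frac{3|v_i|\log(\sqrt{c}d)}{\sqrt{c}d}$ correction aligned with $\bar{w}_i/\norm{\bar{w}_i}$.

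For the perpendicular piece, conditional on $\{Z_j, y_j, \varepsilon_j\}_j$ the sum $\sum_j \ind{Z_j \geq c_j} y_j \bar{x}_j^{\perp}$ is a centered Gaussian vector on $\bar{w}_i^{\perp}$ with covariance $N_+ \cdot \Pi_{\bar{w}_i^\perp}$, where $N_+ := \sum_j \ind{Z_j \geq c_j} \leq n$ and $\Pi_{\bar{w}_i^\perp}$ denotes the projector onto the $(d-3)$-dimensional orthogonal complement; a standard $\chi^2_{d-3}$ tail then gives that its norm is at most $2\sqrt{nd}$ with probability at least $1-1/(9n)$. Dividing by $n$ and substituting $n > cd^2$ yields the magnitude bound $\frac{6|v_i|}{\sqrt{cd}}$ along a unit vector $u^\perp_i \perp \bar{w}_i$ (namely the direction of $\bar{S}_\perp$). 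A final union bound over the three concentration events keeps the total failure probability under $1/(3n)$, matching the claim. The main obstacle is the perpendicular bound: it requires verifying that the conditional Gaussianity is genuinely preserved after conditioning on $\{Z_j, y_j, \varepsilon_j\}_j$ (so that $\chi^2$ tails apply) and carefully choosing the truncation/Hoeffding constants so that the final rate scales as $1/\sqrt{cd}$ rather than $\sqrt{d}\log d/n$ or worse.
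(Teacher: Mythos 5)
Your proposal is correct and follows essentially the same route as the paper: the same decomposition of $\bar{x}_j$ into its component along $\bar{w}_i$ and the orthogonal complement, the identity $\mathds{E}[\ind{Z \geq c}Z] = \varphi(c)$ plus a truncation-and-Hoeffding concentration step for the parallel part, and a Gaussian-norm ($\chi^2$) tail bound for the perpendicular part, with $n > cd^2$ converting both error terms into the stated rates. The conditional-Gaussianity point you flag as the main obstacle is immediate since $\bar{x}_j^{\perp}$ is independent of $Z_j$, $y_j$, and $\varepsilon_j$, which is exactly how the paper handles it.
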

\begin{proof}
Let $S \subset \R^{d-2}$ denote the subspace spanned by $\bar{w}_i$. Then, for any $x \in \R^{d}$, $x = x^S + x^{S^{\perp}}$ where $x^S$ \& $x^{S^{\perp}}$ are the orthogonal projections of $x$ onto $S$ and its orthogonal complement $S^{\perp}$. We show the $\ell_2$ norm of the orthogonal projections of $\grad_{\bar{w}_{i}} \loss_f(\S_n)$ onto $S$ and $S^{\perp}$ are $O(\frac{1}{\sqrt{d}})$:
\small
\begin{align*}
\grad_{\bar{w}_{i}} \loss_f(\S_n) &= \minus \frac{v_i}{n} \sum^n_{j=1} \ind{y_jf(x_j) \leq 1}\ind{w^T_i x_j \geq 0}y_j \bar{x}_{j} \\
&= \underbrace{\minus \frac{v_i}{n} \sum^n_{j=1} \ind{w^T_i x_j \geq 0}y_j (\bar{x}^S_{j})}_{\text{case 1}} \underbrace{- \frac{v_i}{n} \sum^n_{j=1} \ind{w^T_i x_j \geq 0}y_j (\bar{x}^{S^{\perp}}_{j})}_{\text{case 2}}
\end{align*}
\normalsize
Next, we show that the projection of $\grad_{\bar{w}_{i}} \loss_f(\S_n)$ onto $S^{\perp}$ (i.e., case 2) has small norm w.p. greater than $1-\frac{1}{d}$:
\small
\begin{align*}
  ||\grad_{\bar{w}_{i}} \loss_f(\S_n))^{S^{\perp}}|| &= ||\frac{v_i}{n} \sum^n_{j=1} \ind{w^T_i x_j \geq 0}y_j \bar{x}^{S^{\perp}}_{j}|| = ||\frac{v_i}{n} \sum^n_{j=1} \ind{w^T_i x^S_j \geq 0}y_j \bar{x}^{S^{\perp}}_{j}|| \\
  & \overset{(a)}\leq |v_i| \cdot || \sum^n_{j=1} \mathcal{N}(0, \frac{1}{n^2} I_{d-2}) || = |v_i| \cdot || \mathcal{N}(0, \frac{1}{n} I_{d-2}) || \\
  & \overset{(b)}\leq 4|v_i|\sqrt{\frac{d}{n}} \pm 2 |v_i|\sqrt{\frac{\log n}{n}} \overset{(c)}{\leq} \frac{6|v_i|}{\sqrt{cd}} & \text{ w.p. } 1-\frac{1}{n}
\end{align*} \normalsize
where $(a)$ is because $x^S_j \perp \bar{x}^{S^{\perp}_j}$, $(b)$ is via fact \ref{lemma:norm_l2} and $(c)$ is due to $n \geq cd^2$. Next, we show that the norm of the gradient in the direction of $\bar{w}_i$ (i.e., case 1) is close to $\bar{\G}$ w.h.p.:
\small
\begin{align*}
  &\grad_{\bar{w}_{i}} \loss_f(\S_n))^{S} = \minus \frac{v_i}{n} \sum^n_{j=1} \ind{w^T_i x_j \geq 0}y_j \bar{x}^S_{j} \\
  &\overset{(a)}= \minus \Big(\frac{1}{n} \sum^n_{j=1} \ind{w^T_ix^S_j \geq 0} y_j \bar{w}^T_i \bar{x}_j \Big)\frac{v_i \bar{w}_i}{||\bar{w}_i||^2} \\
  &\overset{(b)}= \minus \Big(\frac{1}{n} \sum^n_{j=1} \ind{Z_j \geq \frac{-w_{1i}y-w_{2i}\ind{y_j=1}\varepsilon_j}{||\bar{w}_i||}} y_j Z_j \Big)\frac{v_i \bar{w}_i}{||\bar{w}_i||} \\
  &= (\sum^{\{0,\pm 1\}}_{l} \frac{(-1)^{\ind{l\neq 0}}}{n} \sum^n_{i=1} \ind{x_{2j}=l \land Z_j \geq \frac{\minus w_{1i}(2l^2\minus 1)\minus w_{2i}l}{||\bar{w}_i||}} Z_j )\frac{v_i \bar{w}_i}{||\bar{w}_i||} \\
  &= \Big[2\varphi(\frac{w_{i1} }{||\bar{w}_i||})-\varphi(\frac{w_{1i} + w_{2i} }{||\bar{w}_i||})-\varphi(\frac{w_{1i} - w_{2i} }{||\bar{w}_i||}) \pm \frac{5\log n}{\sqrt{n}} \Big] \frac{v_i \bar{w}_i}{4||\bar{w}_i||} & \text{via lemma } \ref{lemma:norm_bern_sum} \\
  &=  \bar{\G}\bar{w}_i \pm \frac{3|v_i|\log (\sqrt{c}d)}{\sqrt{c}d} \frac{\bar{w}_i}{||\bar{w}_i||} & \text{ w.p. } 1 - \frac{12}{n}
\end{align*}
\normalsize
where $(a)$ is because $\bar{x}^S_j = \frac{\bar{w}^T_ix_j}{||\bar{w}_i||^2}\bar{w}_i$ and $(b)$ is because $(b)$ is due to $\ind{\bar{w}^T_i \bar{x}_j \geq k} = \ind{||\bar{w}_i|| Z_j \geq k}$. Therefore, by combining the results in case 1 and 2, the following holds w.p. greater than $1-\frac{13}{n}$:
\begin{align*}
\grad_{\bar{w}_{i}} \L_f(\S_n) = \bar{\G}\bar{w}_i \pm \frac{3|v_i|\log (\sqrt{c}d)}{\sqrt{c}d} \frac{\bar{w}_i}{||\bar{w}_i||} \pm \frac{6|v_i|}{\sqrt{cd}} u^{\perp}_i
\end{align*}
\end{proof}

\subsection{Miscellaneous Lemmas}
\label{sec:proof_misc}

\begin{lemma}
\label{lemma:norm_max}
Let $X_i \sim \mathcal{N}(0, \sigma^2)$ and $\delta \in (0,1)$. Then, $\max_{i \in [k]} |X_i| \leq \sigma \sqrt{2\log(\frac{2k}{\delta})}$ with probability greater than $1-\delta$,
\end{lemma}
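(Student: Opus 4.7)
The plan is to prove this via a standard two-step argument: a Gaussian tail bound applied to each $|X_i|$ individually, followed by a union bound over $i \in [k]$. Since each $X_i$ is Gaussian with variance $\sigma^2$, the rescaled variable $X_i/\sigma$ is standard normal, and I can appeal to the well-known one-dimensional tail inequality $\Pr(|Z| > t) \leq 2 e^{-t^2/2}$ for $Z \sim \mathcal{N}(0,1)$ and any $t > 0$ (which follows, for instance, from the Mills-ratio bound $\Pr(Z > t) \leq \varphi(t)/t$ or simply from $\Pr(Z > t) \leq e^{-t^2/2}$).

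The first step is to fix the threshold $t = \sigma \sqrt{2 \log(2k/\delta)}$ and bound the probability that a single $|X_i|$ exceeds it. Writing $\Pr(|X_i| > t) = \Pr(|X_i/\sigma| > t/\sigma)$ and plugging $t/\sigma = \sqrt{2\log(2k/\delta)}$ into the Gaussian tail bound gives
\[
\Pr(|X_i| > t) \;\leq\; 2 \exp\!\left(-\tfrac{1}{2} \cdot 2\log(2k/\delta)\right) \;=\; 2 \cdot \frac{\delta}{2k} \;=\; \frac{\delta}{k}.
\]

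The second step is a union bound: since the event $\{\max_{i \in [k]} |X_i| > t\}$ is contained in $\bigcup_{i=1}^{k} \{|X_i| > t\}$, subadditivity of probability yields
\[
\Pr\!\left(\max_{i \in [k]} |X_i| > t\right) \;\leq\; \sum_{i=1}^{k} \Pr(|X_i| > t) \;\leq\; k \cdot \frac{\delta}{k} \;=\; \delta.
\]
Taking complements then gives $\Pr\big(\max_{i \in [k]} |X_i| \leq \sigma \sqrt{2\log(2k/\delta)}\big) \geq 1 - \delta$, which is exactly the claim.

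There is essentially no obstacle here: the statement does not require joint independence of the $X_i$ (the union bound holds regardless), and the constant $2$ in front of $k/\delta$ comes naturally from handling both tails of $|X_i|$. The only minor care is to match the constant inside the logarithm to the threshold chosen, which is why $\sqrt{2\log(2k/\delta)}$ (rather than $\sqrt{2\log(k/\delta)}$) is used. This is a textbook-level fact, and I expect the authors' proof to amount to exactly these two lines.
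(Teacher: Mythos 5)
Your proof is correct and follows essentially the same route as the paper's: a Gaussian tail bound of the form $\Pr(|Z|\geq t)\leq 2e^{-t^2/2}$ (which the paper derives via the Mills-ratio-type identity $\varphi'(x)=-x\varphi(x)$, exactly the derivation you cite as one option), followed by a union bound with $t=\sqrt{2\log(2k/\delta)}$. No gaps.
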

\begin{proof}
  Let $\varphi$ denote the probability density function of the standard normal.
	Also let $Z \sim \mathcal{N}(0, 1)$. Then, for $t \geq 1$, we have:
  \begin{align*}
    \PR{|X| \geq \sigma t} &= \PR{|Z| \geq t} = 2 \int^{\infty}_{t} x \varphi(x) \, dx
    \leq \frac{2}{t} \int^{\infty}_{t} x \varphi(x) \, dx \overset{(a)} \leq \frac{2}{t} \int^{t}_{\infty} \varphi'(x) \, dx
    \leq \frac{2}{t} \varphi(t) \leq 2 \varphi(t)
  \end{align*}
  where $(a)$ is because $\varphi'(x)=\minus x \varphi(x)$. Using union bound with $t = \sqrt{2 \log(\frac{2k}{\delta})} \geq 1 \, \forall \delta \in (0,1)$ gives the desired result.
\end{proof}

\begin{lemma}
\label{lemma:norm_pdf}
  Let $\phi$ and $\varphi$ denote the cumulative distribution function and the
  probability density function of the standard gaussian. Then, for any $Z \sim \mathcal{N}(0,1)$ and $k \in \mathds{R}$:
  \begin{align*}
    \mathds{E}[\ind{Z \geq k}Z] = \varphi(k) = \exp(\sfrac{\minus k^2}{2})
  \end{align*}
\end{lemma}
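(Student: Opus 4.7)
The approach is direct integration against the standard normal density; no probabilistic machinery is needed. Writing out the expectation gives
\[ \mathds{E}[\ind{Z \geq k}Z] \;=\; \int_{k}^{\infty} x\,\varphi(x)\, dx, \]
so the entire task reduces to evaluating this one-dimensional integral. I would then exploit the fact that the integrand is, up to a sign, a total derivative: for the standard Gaussian density one has $\varphi'(x) = -x\,\varphi(x)$. This is the same identity invoked in step $(a)$ of the tail bound in Lemma~\ref{lemma:norm_max} above, so it is already in the paper's toolbox.

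The computation itself is then a single line via the fundamental theorem of calculus:
\[ \int_{k}^{\infty} x\,\varphi(x)\, dx \;=\; -\int_{k}^{\infty} \varphi'(x)\, dx \;=\; \varphi(k) \;-\; \lim_{x\to\infty}\varphi(x) \;=\; \varphi(k), \]
using that $\varphi(x)\to 0$ as $x\to\infty$. Substituting the explicit form $\varphi(x) \propto \exp(-x^2/2)$ gives the second equality in the statement (up to the usual $1/\sqrt{2\pi}$ normalization constant, consistent with the paper's convention for $\varphi$).

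The main (and only) step to verify is the antiderivative relation $\frac{d}{dx}\exp(-x^2/2) = -x\exp(-x^2/2)$, which is elementary. I do not anticipate any real obstacle: the result is a textbook Gaussian moment identity, and the proof does not require truncation arguments, dominated convergence, or any of the concentration tools used elsewhere in the appendix. The only care required is to make the boundary term at $+\infty$ rigorous, which is immediate since $\varphi$ decays super-exponentially.
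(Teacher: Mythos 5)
Your proposal is correct and matches the paper's own argument: the paper also reduces the expectation to $\int_k^\infty x\,\varphi(x)\,dx$ (via a brief detour through conditioning on $\{Z\ge k\}$) and evaluates it with the same identity $\varphi'(x)=-x\,\varphi(x)$. Your parenthetical about the missing $1/\sqrt{2\pi}$ in the stated closed form $\exp(-k^2/2)$ is a fair observation about the lemma's statement, but the proof itself is the same.
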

\begin{proof}
  The expectation $\mathds{E}[\ind{Z \geq k}Z]$ can be simplified as follows:
  \begin{align*}
    \mathds{E}[\ind{Z \geq c}Z] &= \Pr[Z \geq c] \E{Z|Z \geq c}
    = \phi^c(k) \int^{\infty}_{k} x \frac{\varphi(x)}{\phi^c(k)} \, dx
    \overset{(a)}{=} \minus \int^{\infty}_k \varphi'(x) \, dx = \varphi(k)
  \end{align*}
  where $(a)$ is due to $\varphi'(x)=\minus x \varphi(x)$.
\end{proof}

\begin{lemma}
\label{lemma:sum_cdf}
  Let $b_i \sim \text{bernoulli}(p)$ and $Z_i \sim \mathcal{N}(0,1)$. Let $X_i = b_i \ind{Z_i \geq k}$ and $\bar{X} = \frac{1}{n} \sum^n_{i=1} X_i$. Then:
	$$\Pr\Big(|\bar{X}-p\phi(-k)| \geq \sqrt{\frac{\log n}{n}} \Big) \leq \frac{1}{n}$$
\end{lemma}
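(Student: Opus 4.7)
The plan is to recognize this as a standard concentration statement for bounded i.i.d. random variables. Note that each $X_i = b_i \mathds{1}\{Z_i \geq k\}$ takes values in $\{0,1\}$, and the $X_i$ are i.i.d. since the pairs $(b_i, Z_i)$ are i.i.d. A routine computation gives the mean:
\begin{align*}
\mathds{E}[X_i] = \mathds{E}[b_i]\cdot \Pr(Z_i \geq k) = p \cdot (1 - \phi(k)) = p\phi(-k),
\end{align*}
where the last equality uses the symmetry of the standard normal CDF.

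Given that $X_i \in [0,1]$ and $\mathds{E}[\bar{X}] = p\phi(-k)$, I would invoke Hoeffding's inequality to obtain
\begin{align*}
\Pr\bigl(|\bar{X} - p\phi(-k)| \geq t\bigr) \leq 2\exp(-2nt^2).
\end{align*}
Setting $t = \sqrt{\log n / n}$ yields the bound $2\exp(-2\log n) = 2/n^2$, which is at most $1/n$ for all $n \geq 2$.

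There is no real obstacle here: the claim follows immediately once one observes the boundedness of $X_i$ and computes its mean. The only tiny subtlety is the identification $\Pr(Z \geq k) = \phi(-k)$, which makes the stated centering match. For completeness in applications, one could absorb the factor of $2$ into a slightly looser constant, but with $n \geq 2$ the bound $2/n^2 \leq 1/n$ is already enough to deliver the lemma as stated.
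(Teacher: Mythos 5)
Your proof is correct and follows essentially the same route as the paper: compute $\mathds{E}[X_i] = p\phi(-k)$, note $X_i \in [0,1]$, and apply Hoeffding with $t = \sqrt{\log n / n}$. You are slightly more careful than the paper in observing that Hoeffding actually gives $2/n^2$, which is $\leq 1/n$ only for $n \geq 2$ (harmless here, since downstream $n \geq cd^2$).
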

\begin{proof}
  Note that $\E{\bar{X}}=\E{X_i}=\E{b_i}\E{\ind{Z_i \geq k}} = p\phi(-k)$ and $|X_i| \leq 1$. Therefore, using Hoeffding's inequality with $t=\sqrt{\frac{\log n}{n}}$ directly gives the result.
\end{proof}

\begin{lemma}
\label{lemma:norm_bern_sum}
  Let $b_i \sim \text{bern}(p)$ and $Z_i \sim \mathcal{N}(0,1)$. Let $X_i = b_i \ind{Z_i \geq k}Z_i$ and $\bar{X} = \frac{1}{n} \sum^n_{i=1} X_i$. Then:
  \begin{align*}
    \pr\Big(|\bar{X}-p\varphi(k)| \leq \sqrt{\frac{2}{n}} \log n \Big) \geq 1-\frac{4}{n}
  \end{align*}
\end{lemma}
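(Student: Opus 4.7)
The first step is to verify the target mean: by independence of $b_i$ and $Z_i$ together with Lemma~\ref{lemma:norm_pdf}, $\E{X_i} = \E{b_i}\cdot\E{\ind{Z_i \geq k}Z_i} = p\varphi(k)$. The core obstacle, in contrast to Lemma~\ref{lemma:sum_cdf}, is that $X_i$ is unbounded because $Z_i$ is Gaussian, so Hoeffding's inequality cannot be applied directly. My plan is to truncate $X_i$ at scale $O(\sqrt{\log n})$, apply Bernstein's inequality to the truncated variables, and separately control the mean shift induced by truncation.

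Truncation. Set $T := 2\sqrt{\log n}$. Applying Lemma~\ref{lemma:norm_max} to $Z_1,\ldots,Z_n$ with $\sigma=1$ and $\delta=2/n$, the event $A := \{\max_{i \leq n} |Z_i| \leq T\}$ satisfies $\pr(A) \geq 1 - 2/n$. Define $\tilde X_i := X_i \ind{|Z_i| \leq T}$, so that $\bar X = \bar{\tilde X}$ on $A$. A standard Gaussian tail bound gives
\[
\bigl|\E{\tilde X_i} - \E{X_i}\bigr| \;\leq\; p\!\int_T^\infty z\,\varphi(z)\,dz \;=\; p\,\varphi(T) \;\leq\; \frac{1}{n^2},
\]
so the truncation shifts the mean by at most $1/n^2$, which is negligible compared to the target $\sqrt{2/n}\log n$.

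Concentration. The truncated variables satisfy $|\tilde X_i|\leq T$ and $\mathrm{Var}(\tilde X_i) \leq \E{X_i^2} = p\,\E{\ind{Z_i\geq k}Z_i^2} \leq p\cdot\E{Z_i^2} \leq 1$. Applying Bernstein's inequality with range $M=2T$, variance bound $\sigma^2\leq 1$, and $t := \sqrt{2/n}\log n - 1/n^2$, we obtain
\[
\pr\!\left(|\bar{\tilde X} - \E{\tilde X_1}| \geq t\right) \;\leq\; 2\exp\!\left(-\frac{nt^2/2}{\sigma^2 + Mt/3}\right).
\]
Since $Mt = O(\log^{3/2} n/\sqrt{n}) = o(1)$ for large $n$, the denominator is at most $2$, so the exponent is at least $nt^2/4 \geq \log^2 n/2 \geq \log n$, yielding a failure probability of at most $2/n$.

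Combining. A union bound over the failure of $A$ and of Bernstein gives a total failure probability of at most $4/n$. On the complementary event, the triangle inequality yields
\[
|\bar X - p\varphi(k)| \;\leq\; |\bar{\tilde X} - \E{\tilde X_1}| + |\E{\tilde X_1} - p\varphi(k)| \;\leq\; t + \frac{1}{n^2} \;\leq\; \sqrt{\tfrac{2}{n}}\log n,
\]
as claimed. The main conceptual hurdle is that Hoeffding alone would lose a factor of $\sqrt{\log n}$ because its bound scales with $T^2 = 4\log n$, whereas the variance of $X_i$ is only $O(1)$; Bernstein exploits precisely this gap to match the $\sqrt{2/n}\log n$ rate within the $4/n$ probability budget, and the resulting statement feeds directly into the closed-form noise-coordinate gradient of Lemma~\ref{lemma:grad_noise}.
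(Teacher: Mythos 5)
Your proof is correct and takes a genuinely different route. The paper argues by conditioning: it introduces the high-probability boundedness event $A = \{\max_{i\leq n}|X_i| \le \sqrt{4\log n}\}$ via Lemma~\ref{lemma:norm_max}, applies Hoeffding's inequality to the $X_i$ conditionally on $A$ with threshold $t^* = \sqrt{2/n}\log n$, and then combines via $\pr(|\bar{X} - p\varphi(k)| \le t^*) \ge \pr(|\bar{X} - p\varphi(k)| \le t^* \mid A)\pr(A) \ge (1-2/n)^2 \ge 1-4/n$. You instead truncate the summands, explicitly bound the resulting shift of the mean away from $p\varphi(k)$ --- a step the paper leaves implicit, since conditioning on $A$ changes the expectation of $\bar{X}$ --- and then apply Bernstein's inequality, exploiting the $O(1)$ variance of $X_i$. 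Both proofs share the opening moves (identifying $\E{X_i}=p\varphi(k)$ via Lemma~\ref{lemma:norm_pdf} and the $\sqrt{\log n}$ truncation scale via Lemma~\ref{lemma:norm_max}), but diverge at the concentration step, and the divergence matters: Hoeffding's exponent scales like $nt^{*2}/(\textrm{range})^2$, and when $k<0$ the range of $X_i$ under $A$ is the two-sided $2T=4\sqrt{\log n}$, in which case the conditional-Hoeffding step would only yield a failure probability on the order of $n^{-1/4}$; Bernstein's exponent scales like $nt^{*2}/(\sigma^2+Mt^*/3)\approx nt^{*2}$ since the variance is $O(1)$ and $Mt^* = o(1)$, so you recover the $2/n$ tail regardless of the sign of $k$. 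Your argument is both cleaner on the mean-shift bookkeeping and more robust at the concentration step, while reaching the same conclusion.
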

\begin{proof}
  Since $|X_i| = |b_i \ind{Z_i \geq k}Z_i| \leq |Z_i|$, we have $\max_{i \in [n]} |X_i| \leq \sqrt{4 \log (n)}$ w.p. at least $1-\frac{2}{n}$ via lemma \ref{lemma:norm_max}. From lemma \ref{lemma:norm_pdf}, we get $\E{X_i} = \E{b_i}\E{\ind{Z_i \geq k}Z_i} = p\varphi(k)$.
  Let $A = \ind{|X_i| \leq \sqrt{4 \log (n)} \, \forall i \in [n]}$. Given $A$, we can use Hoeffding's inequality with $t^* = \sqrt{\frac{2}{n}} \log n$ (and $\delta=\sfrac{2}{n}$) to get the desired result, as follows:
  \begin{align*}
    \pr ( |\bar{X}-p\varphi(k)| \leq t^*) &\geq
    \pr ( |\bar{X}-p\varphi(k)| \leq t^* \, | A) \pr(A)
    \geq (1-\frac{2}{n})^2 \geq 1-\frac{4}{n}
  \end{align*}
Therefore, $\bar{X} = p\varphi(k) \pm \sqrt{\frac{2}{n}}\log n$ w.p. at least $1-\frac{4}{n}$.
\end{proof}

\begin{lemma}
\label{lemma:maxima}
Let $g: \mathds{R}\textbackslash \{0\}\to \mathds{R}$ be defined as $g_z(x) = \frac{1}{x}\exp(\minus \frac{z^2}{2x^2})$. Then, (1) $|z|$ and $\minus |z|$ are the global maximizer and minimizer respectively, and (2) $g$ monotonically increases from $\minus |z|$ to $|z|$.
\end{lemma}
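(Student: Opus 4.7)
The plan is to analyze $g_z$ via its first derivative and the sign structure of its critical equation. Computing directly, for $x \neq 0$,
\begin{equation*}
g_z'(x) \;=\; -\frac{1}{x^{2}}\exp\!\left(-\frac{z^{2}}{2x^{2}}\right) + \frac{1}{x}\cdot\frac{z^{2}}{x^{3}}\exp\!\left(-\frac{z^{2}}{2x^{2}}\right) \;=\; \frac{z^{2}-x^{2}}{x^{4}}\,\exp\!\left(-\frac{z^{2}}{2x^{2}}\right).
\end{equation*}
Since $\exp(-z^{2}/(2x^{2}))>0$ and $x^{4}>0$, the sign of $g_z'(x)$ equals the sign of $z^{2}-x^{2}$. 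Thus $g_z'(x)>0$ exactly when $|x|<|z|$ and $g_z'(x)<0$ exactly when $|x|>|z|$, and the only critical points are $x=\pm|z|$.

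Next I will combine this sign information with the limiting behavior of $g_z$ at the singularities and at infinity. A quick substitution $y=1/x$ shows $\lim_{x\to 0^{+}} g_z(x)=\lim_{y\to\infty} y\exp(-z^{2}y^{2}/2)=0$, and symmetrically $\lim_{x\to 0^{-}} g_z(x)=0$; also $\lim_{x\to\pm\infty} g_z(x)=0$. On $(0,\infty)$ the function is positive, increases on $(0,|z|)$, decreases on $(|z|,\infty)$, and vanishes at both endpoints, so $x=|z|$ is the unique maximum on this half-line with value $g_z(|z|)=\tfrac{1}{|z|}e^{-1/2}$. By the analogous analysis on $(-\infty,0)$, where $g_z<0$, $x=-|z|$ is the unique minimum with $g_z(-|z|)=-\tfrac{1}{|z|}e^{-1/2}$. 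Since the positive half dominates the negative half globally, these are the global maximizer and minimizer, establishing claim~(1).

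For claim~(2), note that on each of the intervals $(-|z|,0)$ and $(0,|z|)$ the derivative is strictly positive by the sign computation, so $g_z$ is strictly increasing on each. To glue these together across the removable gap at $x=0$, I use the boundary values: $g_z(-|z|)=-\tfrac{1}{|z|}e^{-1/2}<0$, $\lim_{x\to 0^{-}} g_z(x)=0=\lim_{x\to 0^{+}} g_z(x)$, and $g_z(|z|)=\tfrac{1}{|z|}e^{-1/2}>0$. Hence for any $-|z|\le a<b\le |z|$ with $a,b\neq 0$, either both lie in the same sub-interval (and strict monotonicity follows from the derivative) or $a<0<b$ (in which case $g_z(a)<0<g_z(b)$), yielding the claimed monotonic increase from $-|z|$ to $|z|$.

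The only subtle point is the handling of the singularity at $x=0$, since strictly speaking $g_z$ is undefined there; the argument above dodges this by combining the per-interval monotonicity with the one-sided limits, which is the natural reading of the statement. I do not anticipate a serious obstacle: the whole lemma is a single-variable calculus exercise, and the derivative factorization $(z^{2}-x^{2})/x^{4}$ makes both claims immediate.
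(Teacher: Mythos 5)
Your proof is correct and follows essentially the same route as the paper's: compute $g_z'(x) = \frac{z^2 - x^2}{x^4}\exp\!\left(-\frac{z^2}{2x^2}\right)$ and read off the sign structure from $z^2 - x^2$. Your additional care with the one-sided limits at $x=0$ and at infinity makes the global-extremum and gluing claims more rigorous than the paper's terse version, but it is the same argument.
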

\begin{proof}
Note that $g'_z(x) = \frac{1}{x^2}\exp(\minus \frac{z^2}{2x^2})(\frac{z^2}{x^2}-1)$. Therefore, the critical points of $g$ are $|z|$ and $\minus |z|$. Let $S=\{t: |t| \geq |z|, t \in \mathds{R}/\{0\}\}$. Note that $g'_z(x) < 0$ for all $x \in S$ and $g'_z(x) > 0$ for all $x \in S^c$. Therefore, (1) and (2) hold.
\end{proof}

\begin{fact}
\label{lemma:norm_l2}
  Let $X \sim \mathcal{N}(0, \sigma^2 I_d)$ denote a $d$-dimensional gaussian
  vector. Then, from \cite{wainwright2019high}, w.p. greater than $1-\delta$:
  $$||X||_2 \leq 4 \sigma \sqrt{d} + 2\sigma$$
\end{fact}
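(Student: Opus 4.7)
The plan is to reduce to the unit-variance case by setting $Y = X/\sigma \sim \mathcal{N}(0, I_d)$, so that any bound on $\norm{X}_2$ follows by multiplying a bound on $\norm{Y}_2$ by $\sigma$. On $Y$, I would apply the Gaussian concentration inequality for Lipschitz functions to $f(y) = \norm{y}_2$, which is $1$-Lipschitz with respect to the Euclidean norm by the reverse triangle inequality.

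First I would control the expectation: by Jensen's inequality, $\E{\norm{Y}_2} \le \sqrt{\E{\norm{Y}_2^2}} = \sqrt{d}$, since $\norm{Y}_2^2$ is chi-squared with $d$ degrees of freedom and mean exactly $d$. Then the Gaussian Lipschitz concentration inequality (see Theorem 2.26 of Wainwright) gives, for any $t > 0$,
$$\pr\bigl(\norm{Y}_2 \ge \E{\norm{Y}_2} + t\bigr) \le \exp(-t^2/2).$$
Combining the two bounds and choosing $t = \sqrt{2\log(1/\delta)}$ yields $\norm{Y}_2 \le \sqrt{d} + \sqrt{2\log(1/\delta)}$ with probability at least $1-\delta$. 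Rescaling, $\norm{X}_2 \le \sigma\sqrt{d} + \sigma\sqrt{2\log(1/\delta)}$ with probability at least $1-\delta$.

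The main obstacle is matching the precise constants $4\sigma\sqrt{d} + 2\sigma$ in the stated fact, whose right-hand side (as written) carries no explicit $\delta$-dependence. Inspecting the only invocation of this fact, in the proof of Lemma~\ref{lemma:grad_noise}, shows that the two summands are instantiated as $4\abs{v_i}\sqrt{d/n}$ and $2\abs{v_i}\sqrt{(\log n)/n}$: the statement is really a packaging of the clean concentration inequality above, with the $\sqrt{2\log(1/\delta)}$ term absorbed into the second summand at the intended scale $\delta = 1/n$. So, after establishing $\norm{X}_2 \le \sigma\sqrt{d} + \sigma\sqrt{2\log(1/\delta)}$, it remains only to slacken constants (using $\sqrt{d} \le 4\sqrt{d}$ and bounding $\sqrt{2\log(1/\delta)} \le 2\sqrt{\log(1/\delta)}$) to recover the form used downstream. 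An alternative route is the Laurent--Massart chi-squared bound $\pr(\norm{Y}_2^2 \ge d + 2\sqrt{dt} + 2t) \le e^{-t}$, from which $\norm{Y}_2 \le \sqrt{d} + \sqrt{2t}$ follows by the sub-additivity of the square root; this yields the same conclusion up to constants with no new ideas.
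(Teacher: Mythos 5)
Your proof is correct and is exactly the standard argument behind the result the paper simply cites from Wainwright without proof: reduce to $Y=X/\sigma$, bound $\E{\norm{Y}_2}\le\sqrt{d}$ by Jensen, and apply Gaussian concentration for the $1$-Lipschitz function $y\mapsto\norm{y}_2$ (or equivalently the Laurent--Massart $\chi^2$ bound). You are also right that the statement as printed is defective --- the right-hand side carries no $\delta$-dependence --- and your reading of the downstream invocation in Lemma~\ref{lemma:grad_noise} (where the second summand appears as $2\sigma\sqrt{\log n}$ with $\delta=1/n$) correctly recovers the intended form $4\sigma\sqrt{d}+2\sigma\sqrt{\log(1/\delta)}$, which your bound $\sigma\sqrt{d}+\sigma\sqrt{2\log(1/\delta)}$ dominates.
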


\end{document}